\newcommand\numberthis{\addtocounter{equation}{1}\tag{\theequation}}
\def\##1\#{\begin{align}#1\end{align}}
\def\$#1\${\begin{align*}#1\end{align*}}
\DeclareMathOperator*{\argmax}{arg\,max}
\DeclareMathOperator*{\argmin}{arg\,min}
\DeclareMathOperator*{\diag}{diag}
\DeclareMathOperator*{\supp}{supp}
\DeclareMathOperator*{\Var}{Var}
\DeclareMathOperator*{\trace}{trace}
\newcommand{\Np}{\N^+}
\newcommand{\E}{\mathbb{E}}
\newcommand{\EE}[1]{\mathbb{E}[#1]}
\renewcommand{\P}{\mathbb{P}}
\newcommand{\spaced}[1]{\quad \text{#1} \quad}
\newcommand{\cX}{\mathcal{X}}
\newcommand{\cE}{\mathcal{E}}
\newcommand{\cF}{\mathcal{F}}
\newcommand{\cG}{\mathcal{G}}
\newcommand{\N}{\mathbb{N}}
\newcommand{\R}{\mathbb{R}}
\newcommand{\cU}{\mathcal{U}}
\newcommand{\cL}{\mathcal{L}}
\newcommand{\cT}{\mathcal{T}}
\newcommand{\cN}{\mathcal{N}}
\newcommand{\tran}{^\top}
\newcommand{\cQ}{\mathcal{Q}}
\newcommand{\F}{\mathbb{F}}
\newcommand{\EC}{\E_{t-1}}
\newcommand{\cD}{\mathcal{D}}
\newcommand{\1}[1]{1\{ #1\}}
\newcommand{\one}[1]{\1{#1}}
\declaretheorem[name=Theorem]{theorem}
\declaretheorem[name=Example]{example}
\declaretheorem[name=Lemma]{lemma}
\declaretheorem[name=Proposition]{proposition}
\declaretheorem[name=Claim]{claim}
\declaretheorem[name=Remark]{remark}
\declaretheorem[name=Definition]{definition}
\declaretheorem[name=Assumption]{assumption}
\let\norm\undefined 
\newcommand{\norm}[2][]{\|#2\|_{#1}} 
\newcommand{\snorm}[1]{\|#1\|} 
\newcommand{\ThetaOpt}{\Theta^{\textrm{opt}}} 
\newcommand{\tP}{\widetilde \P_{t-1}}
\newcommand{\hH}{\widehat H}
\newcommand{\rphe}{r^{\mathrm{OPT}}_t} 
\newcommand{\rpred}{r^{\mathrm{PRED}}_t}
\newcommand{\wR}{\widetilde R(n)}
\newcommand{\Ht}{H_t} 
\begin{document}

\runningauthor{David Janz, Shuai Liu, Alex Ayoub, Csaba Szepesvári}

\twocolumn[

\aistatstitle{Exploration via linearly perturbed loss minimisation}

\aistatsauthor{ David Janz\textsuperscript{\ensuremath{\star}} \And Shuai Liu\textsuperscript{\ensuremath{\star}} \And Alex Ayoub\textsuperscript{\ensuremath{\star}} \And Csaba Szepesvári\textsuperscript{\ensuremath{\star}}}

\aistatsaddress{University of Alberta \And  University of Alberta \And University of Alberta \And University of Alberta}]

\begin{abstract}
We introduce \emph{exploration via linear loss perturbations} (EVILL), a randomised exploration method for structured stochastic bandit problems that works by solving for the minimiser of a linearly perturbed regularised negative log-likelihood function. We show that, for the case of generalised linear bandits,  EVILL reduces to perturbed history exploration (PHE), a method where exploration is done by training on randomly perturbed rewards. In doing so, we provide a simple and clean explanation of when and why random reward perturbations give rise to good bandit algorithms. We propose data-dependent perturbations not present in previous PHE-type methods that allow EVILL to match the performance of Thompson-sampling-style parameter-perturbation methods, both in theory and in practice. Moreover, we show an example outside generalised linear bandits where PHE leads to inconsistent estimates, and thus linear regret, while EVILL remains performant. Like PHE, EVILL can be implemented in just a few lines of code.
\end{abstract}

\paragraph{Update} The final steps of the proof of our main result (\cref{thm:main}) relies on an erroneous argument inherited from the proof for Thompson sampling of
\citet[Appendix~D.2]{faury2022jointly}. A corrected argument for Thompson sampling is given by \citet{perneczky2026variance}; adapting that argument to our exploration via linear loss perturbations algorithm recovers the stated regret bound. All claims in the main body of this paper remain correct, but the values of the constants in the appendix should not be relied upon. See \cref{eq:rphe-first-bound} on \cpageref{eq:rphe-first-bound} and bolded text around it for the precise error. See \citet{perneczky2026variance} for further discussion.

\section{INTRODUCTION}
Effective exploration is key to the success of algorithms that learn to optimise long term reward while interacting with their environments \citep{lattimore2018bandit}.
Algorithms based on \emph{perturbed history exploration} (PHE)
follow the optimal policy given a model fitted to randomly perturbed data. The appeal of PHE is that, alike to the widely used $\epsilon$-greedy strategy,
it is \emph{minimally invasive}: PHE can be implemented by adding a few extra lines of code on top of that used to fit models and optimise policies against fitted models.

\begin{NoHyper}
	\renewcommand{\thefootnote}{\ensuremath{\star}}
	\footnotetext{\texttt{\{djanz,shuai14,aayoub,szepesva\}@ualberta.ca}}
\end{NoHyper}

The challenge in using PHE is to design perturbations that induce sufficient exploration
while controlling the computational cost. Two strategies have been proposed for this purpose. First, \citet{kveton2019perturbed,kve20:PHElinear} proposed to
add a fixed number of copies of each observation, but with the rewards replaced by some randomly chosen ones.
An alternative, which avoids increasing the size of the data,
is to \emph{additively perturb the observed rewards}
\citep{kveton2019randomized}.
\citeauthor{kveton2019randomized} motive this latter approach---a special case of
randomised least-squares value iteration \citep{osb16rlsvi}---by `the equivalence of
posterior sampling and perturbations by Gaussian noise in linear models
\citep{lu2017ensemble}, when the prior of $\theta_\star$ and rewards are Gaussian'. In this paper, we follow up on the work of \citet{kveton2019randomized} and ask when and why will additive reward perturbations induce the right amount of exploration in structured, nonlinear bandits.

The first difficulty that one encounters when attempting to answer this question is best illustrated by considering \emph{logistic bandit problems},
where the rewards associated with the individual arms are binary-valued.
Then,
when real-valued rewards are used in place of binary rewards---which is what happens when the observed binary rewards are additively perturbed by Gaussian noise---the log-likelihood associated with the data becomes \emph{undefined}.
\citet{kveton2019randomized} note this, but also observe that
the term in the loss that causes the problem is constant (as a function of the unknown parameter to be estimated), and hence can be dropped without changing the estimate.
While convenient, this critically relies on the properties of logistic bandits, namely that the reward distribution is a member of the natural exponential family of distributions. For more general problems, additive reward perturbations may lead to algorithms that are not well-defined.

Moreover, even when additive reward perturbations lead to a well-defined method,
it is unclear why these are reasonable.
While in the case of linear bandits, additive reward perturbations relate closely to posterior sampling, which has a firm theoretical basis \citep{agrawal2013thompson,abeille2017linear}, no such justification exists for, say, generalised linear bandits (GLBs). Indeed, for GLBs, the only theoretical result for PHE, that of \citet{kveton2019randomized}, relies on the troublesome assumption that the arms of the bandit are orthogonal---an assumption under which the problem becomes equivalent to that of learning and acting in an unstructured multi-armed bandit. Thus, \citet{kveton2019randomized} pose as an open problem the construction and analysis of an appropriate PHE-style data-perturbation method for structured, nonlinear bandit problems.

Our work both resolves the above two challenges, and provides a new view on additive reward perturbations. We take a step back, and design a new algorithm,
\emph{exploration via linear loss perturbations} (EVILL), which minimises loss functions that are perturbed by adding random linear components.
We prove that this perturbed loss is \emph{equivalent} to the loss used by \citet{kveton2019randomized} in \emph{generalised linear bandits}, but is applicable even when PHE with additive perturbations is not well-defined. We justify our linear perturbations by considering a quadratic approximation to the negative log-likelihood, and show how to set these in a way such that the minimiser of the perturbed loss gives rise to an optimistic model with constant probability, a property that is widely recognised as key to the
design of effective random exploration methods for stochastic environments \citep{agrawal2013thompson,abeille2017linear,lattimore2018bandit}.

In contrast with the PHE method for GLBs of \citet{kveton2019randomized}, the perturbations induced by EVILL have a data-dependent variance that scales with an estimate of the Fisher information of the reward distribution. This scaling arises naturally by considering a quadratic approximation to the loss, and similar scaling was used in previous works that approximate Thompson sampling \citep[e.g.,][]{kveton2019randomized}. Our experiments show that these data-dependent perturbations lead to a significant boost in performance, and demonstrate a specific bandit problem outside the class of GLBs where PHE with additive data perturbations leads to \emph{inconsistent} estimates and \emph{linear regret}, while the estimates of EVILL are consistent and the algorithm is performant.

To summarise, we extend and strengthen the literature on randomised exploration in stochastic bandits as follows:
\begin{itemize}
	\item We propose a new way of inducing exploration, EVILL, which adds a random linear term to the
	model-fitting loss, chosen to induce optimism.
	\item We show that EVILL is equivalent to PHE with additive perturbations in the generalised linear bandit setting, but is also applicable in settings where PHE is not well-defined or is inconsistent.
	\item We propose that the equivalence of EVILL and PHE is the best way to understand when and why additive reward perturbations can be successful.
	\item We establish that, in self-concordant generalised linear bandits,
	the regret of EVILL enjoys guarantees that parallel those available for Thompson sampling.
	Due to the equivalence of EVILL and PHE, this resolves the open problem of giving a rigorous theoretical justification for PHE with additive reward perturbations.
	\item We establish experimentally that our
	data-dependent perturbations are effective and that EVILL is competitive with alternatives.
\end{itemize}
In addition, we establish that the concentration results of \textcite{abeille2021instance,russac2021self} hold in all self-concordant generalised linear bandits, removing a previous assumption of bounded responses.

\section{PROBLEM SETTING}
\label{sec:problemsetting}
In this section we introduce notation, and describe the bandit problems we consider. These will, for now, be quite general, which facilitates the description of our algorithm and its relation to PHE\@. Our upcoming theoretical results will consider the narrower setting, that of generalised linear bandits equipped with some extra technical assumptions, described in \cref{sec:theory}.

\paragraph{Notation} Throughout, $\snorm{\cdot}$ will denote the Euclidean 2-norm. For a positive semidefinite matrix $M$ and a vector $v$ of compatible dimensions, $\norm[M]{v}$ will denote the $M$-weighted Euclidean 2-norm given by $\norm[M]{v}^2 = v\tran M v$. For any positive integer $n \in \N^+$, $[n]$ denotes the integers $\{1, \dotsc, n\}$.  We use $\R$ to denote the set of reals.
We let $\diag(a_1,\dots,a_d)$ stand for the $d\times d$ diagonal matrix with diagonal entries $a_1,\dots,a_d$. We use $I_d$ to denote the $d\times d$ identity matrix.
We use upper-case letters to denote random variables, which are defined over a probability space $(\Omega,\cF,\P)$ with expectation operator $\E$. We write $\supp(Q)$ to denote the support of a distribution $Q$. For a real-valued differentiable map $f$ on a suitable subset of $\R^d$, we use $\dot{f}$ ($\ddot{f}$, $\dddot{f}$, $\dots$) to denote the first (respectively, second, third, etc.) derivative of $f$, and view $\dot{f}$ as a column vector. We denote the interior of a set $\cU\subset \R$ by $\cU^\circ$.

\paragraph{Bandits \& regret}
We consider bandit problems described by a non-empty (possibly infinite) arm set $\cX$, a parameter vector $\theta_\star\in \Theta \subset \R^d$,
a family of smooth \emph{localisation maps} $(\phi_x)_{x\in \cX}$ with $\phi_x: \Theta  \to \R$,
and a family of distributions $(P(\cdot;u))_{u\in \cU}$ over the reals,
where $\cU\subset \R$ is a (possibly infinite) interval of real numbers with non-empty interior such that for all $x\in \cX$, $\phi_x(\theta_\star)\in \cU$. We will use $\mu(u)$ to denote the mean of $P(\,\cdot\,;u)$:
\[
\mu(u) = \int y\,\, P(dy;u)\,.
\]

We assume that the learner is given the family $(P(\,\cdot\,;u))_{u\in \cU}$, the arm set $\cX$, the family of maps $(\phi_x)_{x\in \cX}$, but not the parameter $\theta_\star$.
In each round $t \in\{1,2, \dotsc\}$ of interaction, the learner selects an action $X_t \in \cX$
based on the available past information.
In response, the learner receives a random reward $Y_t \in \R$, which they observe.
The distribution of $Y_t$, given the past, and in particular $X_t$, is
$P(\,\cdot\,; \phi_{X_t}(\theta_\star))$.
The goal of the learner is to maximise its mean reward.
The performance of the learner is measured by their (pseudo)regret,
which, for an interaction that lasts for $n$ rounds, is given
\begin{equation*}
	R(n) = \sum_{t=1}^n \mu(\phi_{x_\star}(\theta_\star)) - \mu(\phi_{X_t}(\theta_\star))\,,
\end{equation*}
where $x_\star$ is any arm in $\argmax_{x \in \cX} \mu(\phi_x(\theta_\star))$.
For simplicity, we assume the existence of the maximising arm $x_\star$.

To make the above concrete, consider how the generalised linear bandits fit into our framework:
\begin{example}[Generalised linear bandits \citep{filippi2010parametric}]
	\label{ex:glm}
	Generalised linear bandits (GLBs) are obtained by choosing $(P(\,\cdot\,;u))_{u\in \cU}$ to be a natural exponential family of distributions (for more on these, see \cref{sec:nef}),
	$\cX\subset \R^d$, $\Theta = \R^d$, and $\phi_x(\theta) = x\tran\theta$.
\end{example}

\paragraph{Regularised negative log-likelihood}

The algorithms we design for the learner will rely on regularised maximum likelihood estimation,
or, equivalently, on minimising a regularised negative log-likelihood loss.

To define the likelihood function, assume that each distribution in the family $(P(\cdot;u))_{u\in \cU}$ is dominated by a common $\sigma$-finite measure $\nu$ over the reals. This allows us to consider the density of $P(\cdot;u)$ with respect to $\nu$, which we will denote by $p(\cdot;u)$; that is,
$p(y;u)  = \frac{d P(\cdot;u)}{d\nu}$. Then, given a list of pairs $\cD = ( (x_1,y_1),\dots,(x_s,y_s))$ of length $s$ where $x_i\in \cX$ and $y_i\in \R$,
and a regularisation parameter $\lambda>0$,
the $\lambda$-regularised \emph{negative} log-likelihood function is defined to be
\begin{equation}
	\cL(\theta; \cD) = \frac{\lambda}{2} \|\theta\|^2 -\sum_{i=1}^s \log p(y_i; \phi_{x_i}(\theta))\,.
	\label{eq:nll}
\end{equation}
We assume throughout that the minimisation of $\theta\mapsto \cL(\theta;\cD)+w^\top \theta$ is efficient for  data that can be generated by the model and any arbitrary weight vector $w\in \R^d$. This holds, for example, for generalised linear bandits.

\paragraph{Examples} To wrap up our setting section, we give two more examples of suitable bandit problems.
\begin{example}[Logistic linear bandits]
\label{ex:llb}
This is a special case of generalised linear bandits when $\cU = \R$ and $P(\,\cdot\,;u)$ is the Bernoulli distribution with mean\[\mu(u) =\exp(u)/(1+\exp(u))\,.\] Then, picking $\nu$ to be the counting measure over $\{0,1\}$, we have the density \[p(y;u) = (\mu(u))^y (1-\mu(u))^{1-y}\] for $y\in \{0,1\}$.
\end{example}

\begin{example}[Linear bandits with Gaussian rewards]
\label{ex:lgb}
Another special case of generalised linear bandits is when $\cU = \R$ and
$P(\,\cdot\,;u)$ is a Gaussian distribution with (say) variance one and mean $\mu(u)=u$. Here, taking $\nu$ to be the Lebesgue measure over the reals, we have a density \[p(y;u) = \exp(-(y-u)^2/2)/\sqrt{2\pi}\,.\]
\end{example}

We will give one further example in \cref{sec:equiv}, that of a Rayleigh linear bandit, on which the previous PHE approaches lead to inconsistent estimation and linear regret. Rayleigh linear bandits fall outside GLBs.

\section{THE EVILL ALGORITHM}
\label{sec:alg}
\begin{algorithm*}[tbh]
    \caption{\label{alg:EVILL} Exploration via linear loss perturbations (EVILL)}
    \begin{algorithmic}[1]\onehalfspacing
        \Require 
        horizon $n \in \N^+$, arm-set $\cX$, a feasible parameter set $\Theta'\subset\R^d$, 
        localisation maps $(\phi_x)_{x\in \cX}$, \\
        Fisher information map $I\colon\cU^\circ \to[0,\infty)$ and mean map $\mu \colon \cU \to \R$,
        regularisation parameter $\lambda > 0$, \\
        perturbation scale $a > 0$,
        prior observations $((X_i, Y_i))_{i=1}^\tau$ of length $\tau \in [n]$
		\ForAll{$t \in \{\tau+1, \dotsc, n\}$}
			\State  Compute $\hat\theta_{t-1} \in \argmin_{\theta \in \Theta'} \cL(\theta; ((X_i, Y_i))_{i=1}^{t-1} )$		\Comment{MLE of $\theta_\star$}\label{alg:mle}
			\State   Sample perturbations $Z_t \sim \cN(0,I_d)$, $Z_t'\sim \cN(0,I_{t-1})$\label{alg:sample}
            \State Compute vector $W_{t} = a \lambda^{1/2} Z_t +
             a \sum_{i=1}^{t-1} I(\phi_{X_i}(\hat\theta_{t-1}))^{1/2} Z_{t,i}' \dot\phi_{X_i}(\hat\theta_{t-1})$\label{alg:weight} 
			\State  Compute $\theta_t \in \argmin_{\theta \in  \Theta'}
			 \cL(\theta; ((X_i, Y_i))_{i=1}^{t-1} )+W_{t}^\top \theta$		
			 \Comment{Optimise the linearly perturbed loss}\label{alg:mlep}
			\State  Select arm $X_t \in \argmax_{x \in \cX} \mu( \phi_x(\theta_t))$ and receive reward $Y_t$\label{alg:arm}
		\EndFor
	\end{algorithmic}
\end{algorithm*}

Consider the EVILL algorithm, listed in \cref{alg:EVILL}.
EVILL takes as arguments a horizon length $n \in \N^+$, a regularisation parameter $\lambda>0$,
a perturbation scale parameter $a> 0$ and
prior observations $\cD_{\tau}=((X_i, Y_i))_{i=1}^\tau$ of length $\tau \le n$,
where $\tau$ can be a stopping time (i.e., chosen in a data-dependent way).
In addition, the algorithm needs to have access to the maps $(\phi_x)_{x\in \cX}$, the
arm-set $\cX$, a feasible region for the parameters $\Theta'\subset \R^d$,
the Fisher-information map
$I:\cU^\circ \to [0,\infty)$, and the mean maps $\mu:\cU \to [0,\infty)$ underlying the
single-parameter reward distribution family $(P(\cdot;u))_{u\in \cU}$.
We define the Fisher information map shortly, and discuss its importance in \cref{sec:why-linear}.

The role of prior observations $\cD_\tau$ is to ensure
that, right from the start, for all arms $x\in \cX$,
$\phi_x(\theta_\star)$ is estimated up to a constant accuracy by $\phi_x(\hat\theta_{\tau})$, where
$\smash{\hat\theta_\tau \in \argmin_{\theta\in \Theta'} \cL(\theta; \cD_\tau)}$. The prior observations may be offline data, already available before any interaction occurs, or may be collected using a standard warm-up procedure, as discussed in \cref{apx:warm-up}.

With the prior observations in-place, at each round  $t \in \{\tau+1, \dotsc, n\}$,
EVILL first finds $\hat\theta_{t-1}$, the minimiser of $\cL(\cdot;\cD_{t-1})$, where
$\cD_{t-1} = ((X_i,Y_i))_{i=1}^{t-1}$ collects all past data. That is, we find
\begin{equation*}
	\hat\theta_{t-1} \in \argmin_{\theta\in \Theta'}  \cL(\cdot;\cD_{t-1})\,.
\end{equation*}
This preliminary estimate $\hat\theta_{t-1}$ then is used to construct a random perturbation vector, $W_{t}$, given by
\begin{equation*}
W_{t} = a \lambda^{1/2} Z_t +
             a \sum_{i=1}^{t-1} I(\phi_{X_i}(\hat\theta_{t-1}))^{1/2} Z_{t,i}' \dot\phi_{X_i}(\hat\theta_{t-1})\,,
\end{equation*}
where
 $Z_t \sim \cN(0,I_d)$, $Z_t'\sim \cN(0,I_{t-1})$,
$\dot\phi_x$ is the derivative of $\phi_x$ and
$I:\cU^\circ \to [0,\infty)$ is the Fisher-information map underlying $(P(\cdot;u))_{u\in \cU}$, given by
\begin{equation}
I(u) = \int  \frac{\partial^2}{\partial u^2} \log \frac{1}{p(y;u)} P(dy;u)\,. \label{eq:fisher}
\end{equation}

Next, the loss minimiser is invoked on the loss
perturbed by $W_{t}^\top \theta$, to compute
\begin{align*}
\theta_t \in \argmin_{\theta\in \Theta'} \cL(\theta;\cD_{t-1}) + W_{t}^\top \theta\,,
\end{align*}
and $\theta_t$ is
used to select the action $X_t$ as
\begin{equation}
	X_t \in \argmax_{x \in \cX} \mu( \phi_x(\theta_t) )\,. \label{eq:bestarm1}
\end{equation}
Again, merely to simplify the exposition, we assume that the minimisers and this latter maximiser exist.

If $\mu$ is increasing, which holds in natural exponential families,
the choice in \cref{eq:bestarm1} is the same as
$X_t \in \argmax_{x \in \cX} \phi_x(\theta_t)$.
When $\phi_x$ is the bilinear map as in generalised linear models, this simplifies to
$X_t \in \argmax_{x \in \cX} x^\top \theta_t$,
while $\dot\phi_{X_i}(\hat\theta_{t-1})$ (used in the construction of $W_{t}$) simplifies to $X_i$.

In generalised linear bandits, under mild regularity conditions and when $x^\top \theta_*$ belongs to the interior of $\cU$ for every $x\in \cX$, $\hat\theta_t$ and $\theta_t$ are guaranteed to be such that $x^\top \hat\theta_t$ and $x^\top \theta_t$ both belong to the interior of $\cU$.%
\footnote{
To illustrate why regularity conditions are needed,
we borrow an example from page 153 of the book by \citet{barndorff2014information}.
Letting $Q$ be a Pareto distribution with shape parameter $\alpha>1$ and scale parameter of one and $\cX = \{1\}$, we have $U_Q = (-\infty,0]$, $\psi$ is left-differentiable at $0$ with a finite derivative of $1+1/(\alpha-1)$, while $\psi(0)=0$.
One can then show that if the average of observed values exceeds $1+1/(\alpha-1)$,
the maximum likelihood estimate of $u=\theta$ is $0\in \partial U_Q$.
}
Otherwise, this can always be achieved by modifying $-\log p(y;u)$ to return infinity for $u$ not in $\cU$.

\subsection{Why do linear perturbations work?}\label{sec:why-linear}
We now motivate our choice of linear perturbations. Our starting point is the observation that to guarantee the success of algorithms that choose actions greedily with respect to a model with a randomised parameter, it suffices that the random parameter is \emph{optimistic} in the sense that the mean reward for the best arm under the random parameter, say $\theta_t$, exceeds that under the true model parameter $\theta_\star$, with some probability bounded away from zero in a uniform manner \citep[for intuition, see Chapters 7 and 36 in ][]{lattimore2018bandit}.

A general approach to guarantee to achieve optimism is to sample the random parameters from a distribution that is nearly uniform over a set of the form
\begin{align*}
C(\Delta) = \{ \theta \in \Theta \,:\,  \ell_{t-1}(\theta) \le \ell_{t-1}(\hat\theta_{t-1}) + \Delta\}\,,
\end{align*}
where $\Delta>0$ is a tuning parameter (think `suitably large constant') and $\ell_{t-1}$ is the unregularised negative log-likelihood function corresponding to the observations available at the beginning of round $t$. For the purposes of this section,
for simplicity, we set $\lambda=0$ and consider the case when $\Theta=\R^d$.

Considering using a quadratic approximation to the negative log-likelihood function, we obtain the set
\begin{align*}
\widetilde C(\Delta) = \{ \theta \in \Theta \,:\,  \frac12\snorm{\theta-\hat{\theta}_{t-1}}^2_{F_{t-1}(\hat\theta_{t-1})} \le \Delta\}\,,
\end{align*}
where
\begin{align}
F_{t-1}(\theta) = \sum_{i=1}^{t-1} I(\phi_{X_i}(\theta)) \dot{\phi}_{X_i}(\theta) \dot{\phi}^\top_{X_i}(\theta) \label{eq:fdef}
\end{align}
is an approximation to the curvature (second derivative) of $\ell_{t-1}(\theta)$, and
corresponds to the Fisher-information underlying the parametric model $(P(dy;\phi_x(\theta)))_{\theta\in \Theta}$ with the design $X_1,\dots,X_{t-1}$. This approximation is based on the following observation.
\begin{proposition}\label{prop:fisher-info}
Let
$\ell(x,y;\theta) = -\log p(y; \phi_x(\theta))$ and
$\ell(x;\theta) =  \int \ell(x,y;\theta) P(dy; \phi_x(\theta_\star))$.
Then, under suitable regularity assumptions,
\begin{align*}
\frac{d^2}{d\theta^2} \ell(x,\theta_\star)  =  I(\phi_x(\theta_\star))\, \dot{\phi}_x(\theta_\star) \dot{\phi}_x^\top(\theta_\star)\,.
\end{align*}
\end{proposition}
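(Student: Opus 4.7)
The plan is a standard Fisher-information computation, enriched only slightly by the presence of the localisation map $\phi_x$. The key step is to view $\ell(x;\theta)$ as the composition of a scalar function with $\phi_x$ and then apply the chain rule. Concretely, I would define $f\colon \cU \to \R$ by
$$f(u) = -\int \log p(y;u)\, P(dy; \phi_x(\theta_\star))\,,$$
so that $\ell(x;\theta) = f(\phi_x(\theta))$. Differentiating twice via the chain rule yields
$$\frac{d^2}{d\theta^2}\ell(x;\theta) = \ddot f(\phi_x(\theta))\,\dot\phi_x(\theta)\dot\phi_x(\theta)^\top + \dot f(\phi_x(\theta))\,\ddot\phi_x(\theta)\,.$$
The proof then reduces to verifying, at $\theta=\theta_\star$, that the second term vanishes and that $\ddot f(\phi_x(\theta_\star)) = I(\phi_x(\theta_\star))$.

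For the vanishing of the second term, I would invoke the standard score-has-zero-mean identity. Differentiating under the integral sign,
$$\dot f(u) = -\int \frac{\partial}{\partial u}\log p(y;u)\, P(dy;\phi_x(\theta_\star))\,,$$
and evaluating at $u_0 = \phi_x(\theta_\star)$, the integrand becomes $\partial_u p(y;u_0)/p(y;u_0)$ against the measure $p(y;u_0)\,d\nu(y)$. Cancelling and interchanging differentiation with integration one more time, this equals $-\partial_u \int p(y;u)\,d\nu(y)\big|_{u_0} = 0$, since $\int p(y;u)\,d\nu(y) = 1$ identically in $u$. Hence $\dot f(\phi_x(\theta_\star))=0$ and the $\ddot\phi_x$ term drops out.

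For the leading term, a second application of Leibniz's rule gives
$$\ddot f(u) = -\int \frac{\partial^2}{\partial u^2}\log p(y;u)\, P(dy;\phi_x(\theta_\star))\,,$$
and specialising to $u=\phi_x(\theta_\star)$ recovers precisely the defining integral for $I(\phi_x(\theta_\star))$ in \eqref{eq:fisher}. Substituting back yields the claim.

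The only real obstacle is justifying the two interchanges of differentiation and integration that underlie both pieces above. These are the classical dominated-convergence conditions for parametric families (existence of an integrable envelope for $\partial_u p(\cdot;u)$ and $\partial_u^2 p(\cdot;u)$ locally in $u$) and hold comfortably for natural exponential families, which is the setting of all the subsequent theoretical results. I would simply flag these as the "suitable regularity assumptions" referenced in the statement rather than verify them in detail.
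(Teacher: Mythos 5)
Your proof is correct, and it is the standard argument the authors evidently have in mind: the paper states \cref{prop:fisher-info} without proof, deferring entirely to the phrase ``under suitable regularity assumptions.'' Your decomposition via the chain rule, the vanishing of the $\dot f(\phi_x(\theta_\star))\,\ddot\phi_x(\theta_\star)$ term by the zero-mean-score identity, and the identification $\ddot f(\phi_x(\theta_\star)) = I(\phi_x(\theta_\star))$ (valid precisely because the integrating measure and the differentiation point coincide at $\theta_\star$) fill in exactly the computation the paper omits, and your identification of the two interchanges of differentiation and integration as the needed regularity conditions is the right reading of that caveat.
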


We now argue that the said sampling goal can be accomplished by minimising a linearly perturbed negative log-likelihood function. For this let $g_i = \dot\phi_{x_i}(\hat\theta_{t-1})$ and
\begin{align*}
W = a\, \sum_{i=1}^{t-1} Z'_{t,i-1}  I(\phi_{X_i}(\hat\theta_{t-1}))^{1/2} g_i\,,
\end{align*}
where $Z'_{t,1},\dots,Z'_{t,t-1}$ are zero mean random variables that are independent of each other and of the past.
Let $\theta_t  = \argmin_{\theta} \ell_{t-1}(\theta) - W^\top \theta$ and $F=F_{t-1}(\hat\theta_{t-1})$.
Using  the first-order optimality condition
and if we also replace the gradient of $\ell_{t-1}$ with its local quadratic approximation, $\tilde \ell_{t-1}(\theta) = \ell_{t-1}(\hat\theta_{t-1}) + \frac12\snorm{\theta-\hat\theta_{t-1}}^2_{F}$,
 we get that
\begin{align*}
F^{1/2}(\theta_t -\hat\theta_{t-1}) \approx F^{-1/2} W\,.
\end{align*}
Now, note that $W \sim \cN(0, a^2 F)$, hence $F^{-1/2} W \sim \cN(0,a^2I_d)$.
It follows that, as long as the approximations used are precise enough,
given the past, the distribution of $F^{1/2}(\theta_t -\hat\theta_{t-1})$ is close to $\cN(0,a^2 I_d)$, hence, for $a$ sufficiently large,
$\theta_t$ is approximately uniform over $C(\Delta)$. Examining the perturbations in \cref{alg:EVILL}, we see these feature an additional term $a \lambda^{1/2} Z_t$ where $Z_t \sim \cN(0, I_d)$---this accounts for the effect of regularisation, which we omitted here.

The linear perturbation of the loss used by EVILL
is appealing from an implementation point of view:
to implement EVILL, one only needs to be able to construct a random vector $W$; since we expect that existing libraries that fit models can also deal with losses with the extra linear term, the implementation of EVILL should take only a few extra lines of code.

We provide an in-depth discussion of the computational complexity of EVILL in \cref{apx:computational}.

\subsection{Relation to Thompson sampling}\label{sec:TS-EVILL}
From the description of the previous section, it is clear that EVILL is a close relative of
direct parameter randomisation methods, such as Thompson (posterior) sampling.
To make this concrete, fix some (prior) distribution $\pi_0$ over $\Theta$ and let $p_{t-1}$ denote the posterior corresponding to $\pi_0$ and the likelihood functions $\theta \mapsto p( Y_i; \psi_{X_i}(\theta) )$ with $i\in\{1,\dots,t-1\}$.
Thus,
\begin{equation*}
 p_{t-1}(d\theta) \propto \pi_0(d\theta) \exp( - \cL(\theta;\cD_{t-1})).
\end{equation*}
Now note that for as long as $\theta$ is `close' to $\hat\theta_{t-1}$,
$\cL(\theta;\cD_{t-1})
\approx \cL(\hat\theta_{t-1};\cD_{t-1}) + \frac12\snorm{\theta-\hat\theta_{t-1}}_{F}^2=:\tilde \ell_{t-1}(\theta)$, where $F$ is as above.
For reasons that will become clear in a moment, let
$\pi_0= \cN(0,  \lambda^{-1} I)$.
Now, if most of the probability mass of $p_{t-1}$ concentrates in a small neighbourhood of $\hat\theta_{t-1}$, we get that the Gaussian distribution
\begin{equation*}
	\tilde p_{t-1}(d\theta) \propto \exp\left( -\frac\lambda2\snorm{\theta}^2  - \tilde\ell_{t-1}(\theta)\right)d\theta
\end{equation*}
is a good approximation to $p_{t-1}$.
The distribution $\tilde p_{t-1}$, which is based on the best local quadratic fit to the log-density of $p_{t-1}$ around the mode $\hat \theta_{t-1}$ of $p_{t-1}$, is known as the \emph{Laplace approximation} to $p_{t-1}$.

Observe that both $\tilde p_{t-1}$ and the sampling distribution induced by EVILL are based on a quadratic approximation to the negative log-likelihood. In fact, sampling from the Laplace approximation $\tilde p_{t-1}$ can also be implemented using a `sample-then-optimise' method  \citep{PaYu10,antoran2022sampling}, which can be shown to coincide with choosing the minimiser of $\tilde \ell_{t-1}(\theta)-W^\top \theta$; meanwhile, EVILL chooses the minimiser of $\ell_{t-1}(\theta)-W^\top \theta$.

Sampling from the Laplace approximation to the posterior has been used in many prior works, including \citet{chapelle2011empirical,russo2018tutorial,abeille2017linear,kveton2019randomized}, and is generally a computationally efficient approach. Yet, because the matrix $F$ changes with the parameter estimates, na\"ive approaches to sampling from the Laplace approximation scale poorly when either $d$ or $t$ is large---efficient implementations use the `doubling trick' or the aforementioned sample-then-optimise approach. In contrast, EVILL is a simple to implement alternative.

\section{RESULTS FOR GENERALISED LINEAR BANDITS}
\label{sec:theory}
In this section we consider EVILL in generalised linear bandits (GLBs) as defined in \cref{ex:glm}. We start with recalling the definition natural exponential families (NEFs). Next we show that when the reward distributions are given by a natural exponential family of distributions, EVILL and PHE with additive data perturbations are equivalent (\cref{prop:eq}), but that the equivalence may break outside this setting; indeed, we show that on a slightly enlarged class of bandits, PHE introduces a `bias' that may lead to linear regret, which is avoided by EVILL\@. We then introduce our main theoretical result, \cref{thm:main}, which gives a guarantee on the regret of EVILL that is comparable to the existing results for Thompson sampling based approaches. This puts EVILL---and thus, in the special case of generalised linear bandits, PHE---on equal footing with Thompson sampling. We end by highlighting a technical result, \cref{lem:mgf-bound}, that states that members of a self-concordant NEF distribution are sub-exponential---this allows us to remove the boundedness condition from the prior works of \textcite{abeille2021instance,russac2021self}.

\subsection{Natural exponential families} \label{sec:nef}
For any probability distribution $Q$ over the reals
and $u\in \R$, we let $\psi_Q(u) = \log \int e^{u y} Q(dy)$ be the \emph{cumulant-generating function} of $Q$, and let $\cU_Q \subset \R$ be the largest set  on which $\psi_Q(u)$ is finite. The natural exponential family generated by a distribution $Q$ is the family
$(Q_u(dy))_{u\in \cU_Q}$ of probability measures on $\R$ given by
\begin{equation*}
	Q_u(dy) = \exp(yu - \psi(u)) Q(dy) \spaced{for} u \in \cU_Q\,.
\end{equation*}
As is well known, $\cU_Q$ is always an interval.
We only consider \emph{regular} families, that is, families where
$\cU_Q^\circ$, the interior of $\cU_Q$, is not empty.

Our earlier examples, that is, the Bernoulli and Gaussian distributions are examples of regular families with $\cU=\cU_Q = \R$.
The Bernoulli distribution corresponds to the base distribution that is the uniform distribution on $\{0,1\}$, while the Gaussian distribution corresponds to $Q$ chosen as the standard normal distribution.
The latter might be considered a somewhat unusual choice (as opposed to choosing $Q$ to be Lebesgue measure), though this makes no difference.

\subsection{The equivalence of EVILL and PHE}
\label{sec:equiv}
In this section we show that in generalised linear bandits,
EVILL reduces to PHE with additive, data-dependent perturbations, and demonstrate how this reduction can fail outside that setting. For the first part, we work under the following assumption.
\begin{assumption}[Generalised linear bandit]
\label{ass:glb}
The family $(P(\cdot;u))_u$ that determines the rewards in our bandit instances is a natural exponential family generated by some base $Q$. That is,
\begin{align*}
P(\cdot;u) = Q_u(\cdot) \spaced{for} u\in \cU\,,
\end{align*}
where $\cU \subset \cU_Q^\circ$ is an interval with non-empty interior. Furthermore, we assume that the localisation maps are linear; that is, $\phi_x(\theta) = x^\top \theta$ for each $x \in \cX$ and $\theta \in \Theta$.
\end{assumption}

We now show how the PHE algorithm of \citet{kveton2019randomized} may be understood in terms of linear extensions of the likelihood function. Under \cref{ass:glb}, and choosing base measure $\nu = Q$, we get the density $p(y;u) = \exp(yu - \psi(u))$.
Recalling the notation $\ell(y;u) = -\log p(y;u)$, this gives
\begin{align*}
\ell(y;u) = \begin{cases}
 -yu + \psi(u)\,, & \text{if } y \in \supp(Q)\,;\\
 +\infty\,, & \text{otherwise}\,.
 \end{cases}
\end{align*}
Now consider the linear extension of $\ell$ to all of $\R$:
\begin{align*}
\tilde\ell(y;u) = -yu + \psi(u)\,, \qquad y\in \R\,.
\end{align*}
Using this function, for any $y\in \supp(Q)$ and any $z\in \R$, we have
\begin{align}
\tilde\ell(y+z;u)
& = \ell(y;u)-zu\,. \label{eq:lilo}
\end{align}
For $\cD = ((x_1,y_1),\dots,(x_s,y_s))$, $(x_i,y_i)\in \cX\times \R$,
let $$\tilde\cL(\theta;\cD) = \sum_{i=1}^s \tilde\ell(y_i; \phi_{x_i}(\theta)) + \frac{\lambda}{2} \snorm{\theta}^2\,.$$
Assume also that $y_1,\dots,y_s\in \supp(Q)$.
For $z\in \R^{s}$ let
\[
\cD^{z} = ((x_1,y_1+z_1),\dots,(x_s,y_s+z_{s}))\,.
\]
Then, from \cref{eq:lilo} and $\phi_{x_i}(\theta) = x_i^\top \theta$ we get
$$\tilde\cL(\theta;\cD^{z})
 = \cL(\theta;\cD) - \sum_{i=1}^{t-1}z_i x_i^\top \theta\,.$$
The PHE algorithm of \citet{kveton2019perturbed} for GLBs  computes at round $t \in \N^+$ a random parameter $\theta_t \in \argmin_{\theta}\tilde\cL(\theta; \cD^{Z_t}_{t-1})$, where $\cD^{Z_t}_{t-1}$ are the thus far collected observations and where $Z_t \sim \cN(0, a I_{t-1})$ is chosen independently of the past.
Then PHE chooses the action that is optimal under this random parameter $\theta_t$ for the action to be played in round $t$.

Comparing the above with EVILL, we can see that, in this GLB setting, EVILL is a variant of PHE provided that we replace the scaled identity covariance of $Z_t$ with a diagonal covariance that uses the Fisher information of the respective data-entries:
\begin{proposition}\label{prop:eq}
In generalised linear bandits, EVILL reduces to a variant of GLM-PHE where the perturbation vector used by PHE in round $t$ is
\begin{align*}
Z_{t}
 \sim \cN(0,a^2 \diag(I(X_1^\top \hat\theta_{t-1}),\dots,I(X_{t-1}^\top \hat\theta_{t-1}) ))\,.
\end{align*}
\end{proposition}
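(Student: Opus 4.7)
The proof is a direct comparison of objective functions using the identity $\tilde\cL(\theta;\cD^{z}) = \cL(\theta;\cD) - \sum_{i} z_i\, x_i^\top\theta$ derived just above. First I specialise EVILL to \cref{ass:glb}. The linearity $\phi_x(\theta)=x^\top\theta$ yields $\dot\phi_x(\theta)=x$, so EVILL's perturbation vector in round $t$ reads
\[
W_t \;=\; a\lambda^{1/2}Z_t \;+\; a\sum_{i=1}^{t-1} I(X_i^\top\hat\theta_{t-1})^{1/2}\, Z'_{t,i}\, X_i.
\]
Defining $z_i := -a\,I(X_i^\top\hat\theta_{t-1})^{1/2}Z'_{t,i}$ for $i\in[t-1]$ and applying the above identity with $z=(z_1,\dots,z_{t-1})$, one sees that EVILL's perturbed objective $\cL(\theta;\cD_{t-1})+W_t^\top\theta$ equals
\[
\tilde\cL(\theta;\cD^{z}_{t-1}) + a\lambda^{1/2}Z_t^\top\theta,
\]
which is exactly the GLM-PHE objective on the perturbed dataset $\cD^{z}_{t-1}$, plus an additional `prior' term $a\lambda^{1/2}Z_t^\top\theta$; this extra term, which accounts for the random perturbation associated with the $\lambda$-regularisation, is the sense in which the statement refers to a \emph{variant} of PHE. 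Since the arm-selection rule \eqref{eq:bestarm1} coincides with PHE's greedy rule in GLBs, the two algorithms define the same policy whenever the resulting $\theta_t$ agree.

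Next I would identify the conditional law of $z=(z_1,\dots,z_{t-1})$ given the history $\cF_{t-1}$ at the start of round $t$. The scalars $a I(X_i^\top\hat\theta_{t-1})^{1/2}$ are $\cF_{t-1}$-measurable, while $Z'_{t,1},\dots,Z'_{t,t-1}$ are independent standard Gaussians drawn fresh in round $t$ and independent of $\cF_{t-1}$. Hence, conditional on $\cF_{t-1}$, the $z_i$ are independent with $z_i \sim \cN(0, a^2\, I(X_i^\top \hat\theta_{t-1}))$. Because the centred Gaussian is symmetric about zero, the leading minus sign in the definition of $z_i$ is distributionally immaterial. This yields the claimed joint law $\cN(0,\, a^2 \diag(I(X_1^\top \hat\theta_{t-1}),\dots,I(X_{t-1}^\top \hat\theta_{t-1})))$ for the PHE perturbation vector.

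No real obstacle arises; the only subtle point worth flagging is that one must use the \emph{linear extension} $\tilde\ell$ to $\R$ (rather than $\ell$) when defining PHE's perturbed loss, because perturbed responses $Y_i+z_i$ may fall outside $\supp(Q)$---for example, Bernoulli rewards corrupted by Gaussian noise---and would then make $\cL(\theta;\cD^{z}_{t-1})=+\infty$. The identity \eqref{eq:lilo} requires the extension in order to apply. With PHE understood in this natural exponential-family form---as already set up in the text preceding the proposition---the equivalence is the one-line substitution above.
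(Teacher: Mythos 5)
Your proposal is correct and follows essentially the same route as the paper, which proves the proposition via the in-text derivation preceding it: the identity $\tilde\cL(\theta;\cD^{z}) = \cL(\theta;\cD) - \sum_i z_i x_i^\top\theta$ from \cref{eq:lilo}, the substitution $z_i = -a\,I(X_i^\top\hat\theta_{t-1})^{1/2}Z'_{t,i}$, and the resulting diagonal Gaussian law. Your explicit handling of the sign via Gaussian symmetry, the residual $a\lambda^{1/2}Z_t^\top\theta$ regularisation term, and the need for the linear extension $\tilde\ell$ are all consistent with (and slightly more careful than) the paper's presentation.
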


However, the equivalence of EVILL and PHE \emph{breaks in a strong sense} beyond generalised linear bandits. In particular, when the natural exponential family is replaced with a slightly bigger class, such as the exponential family, additive reward perturbations can lead to linear regret. Consider the following example.

\begin{example}[Rayleigh linear bandit]\label{ex:rayleigh}
	Let $\cU = [0, \infty)$ and for any $u \in \cU$, let $P(\cdot; u)$ be the Rayleigh distribution with parameter $u$, that is one with density with respect to $\nu(dy) =  2 y\mathbf{1}_{[0,\infty)}(y) m(dy)$ given by \[p(y;u) =  u \exp(-u y^2)\,,\] where $u\in \cU = [0,\infty)$, $\mathbf{1}_{[0,\infty)}$ is the characteristic function of $[0,\infty)$ and $m$ is the Lebesgue measure. Let $\phi_x(\theta) = x\tran\theta$, for all $x \in \cX$, $\theta \in \Theta$, with both $\cX$ and $\Theta$ subsets of the positive orthant of $\R^d$.
\end{example}

In the above Rayleigh linear bandit example, $\tilde\ell$, the linear extension of the negative log-likelihood induced by $p$ to the whole real line satisfies
\[\tilde\ell(y;u) = -\log(u)+u y^2,\quad y\in \R\,.\]
Now, for $Z\sim \cN(0,1)$, \[\E[\tilde\ell(y+Z;u)] = -\log(u)+u (y^2+1) \neq \tilde \ell(y;u)\,\] and so, unlike in generalised linear bandits, the additive-perturbation-based estimate is \emph{biased}. As we shall see in \cref{sec:experiments}, this bias may lead PHE to converge to choosing a suboptimal arm indefinitely. In contrast, EVILL does not incur such bias.

\subsection{Regret guarantees}
In this section we state our theoretical results in which we give an upper bound on the regret of EVILL\@.
The bounds are given for a broad subclass of generalised linear bandits, captured by the following assumptions.
\begin{restatable}{assumption}{assumptionMain}\label{ass:main}
The following hold:
\begin{enumerate}[(i)]
\item \label{ass:nef} The family $(P(\cdot;u))_{u\in \cU}$
corresponds to a regular natural exponential family $\cQ$
with some base $Q$, and $\cU=\cU_Q=\R$.

\item \label{ass:sg} For all $u\in \R$ and a known $L > 0$, $\dot\mu(u) \leq L$.

\item \label{ass:sc}
$\cQ$ is $M$-self-concordant with a known constant $M>0$:
\begin{equation*}
	|\ddot\mu(u)| \leq M \dot\mu(u) \spaced{for all} u \in \R\,.
\end{equation*}

\item \label{ass:armset} $\cX \subset B_2^d = \{ x\in \R^d\,;\, \snorm{x}_2\le 1\}$ is closed

\item \label{ass:parset} $\Theta \subset S\cdot B_2^d = \{ \theta\in \R^d\,:\, \snorm{\theta}_2\le S \}$ where $S>0$ is known.

\item \label{ass:linloc} The localisation maps are linear: $\phi_x(\theta) = x^\top \theta$
for all $x\in \cX$ and $\theta\in \Theta$.
\end{enumerate}
\end{restatable}

\begin{remark}
	Observe that in \cref{ex:llb,ex:lgb}, these assumptions are met. Indeed, for the logistic linear bandit, $M=1/4$ and $L=1/2$, and for a Gaussian linear bandit with unit variance, $M=0$ and $L=1$.
\end{remark}

Assumptions \eqref{ass:nef}--\eqref{ass:sg} and \eqref{ass:armset}--\eqref{ass:linloc}
essentially appeared in the work of
\citet{filippi2010parametric}, who calls bandit models where these conditions hold
generalised linear bandits (GLBs). The \emph{self-concordance property}, assumption \eqref{ass:sc}, was introduced to the GLB literature by \citet{russac2021self}, and ensures that $I$, the Fisher information map, does not change too rapidly. Given that the algorithm uses a preliminary estimate of $\theta_\star$ together with function $I$, it is natural that the results require the control of the smoothness of $I$. In particular, in natural exponential families, $I(u) = \dot\mu(u)$ for any $u\in \cU_Q^\circ$, and so this condition guarantees that $I(u)/I(v) \le e^{M|u-v|}$ for any $u,v\in \cU_Q$, which is what we will need in our proofs. An additional property of natural exponential families is that, there, $\dot\mu(u)$ is also the same as the variance of $Q_u$, and thus $L$ is simply an upper bound on the variance of $Q_u$ for any $u \in \R$.

With the assumptions in place, the main result of this section is the following bound on the regret of EVILL:
\begin{theorem}\label{thm:main}
Let \cref{ass:main} hold and let $n \in \Np$, the horizon, be given. Assume that, for some sufficiently small $b>0$, the prior observations satisfy
\[\max_{x \in \cX} \norm[V^{-1}_{\tau}]{x} \leq b\,,\]
where \[V_{\tau} = \sum_{i=1}^{\tau} X_i X_i\tran + \lambda I_d\,.\] Then, with appropriate choices of parameters $a,\lambda > 0$, with high probability, the regret of EVILL, \cref{alg:EVILL}, satisfies
\begin{equation*}
	R(n) \leq \tilde{O}(d^{3/2} \sqrt{n\dot\mu( x_\star^\top \theta_\star) }) + O( \tau )\,,
\end{equation*}
where $\tilde{O}$ hides problem dependent constants and the last term accounts for the
regret incurred during the $\tau$ steps during which the prior observations are obtained.
\end{theorem}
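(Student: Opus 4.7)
The plan is to adapt the Thompson sampling regret analysis of \textcite{abeille2017linear,abeille2021instance} from exact posterior sampling to the linearly-perturbed MLE produced by EVILL. Define the preconditioner $H_t = \sum_{i=1}^{t-1} I(X_i^\top \hat\theta_{t-1})\, X_i X_i^\top + \lambda I_d$, which under \cref{ass:glb} coincides with the Hessian of $\cL(\,\cdot\,; \cD_{t-1})$ evaluated at $\hat\theta_{t-1}$, up to a substitution of $\dot\mu$ for $I$. The first step is to establish a ``good event'' $\cE$ on which, for every $t > \tau$, $\snorm{\hat\theta_{t-1} - \theta_\star}_{H_t} \le \beta(\delta) = \tilde{O}(\sqrt{d})$. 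This is the self-concordant GLB concentration bound of \textcite{abeille2021instance,russac2021self}; in our setting the boundedness hypothesis of those works is replaced by the sub-exponential tail estimate \cref{lem:mgf-bound}. The requirement $\max_x \norm[V_\tau^{-1}]{x} \le b$ ensures $\phi_x(\hat\theta_\tau)$ starts out close enough to $\phi_x(\theta_\star)$ for self-concordance to transport constant-factor equivalences between Fisher informations at $\hat\theta_{t-1}$ and at $\theta_\star$, so that $H_t$ is comparable, up to constants, to the true Fisher matrix.

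The second step is to analyse the perturbation. From the first-order optimality conditions $\nabla\cL(\hat\theta_{t-1};\cD_{t-1}) = 0$ and $\nabla\cL(\theta_t;\cD_{t-1}) = -W_t$, a second-order Taylor expansion controlled by self-concordance yields $\theta_t - \hat\theta_{t-1} \approx H_t^{-1} W_t$ up to a lower-order remainder. By construction, $H_t^{-1/2}W_t$ is an isotropic Gaussian with scale $a$, so for any fixed $x$ the one-dimensional projection $x^\top(\theta_t - \hat\theta_{t-1}) / \norm[H_t^{-1}]{x}$ is approximately $\cN(0,a^2)$, and $\snorm{\theta_t - \hat\theta_{t-1}}_{H_t} = \tilde{O}(a\sqrt d)$ with high probability. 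The crucial consequence is the optimism lemma: for some constant $p_0 > 0$ depending only on $a$, $\P(x_\star^\top\theta_t \ge x_\star^\top\theta_\star \mid \cF_{t-1}) \ge p_0$ on $\cE$, which (since $\mu$ is monotone in a NEF) implies optimism in mean reward with probability at least $p_0$.

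The third step is the standard TS regret decomposition. Split $\mu(x_\star^\top\theta_\star) - \mu(X_t^\top\theta_\star)$ as $[\mu(x_\star^\top\theta_\star) - \mu(X_t^\top\theta_t)] + [\mu(X_t^\top\theta_t) - \mu(X_t^\top\theta_\star)]$. The first bracket is handled via the Abeille--Lazaric super-martingale trick: conditioning on optimism makes it non-positive, and dividing by $p_0$ costs only a constant. The second bracket is bounded by $|X_t^\top(\theta_t - \theta_\star)| \cdot \sup_w \dot\mu(w)$ along the line segment, which by the triangle inequality and the previous two steps is at most $\norm[H_t^{-1}]{X_t}\,\bigl(\beta(\delta) + \tilde{O}(a\sqrt d)\bigr)\sup_w \dot\mu(w)$.

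The final step is summing the per-round bounds across $t$. A direct elliptical potential gives $\sum_{t > \tau} I(X_t^\top\hat\theta_{t-1})\,\norm[H_t^{-1}]{X_t}^2 \le O(d\log n)$, and Cauchy--Schwarz turns this into $\sum_{t > \tau} I(X_t^\top\hat\theta_{t-1})^{1/2} \norm[H_t^{-1}]{X_t} \le O(\sqrt{nd\log n})$. To recover the instance-dependent factor $\sqrt{\dot\mu(x_\star^\top\theta_\star)}$ rather than a worst-case $\sup_u \dot\mu(u)$, one uses self-concordance to replace $\sup_w \dot\mu(w)$ by $\dot\mu(x_\star^\top\theta_\star)^{1/2} \cdot I(X_t^\top\hat\theta_{t-1})^{1/2}$, up to a bootstrapped loss-gap argument. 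Rolling the $\beta(\delta)$ and $a\sqrt d$ factors together supplies the $d^{3/2}$ scaling. I expect the main obstacle to be (a) quantifying the quadratic-approximation error in the ``$\theta_t - \hat\theta_{t-1} \approx H_t^{-1}W_t$'' step rigorously enough to preserve both the optimism probability and the Gaussian-type deviation bound; and (b) the circular argument needed to convert the worst-case variance into the instance-dependent $\dot\mu(x_\star^\top\theta_\star)$, which requires showing that arms with much larger Fisher information than the optimum are played only $o(n)$ times—this is the delicate part shared with \textcite{abeille2021instance}.
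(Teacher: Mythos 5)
Your plan follows essentially the same route as the paper's proof: a self-concordant confidence set built on the sub-exponential MGF bound of \cref{lem:mgf-bound}, a first-order-optimality analysis showing $\theta_t-\hat\theta_{t-1}$ is approximately $H^{-1}W_t$ (the paper makes the ``$\approx$'' exact via the secant matrix $G(\theta_t,\hat\theta_{t-1};\cD_{t-1})$ and a Woodbury expansion rather than a Taylor remainder), constant-probability optimism, the Abeille--Lazaric decomposition into optimism and prediction terms, and an elliptical-potential sum combined with the self-bounding inequality $\sum_t \dot\mu(X_t^\top\theta_\star)\le n\dot\mu(x_\star^\top\theta_\star)+M\widetilde R(n)$ to recover the instance-dependent variance factor. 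The two obstacles you flag are exactly the ones the paper's \cref{apx:p-optimism} and \cref{lem:dot-mu-sum} resolve, so the proposal is correct in approach and consistent with the paper's argument.
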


\begin{remark}
	When the prior observations are chosen by using a standard warm-up routine, discussed in \cref{apx:warm-up}, \[\tau=O((\log n)^2)\,,\] with constants as detailed in \cref{remark:warm-up} of \cref{apx:regret-formal}.
\end{remark}

When specialised to logistic bandits, our result is comparable to
that proven for the TS-ECOLog algorithm of \citet{faury2022jointly}, which is the state-of-the-art for randomised methods.
In particular, EVILL, just like TS-ECOLog, adapts to
$\dot\mu( x_\star^\top \theta_\star)$, the variance of rewards associated with the optimal arm. See \cref{apx:regret-formal} for a formal statement of this regret bound, its proof---which builds on the works of \citet{abeille2017linear,faury2020improved,russac2021self,faury2022jointly}---and more discussion.

\subsection{Confidence sets for NEFs}

Our proofs also extend the confidence sets developed for natural exponential family distributions by \citet{faury2020improved,russac2021self}, removing the assumption that rewards are uniformly bounded with probability one. This is based on the following moment generating function bound:

\begin{restatable}{lemma}{lemmaMGF}\label{lem:mgf-bound}
    Consider an $M$-self-concordant NEF $\cQ$.
    Then, for any $u\in \cU_Q^\circ$ and all $|s| \leq \log(2) / M$,
    \begin{equation*}
        \psi_{Q_u}(s) \leq s \mu(u) + s^2 \dot\mu(u)\,.
    \end{equation*}
\end{restatable}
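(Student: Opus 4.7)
The plan is to exploit the standard CGF identity in a natural exponential family together with the self-concordance assumption. First I would rewrite the quantity we wish to bound using
\$
\psi_{Q_u}(s) = \log \int e^{sy} Q_u(dy) = \log \int e^{(s+u)y - \psi_Q(u)} Q(dy) = \psi_Q(u+s) - \psi_Q(u)\,,
\$
which is valid for $|s|$ small enough that $u+s \in \cU_Q^\circ$ (under $|s|\le\log(2)/M$ this will be covered by the growth bound on $\dot\mu$, and in any case $\cU_Q = \R$ in the NEF's we care about). In a NEF we have $\dot\psi_Q = \mu$ and $\ddot\psi_Q = \dot\mu$, so Taylor's theorem with integral remainder yields
\$
\psi_Q(u+s) - \psi_Q(u) = s\mu(u) + s^2 \int_0^1 (1-v) \dot\mu(u+sv)\,dv\,.
\$

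Next I would turn the self-concordance hypothesis $|\ddot\mu(u)|\le M\dot\mu(u)$ into a pointwise growth bound for $\dot\mu$. Because $\dot\mu>0$ (it is the variance of $Q_u$), self-concordance is equivalent to $|\tfrac{d}{du}\log \dot\mu(u)|\le M$, and integrating gives $\dot\mu(u+t) \le e^{M|t|}\dot\mu(u)$ for any admissible $t$. Substituting this into the Taylor remainder and evaluating the resulting elementary integral,
\$
s^2 \int_0^1 (1-v) e^{M|s|v}\,dv \cdot \dot\mu(u) = \dot\mu(u)\cdot\frac{e^{M|s|} - M|s| - 1}{M^2}\,.
\$

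The final step is the scalar inequality $e^a - a - 1 \le a^2$ for $a \in [0,\log 2]$, applied with $a = M|s| \le \log 2$. This is the only place the range restriction on $s$ enters; it follows from $f(a)=a^2 - e^a + a + 1$ having $f(0)=f'(0)=0$ and $f''(a)=2-e^a\ge 0$ on $[0,\log 2]$. Combining this with the previous display gives exactly $s\mu(u) + s^2\dot\mu(u)$ as the upper bound. There is no real obstacle here; the only subtlety is making sure the integral remainder is handled in a sign-agnostic way (the substitution $t=sv$ above does this automatically, which is why I prefer it over writing $\int_0^s (s-t)\dot\mu(u+t)\,dt$ directly), and noting that the constant $\log 2$ is exactly the threshold at which the elementary inequality $e^a-a-1\le a^2$ ceases to improve.
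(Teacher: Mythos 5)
Your proof is correct and follows essentially the same route as the paper: rewrite $\psi_{Q_u}(s)=\psi_Q(u+s)-\psi_Q(u)$, Taylor-expand to second order using $\dot\psi_Q=\mu$, $\ddot\psi_Q=\dot\mu$, and control the remainder via the self-concordance growth bound $\dot\mu(u+t)\le e^{M|t|}\dot\mu(u)$. The only (immaterial) difference is that the paper uses the Lagrange form of the remainder and bounds $e^{M|s|}\le 2$ directly, whereas you use the integral remainder plus the inequality $e^a-a-1\le a^2$ on $[0,\log 2]$; both yield the same final constant.
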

Thus, $Q_u$ is what \citet{wainwright2019high} would describe as a \emph{sub-exponential distribution} with parameters
\[(\nu,\alpha)=(\sqrt{2\dot\mu(u)},\,M/\log(2))\,.\] The above bound and the resulting confidence sets are proven in \cref{apx:nef} and \cref{apx:confidence-sets} respectively.

\section{EXPERIMENTS}\label{sec:experiments}

We present two experiments.\footnote{Code: \url{https://github.com/DavidJanz/EVILL-code}} The first, on a synthetic logistic bandit problem, shows the importance of our data-dependent perturbations. The second, on a Rayleigh parameter estimation problem and a Rayleigh linear bandit problem, shows that PHE can suffer catastrophic bias outside the exponential family.

\begin{figure}[t]
    \flushleft
    \includegraphics[width=1.0\columnwidth]{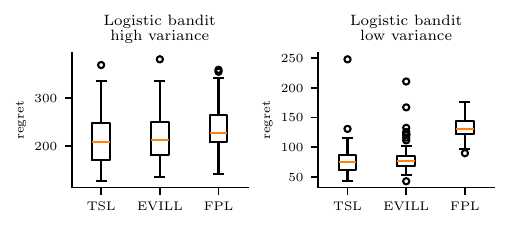}
    \caption{Regret of TSL, EVILL and FPL on two logistic linear bandit tasks: high variance, where $\dot\mu(x_\star\tran\theta_\star)$ is high, and low variance, where  $\dot\mu(x_\star\tran\theta_\star)$ is small. Box plots are based on the regret from 100 instances, and show the median and the interquartile (IQ) range, with whiskers restricted to $1.5\times$ the IQ range.}\label{fig:logistic}
\end{figure}

\begin{figure}[t]
    \flushleft
    \includegraphics[width=1.0\columnwidth]{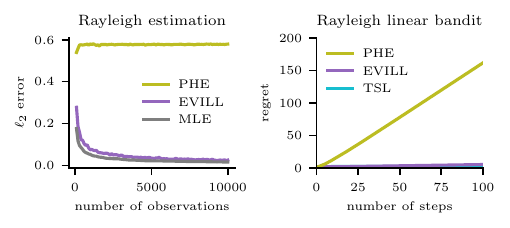}
    \caption{Rayleigh parameter estimation and bandit experiments. The left panel shows the $\ell_2$ error in estimating the parameter of a Rayleigh bandit when data is i.i.d., uncontrolled by the bandit algorithm.
    In addition to PHE and EVILL's estimates, the MLE estimate using the same data is also shown for
    reference.
	The right panel shows regret for TSL, PHE and EVILL for the same Rayleigh bandit problem, over $100$ steps of interaction. Both panels plot means over $100$ iterations and no confidence intervals---the latter were too tight to be visible. EVILL and TSL lines in right panel overlap.}
	\label{fig:rayleigh}
\end{figure}

\paragraph{Logistic bandit} Consider the classic logistic linear bandit setting, \cref{ex:llb}, with arm set $\cX$ consisting of all $x \in \{0,1\}^{10}$ such that exactly three components of $x$ are nonzero---real world problems often exhibit this kind of combinatorial arm structure, features can either be present, or not, with the choice of $10$ and $3$ being arbitrary. W test on two $\theta_\star$, one giving a high variance problem, with  $\dot\mu(x_\star\tran\theta_\star) \approx 0.15$, and a low variance problem, with $\dot\mu(x_\star \tran \theta_\star) \approx 0.02$. Specifically, we consider $\theta^\star$ of the form
\begin{equation*}
    [\theta_\star]_i = \frac{c_1}{(1 + i + c_2)^2}
\end{equation*}
for each $i \in [d]$, a simple inverse polynomial form, and set $c_1,c_2 \in \R$ using numerical optimisation to give the right variances, and satisfy $\min_{x \in \cX} x\tran\theta_\star = 0.1$.

As baselines, we use the Thompson sampling algorithm with Laplace approximation (TSL) and the follow-the-perturbed-leader (FPL) algorithms, both of \citet{kveton2019randomized}. FPL is the previous additive noise PHE method where the noise magnitude is not data-dependent. We set perturbation scale parameter $a$ to $a=1$ for TSL and EVILL (an arbitrary choice), and $a=1/2=\sqrt{L}$ for FPL, the latter such that the behaviour of FPL matches that of the other two algorithms in the high variance setting.

We provide all algorithms with a warm-up of $120$ (number of arms) observations, with the actions selected uniformly at random, and run for a total of $10,000$ iterations (long enough so that the regret of all algorithms is clearly sublinear), and run $100$ independent copies of the experiment, which suffices to make confidence intervals negligible. The results, shown in \cref{fig:logistic}, confirm that the regret of EVILL, alike that of TSL, scales favourably with  $\dot\mu(x_\star\tran\theta_\star)$. In contrast, FPL does not perform as well on the lower variance instance.

\paragraph{Rayleigh experiment} Our second experiment is based on the linear Rayleigh bandit of \cref{ex:rayleigh}, and consists of two parts: estimation, and the full bandit problem. We will use $\theta_\star = (0.9, 0.85)$ for both.

For the estimation part, we take $\cX = \{(1, 0), (0, 1) \}$ and construct a data set composed of $X_1, \dotsc, X_n$ sampled uniformly at random from $\cX$, and corresponding observations $Y_1, \dotsc, Y_n$ sampled from a Rayleigh distributions with parameters $X_i\tran\theta^\star$, $i \in [n]$.
The bandit algorithms PHE and EVILL are then used to produce parameter estimates with the modification
that their action choice for round $t$ is overwritten to be $X_t$.
This way, we can test in isolation whether they are able to produce good parameter estimates. In \cref{fig:rayleigh}, left panel, we plot the mean $\ell_2$-error of the estimates of $\theta_\star$ as a function of data subset size $n \in \{100, 200, \dotsc, 10,000\}$ for the two methods, as well as that of a maximum likelihood estimate, for reference. The results show that PHE is hopelessly inconsistent, while the EVILL estimates are only a little worse than those of the unperturbed MLE.

For the bandit experiment, we have arms $\cX = \{ (1, 0.99), (0.1, 0.05)\}$, chosen specifically such that the bias we expect PHE to exhibit leads to it choosing the suboptimal arm. We run TSL, PHE and EVILL for $100$ steps of interaction with no warm-up. All methods use the same noise scale, which was somewhat arbitrarily chosen to be $a=1.0$. As shown in the right panel of \cref{fig:rayleigh}, both EVILL and TSL suffer almost no regret, while PHE suffers linear regret---as expected.

\section{CONCLUSION}

The main contribution of this work suggests replacing perturbed-history exploration
with EVILL\@.
The main appeal of EVILL when compared to its alternatives is that
it can be implemented by adding a few extra lines to any code that implements
model fitting and policy optimisation, while we also expect to be
competitive with its alternatives. We have showed that this holds in generalised linear bandits.
Multiple intriguing avenues for further research, including  the interplay between computation, memory, and performance, remain to be explored. Another potential research direction involves evaluating EVILL's interaction with nonlinear models, such as neural networks. Drawing from experience gained with Thompson sampling, we also anticipate EVILL to be useful in reinforcement learning.

\section*{Acknowledgements}
Csaba Szepesv\'ari would like to express his gratitude to Branislav Kveton
and Benjamin van Roy
for their invaluable feedback during earlier stages of this research.
Csaba Szepesv\'ari also gratefully acknowledges funding  from the Canada CIFAR AI Chairs Program, Amii and NSERC.

\printbibliography
\onecolumn
\appendix

\section{ON COMPUTATIONAL COMPLEXITY}\label{apx:computational}
A naive implementation of the algorithm stores all the data.
This uses $O(nd)$ memory.

Regarding the compute cost, first we note that the algorithm is tolerant to approximation errors in solving the optimisation problems. In particular, it follows with a simple argument that it is sufficient to optimise the losses in Steps~\ref{alg:mle} and \ref{alg:mlep} up to a constant accuracy, while it suffices to find an optimal arm up to an accuracy of $1/\sqrt{n}$.

In regard to the computational complexity of the various steps of the algorithm, the following holds:
In Steps~\ref{alg:mle} and \ref{alg:mlep}, the algorithm needs to solve a convex optimisation problem and the starting point of the design of our algorithm was that this can be done efficiently.
Indeed, the ellipsoidal algorithm can be used for this purpose with cost that scales polynomially in $n,d$ and $\log(1/\epsilon)$ \citep{Grotschel2011},
though we expect numerical algorithms tailored to the properties
of the specific natural exponential family underlying a specific bandit model to be more efficient.

Steps~\ref{alg:sample} and \ref{alg:weight} take $O(n (c+d))$ time, where $c$ is the cost of evaluating $\dot\phi_{X_i}$ at some input of its domain and the cost of evaluating the Fisher-information. For generalised linear models, $\dot\phi_{X_i}$ is the identity map, which means that $c$ represents just the cost of evaluating the Fisher-information. In ``named'' natural exponential families $c=O(1)$,
assuming that the standard transcendental functions can be evaluated in $O(1)$ time.

Another starting point for the design of our algorithm was that arm selection (Step~\ref{alg:arm}) can also be implemented in an efficient way.
When $\cX$ is finite, this can be done by looping through all the arms.
With this, the cost of selecting an arm is at most $O(|\cX| c)$ where $c$ represents the cost of evaluating $\mu$ and $\phi_x$. For generalised linear models, $\mu$ can be dropped (as it is an increasing function), and thus $c$ becomes the cost of evaluating $\phi_x$, which is $O(d)$.
When $\cX$ is a convex set (which one may assume in any case without loss of generality) and $\mu$ is an increasing function, the assumption that arm-selection can be done efficiently is equivalent to that linear optimisation over $\cX$ can be efficiently implemented.
This is known to be possible when $\cX$ is represented with either a membership or a separation oracle \citep{Grotschel2011}.

Putting everything together, we find that in generalised linear models
the total cost is polynomial in $n$ and $d$.
For a naive implementation, the cost, however, is at least quadratic in $n$.
It remains to be seen whether this quadratic cost can be avoided with \emph{any} randomised method that is not relying on maximising upper confidence bounds.

When compute time is a bottleneck, one approach is to parallelise the computation. Here, with $O(n)$ computers, accessing the same central storage, one can parallelise the computation of the perturbed estimates to reduce the per iteration compute time to $O(1)$ at the price of adding $O(n)$ memory per computer (compute $t$ would store $(Z_{t,i})_{i\le t}$). The speed-up is possible because the losses change by a little only between rounds, hence, computer $t$, which is used to get the weight vector to used in round $t$, can start the optimisation from the result that it obtained by optimising the objective just with the data available in rounds $1,\dots,t-2$.

\section{PRIOR OBSERVATIONS, AND THE BASICS OF SELF-CONCORDANCE}\label{apx:warm-up}

Recall that \cref{alg:EVILL} asks for prior observations $(X_1, Y_1), \dotsc, (X_\tau, Y_\tau)$. Then, \cref{thm:main} asks that, for $V_\tau = \sum_{i=1}^\tau X_i X_i\tran + \lambda I$ and some $b > 0$, the \emph{precondition}
\begin{equation*}
    \max_{x \in \cX} \norm[V_\tau^{-1}]{x} \leq b
\end{equation*}
holds. In this appendix, we look at what these prior observations buy us---which gives us an excuse to introduce some basic results on self-concordance---and how to gather these prior observations efficiently. We start with the latter.

\subsection{Warm-up procedures}

Let $X_1,X_2,\dots\in \R^d$ and for $t\ge 1$ let
$V_t = \lambda I + \sum_{s\le t} X_s X_s^\top$ with $V_0 = \lambda I$.
Suppose we want to produce prior observations $X_1,X_2,\dots,X_{\tau}$ so that
$\max_{x \in \cX} \norm[V_\tau^{-1}]{x} \leq b$.
We are interested in stopping as early as possible.
Hence,
in round $t+1$ we stop if $\max_{x \in \cX} \norm[V_t^{-1}]{x}$, while if this
is not satisfied we choose $X_{t+1}$ in some way.
Define
\begin{equation*}
    \tau = \min \{t \in [n] \colon \max_{x \in \cX} \norm[V_t^{-1}]{x} \leq b \}
\end{equation*}
as the index of the round when we stop.
How small can we make $\tau$ (or $\EE{\tau}$ when the $(X_t)$ are chosen by using a randomised method).

A slight modification of
Exercise~19.3 of \citet{lattimore2018bandit} gives the following:
\begin{lemma}[Elliptical potentials: You cannot have more than $O(d)$ big intervals]\label{ex:eliptical}
Let $V_0 = \lambda I$ and
$a_1,\ldots,a_n \in \R^d$ be a sequence of vectors with $\snorm{a_t}_2 \leq L$ for all $t \in [n]$. Further, let $V_t = V_0 + \sum_{s=1}^t a_s a_s^\top$. Then, the number of
times $\snorm{a_t}_{V_{t-1}^{-1}} \geq b$ is at most
\begin{align*}
\frac{3d}{\log(1+b^2)} \log\left(1 + \frac{L^2}{\lambda \log(1+b^2)}\right)\,.
\end{align*}
\end{lemma}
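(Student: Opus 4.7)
The plan is to adapt the classical elliptical potential (determinant-trace) argument, but to restrict attention to the sub-sequence of ``big'' rounds, since the target bound does not involve $n$. Let $t_1 < \cdots < t_N$ enumerate the rounds where $\norm[V_{t-1}^{-1}]{a_t} \geq b$, so that $N$ is the quantity to bound. Define the reduced matrices $\tilde V_0 = \lambda I_d$ and $\tilde V_k = \tilde V_{k-1} + a_{t_k} a_{t_k}\tran$, which accumulate only the big rank-one updates. Since $V_{t_k - 1}$ contains at least every rank-one term in $\tilde V_{k-1}$, we have $V_{t_k-1} \succeq \tilde V_{k-1}$, hence $\tilde V_{k-1}^{-1} \succeq V_{t_k-1}^{-1}$ and
\[
\norm[\tilde V_{k-1}^{-1}]{a_{t_k}}^2 \,\geq\, \norm[V_{t_k-1}^{-1}]{a_{t_k}}^2 \,\geq\, b^2\,.
\]

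Next, I would apply the matrix determinant lemma step by step, $\det \tilde V_k = \det \tilde V_{k-1} (1 + \norm[\tilde V_{k-1}^{-1}]{a_{t_k}}^2) \geq (1+b^2)\det \tilde V_{k-1}$, and telescope to get $\det \tilde V_N \geq \lambda^d (1+b^2)^N$. On the other side, AM-GM on the eigenvalues yields $\det \tilde V_N \leq (\tr(\tilde V_N)/d)^d \leq (\lambda + NL^2/d)^d$, using $\norm[2]{a_t}^2 \leq L^2$. Combining the two and taking logarithms,
\[
N \log(1+b^2) \,\leq\, d \log\!\left(1 + \frac{N L^2}{d \lambda}\right)\,.
\]

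The last step is to solve this implicit inequality for $N$. Setting $m = N L^2 / (d\lambda)$ and $A = L^2 / (\lambda \log(1+b^2))$, it becomes $m \leq A \log(1+m)$, and the target bound is equivalent to $m \leq 3A \log(1+A)$. It suffices to verify that $m^* := 3A \log(1+A)$ satisfies $A\log(1+m^*) \leq m^*$, which rearranges to $(1+A)^3 \geq 1 + 3A\log(1+A)$; this is immediate since $(1+A)^3 \geq 1 + 3A^2$ and $\log(1+A) \leq A$. The bound on $N$ follows by monotonicity of $m \mapsto A\log(1+m) - m$ beyond its largest root.

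The main obstacle, and the essential departure from a naive application of the standard elliptical potential lemma, is the reduction in the first step: directly summing $\log(1 + \norm[V_{t-1}^{-1}]{a_t}^2)$ over all $n$ rounds produces a trace-based bound of order $d \log(1 + n L^2/(d\lambda))$, which is $n$-dependent, whereas the claim demands a bound that depends only on $d, L, \lambda, b$. Passing to the sub-sequence of big rounds, and exploiting that dropping any of the small rank-one updates only \emph{increases} the self-normalised norms via the $\succeq$-monotonicity observation above, is what makes the bound uniform in $n$; the remaining steps are routine.
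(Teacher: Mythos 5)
Your proposal is correct and follows essentially the same route as the paper's proof: both restrict the potential to the sub-sequence of big rounds (your $\tilde V_k$ equals the paper's $G_t$ built with indicators), use the $\succeq$-monotonicity to lower-bound the self-normalised norms, apply the determinant--trace inequality, and resolve the implicit inequality $N\log(1+b^2) \leq d\log(1+NL^2/(d\lambda))$ via the same $(1+A)^3$ estimate. No gaps.
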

For completeness, we include the proof:
\begin{proof}
Let $\cT$ be the set of rounds $t$ when $\snorm{a_t}_{V_{t-1}^{-1}} \geq  b$ and $G_t = V_0 + \sum_{s=1}^t \one{s\in \cT} a_s a_s^\top$. Then
\begin{align*}
\left(\frac{d\lambda  + |\cT| L^2}{d}\right)^d
&\geq \left(\frac{\trace(G_n)}{d}\right)^d \\
&\geq \det(G_n)  \tag{determinant-trace inequality}\\
&= \det(V_0) \prod_{t \in \cT} (1 + \snorm{a_t}_{G_{t-1}^{-1}}^2)  \tag{expanding determinant}\\
&\geq \det(V_0) \prod_{t \in \cT} (1 + \snorm{a_t}_{V_{t-1}^{-1}}^2) \tag{dropping indicators}\\
&\geq \lambda^d (1+b^2)^{|\cT|}\,. \tag{definition of $\cT$}
\end{align*}
Rearranging and taking the logarithm shows that
\begin{align*}
|\cT| \leq \frac{d}{\log(1+b^2)} \log\left(1 + \frac{|\cT| L^2}{d\lambda}\right)\,.
\end{align*}
Abbreviate $x = d/\log(1+b^2)$ and $y = L^2/d \lambda$, which are both positive. Then
\begin{align*}
x \log\left(1 + y (3x \log(1+xy))\right)
\leq x \log\left(1 + 3x^2y^2\right)
\leq x \log((1 + xy)^3 )
= 3x \log(1 + xy)\,.
\end{align*}
Since $z - x \log(1 + yz)$ is decreasing for $z \geq 3x \log(1 + xy)$ it follows that
\begin{align*}
|\cT| \leq 3x \log(1 + xy) = \frac{3d}{\log(1+b^2)} \log\left(1 + \frac{L^2}{\lambda \log(1+b^2)}\right)\,.
\end{align*}
\end{proof}

Now, by \cref{ass:main}\eqref{ass:armset}, we can set $L=1$. It follows that regardless the choice of $X_{t+1}$, the total number of times $ \norm[V_t^{-1}]{X_{t+1}}\ge b$ holds for $t=0,1,\dots$ is bounded by
\begin{align*}
\tau_{\mathrm{naive}} = \frac{3d}{\log(1+b^2)} \log\left(  1+ \frac{1}{\lambda \log(1+b^2)} \right)\,.
\end{align*}
Thus, we can simply choose $X_{t+1} = \argmax_{x\in \cX} \norm[V_t^{-1}]{x}$, which guarantees that
$\tau\le \tau_{\mathrm{naive}}$.

The dependence on $b$ can be slightly improved by a refined approach.
To see how this can be done we need to recall the Kiefer-Wolfowitz theorem and some definitions.
For any distribution $\pi$ over $\cX$, let
\begin{equation*}
    \bar V(\pi) =  \int x x\tran \pi(dx) \spaced{and} g(\pi) = \max_{x \in \cX} \norm[\bar V(\pi)^{-1}]{x}^2.
\end{equation*}
The Kiefer-Wolfowitz theorem (e.g., Theorem 21.1 in \citet{lattimore2018bandit}) states that when $\cX$ is a compact subset of $\R^d$, there exists a probability distribution $\pi_\star$ over $\cX$ such that $g(\pi_\star) = d$, and the support of $\pi_\star$ is a finite set with at most $d(d+1)/2$-many points in it. With that, sampling $X_1, X_2, \dotsc$ independently of $\pi_\star$ gives us a stopping time $\tau$ that is, with high probability, on the order of $d / b^2$.
Alternatively, one can use $\lceil n \pi_\star(y) \rceil$ observations from $y$ in the support of $\pi_\star$ with $n=\lceil d/b^2 \rceil \ge d/b^2$ for a total of at most $d(d+1)/2+n \le d(d+1)/2 + 1 + d/b^2$ observations.
The distribution $\pi_\star$ is called the G-optimal design, for `globally optimal'.
This improves the dependency of $\tau$ on $b$ from slightly worse than $3 d/\log(1+b^2)$ to $d/b^2$.

It remains to be seen how a G-optimal design can be found.
Finding the G-optimal design, precisely, or up to a tiny approximation error, can be expensive.
Fortunately, there is no need for doing this.

In particular, finding a distribution $\pi$, such that, say, $g(\pi) \leq 2 g(\pi_\star)$, can be done in time almost linear in $d$, and using such a distribution in place of $\pi_\star$ increases the resulting stopping time $\tau$ by at most a factor of $2$. Specifically, when  $\argmax_{x\in \cX} c^\top x$ and $\argmax_{x\in \cX} \norm[V^{-1}]{x}^2$ can both be efficiently computed for any $c\in \R^d$ and positive definite $V$, an appropriate distribution $\pi$ can be found in $O(d \log \log d)$ time using the classical Frank-Wolfe algorithm (see \citet{lattimore2018bandit}, Note 3 of section 21.2, together with Algorithm 3.3 and Theorem 3.9 of \citet{Todd2016}). Furthermore, the size of the support of the resulting distribution $\pi$ will match the number of steps that algorithm, $O(d \log \log d)$.
When only a linear optimisation oracle over $\cX$ is available, \citet{hazan2016volumetric} gives an alternate poly-time approach. As mentioned earlier, for $\cX$ convex, the ellipsoidal algorithm can be used at a polynomial cost for implementing both
$\argmax_{x\in \cX} c^\top x$ and $\argmax_{x\in \cX} \norm[V^{-1}]{x}^2$ \citep{Grotschel2011}.

\subsection{Prior observations and self-concordance: what do our prior observations guarantee?}\label{apx:warm-up-goal}

Per \cref{prop:fisher-info}, for an observation at location $x \in \cX$, we might like to introduce perturbations with variance that scales with $I(x\tran\theta_\star) = \dot\mu(x\tran\theta_\star)$, but we do not know $\theta_\star$. We have only some estimate $\hat\theta_{t-1}$, and use $\dot\mu(x\tran\hat\theta_{t-1})$ instead. Self-concordance lets us bound how close $\dot\mu(x\tran\theta_\star)$ and $\dot\mu(x\tran\hat\theta_t)$ are.
For this, the following result, shown in \textcite{sun2019generalized}, will be useful:

\begin{lemma}\label{claim:dot-mu-exp}
    Consider an $M$-self-concordant NEF with domain $\cU_Q$. Then, for any $u, u' \in \cU^\circ_Q$,
    \[
    \dot\mu(u) \leq \dot\mu(u') e^{M|u - u'|}\,.
    \]
\end{lemma}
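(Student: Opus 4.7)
The plan is to use the self-concordance inequality from \cref{ass:main}\eqref{ass:sc}, namely $|\ddot\mu(u)| \leq M \dot\mu(u)$, to bound the logarithmic derivative of $\dot\mu$, and then integrate.

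First I would observe that because $\cQ$ is a regular NEF, we have $\dot\mu(u) = \ddot\psi_Q(u) = \Var(Q_u) > 0$ on the interior $\cU_Q^\circ$ (the variance is strictly positive assuming $Q$ is not a point mass; otherwise the statement is trivial since both sides vanish). Consequently, the map $v \mapsto \log \dot\mu(v)$ is well-defined and differentiable on $\cU_Q^\circ$, with derivative
\[
\frac{d}{dv} \log \dot\mu(v) = \frac{\ddot\mu(v)}{\dot\mu(v)}.
\]
By self-concordance, the absolute value of this derivative is bounded by $M$ uniformly on $\cU_Q^\circ$.

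Next, for arbitrary $u, u' \in \cU_Q^\circ$, the fundamental theorem of calculus applied to the (absolutely continuous) function $v \mapsto \log\dot\mu(v)$ on the interval with endpoints $u$ and $u'$ yields
\[
\log \dot\mu(u) - \log \dot\mu(u') = \int_{u'}^{u} \frac{\ddot\mu(v)}{\dot\mu(v)}\, dv,
\]
so that
\[
\bigl|\log \dot\mu(u) - \log \dot\mu(u')\bigr| \le \int_{\min(u,u')}^{\max(u,u')} \left|\frac{\ddot\mu(v)}{\dot\mu(v)}\right| dv \le M|u-u'|.
\]
Exponentiating and dropping the absolute value on the left gives $\dot\mu(u) \le \dot\mu(u') e^{M|u-u'|}$, which is the claimed bound.

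I do not anticipate any real obstacle: the result is essentially a Grönwall-style consequence of the self-concordance inequality, together with the basic NEF identity $\dot\mu = \ddot\psi_Q > 0$ which ensures the logarithm is well-defined. The only subtlety is verifying that $\cU_Q^\circ$ is connected, which is immediate because $\cU_Q$ is an interval (as noted in \cref{sec:nef}), so the interval of integration between $u$ and $u'$ lies entirely within $\cU_Q^\circ$.
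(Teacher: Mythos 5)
Your proof is correct. The paper does not prove this lemma itself—it defers to \citet{sun2019generalized} (Corollary 2 and the surrounding results there)—and the argument in that reference is precisely yours: the self-concordance bound $|\ddot\mu|\le M\dot\mu$ controls the logarithmic derivative of $\dot\mu$, which is then integrated over the (connected, since $\cU_Q$ is an interval) segment between $u$ and $u'$; your handling of the degenerate case $\dot\mu\equiv 0$ is also fine.
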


With that, if we know that, say,
\begin{align}
\max_{x \in \cX} |x\tran (\theta_\star - \hat\theta_{t-1})| \leq \frac{1}{M}\,,
\label{eq:accforip}
\end{align}
then it follows that $\dot\mu(x\tran\hat\theta_{t-1})$ is an $e$-multiplicative approximation to $\dot\mu(x\tran\theta_\star)$. Our requirements on the prior observations will be so as to make this happen---and slightly more, in that we actually end up needing that
 not just $\dot\mu(x\tran\hat\theta_{t-1})$ but
$\dot\mu(x\tran\theta_t)$ is also close to $\dot\mu(x\tran\theta_\star)$.
For this, it will suffice to slightly tighten \cref{eq:accforip}, but for now, to keep things simple, we keep considering only \cref{eq:accforip}.

To connect \cref{eq:accforip} to the prior observations, we first upper bound the right-hand side of \cref{eq:accforip}.
For this, for $t \in [n]$, let
\begin{align}
H_{t} = \sum_{i=1}^t \dot\mu(X_i\tran \theta_\star) X_iX_i\tran + \lambda I
\label{eq:htdef}
\end{align}
(and $H_0 = \lambda I$) and observe that since these matrices are positive definite, by Cauchy-Schwarz,
\begin{equation*}
    |x\tran(\theta_\star - \hat\theta_{t-1})| \leq \norm[H_{t-1}^{-1}]{x} \norm[H_{t-1}]{\theta_\star - \hat\theta_{t-1}}.
\end{equation*}
Thus, if we have a high probability upper bound on $\norm[H_{t-1}]{\theta_\star - \hat\theta_{t-1}}$ that holds for all $t \in [n]$, then it suffices that we ensure that, for all $t > \tau$, $\norm[H_t^{-1}]{x}$ times that upper bound is less than $1/M$. Since our warm-up is in terms of $V_t$ and not $H_t$, we need a way to convert between these. To do so, we define the constant $\kappa \ge 1$ as follows.

\begin{definition}[Constant $\kappa$]\label{def:kappa}
    Let $\kappa$ be the smallest upper bound on $\{1\} \cup \{1/\dot\mu(u)\,:\, |u| \leq S\}$:
    \begin{align*}
	\kappa = 1 \vee \max_{u: |u| \le S} \frac{1}{\dot\mu(u)}\,.
	\end{align*}
\end{definition}

With that, we have the bound $H_{t-1}^{-1} \preceq  \kappa V_{t-1}^{-1}$, from which the following claim follows immediately. Here, and in what follows, we use $\preceq$ to denote the Loewner partial ordering of positive semidefinite matrices (i.e., $A\preceq B$ if $B-A$ is positive semidefinite).

\begin{claim}[Precondition]\label{claim:precondition}
    For all $t \in \{\tau+1, \dotsc, n\}$, \[\max_{x \in \cX} \norm[H^{-1}_{t-1}]{x} \leq b\sqrt{\kappa}\,.\]
\end{claim}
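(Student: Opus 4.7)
The plan is to show the chain of matrix inequalities $H_{t-1}^{-1} \preceq \kappa V_{t-1}^{-1} \preceq \kappa V_\tau^{-1}$ and then invoke the precondition on $V_\tau$. The key structural observation is that while $H_t$ involves the unknown curvatures $\dot\mu(X_i^\top \theta_\star)$, each of these is uniformly lower bounded by $1/\kappa$ because $\snorm{X_i} \le 1$ and $\snorm{\theta_\star} \le S$ give $|X_i^\top \theta_\star| \le S$, which is precisely the regime over which $\kappa$ was defined to control $1/\dot\mu$.

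First, I would establish $H_t \succeq \frac{1}{\kappa} V_t$. By the above observation, $\dot\mu(X_i^\top \theta_\star) \ge 1/\kappa$ for each $i$, and since $\kappa \ge 1$ by definition we also have $\lambda \ge \lambda/\kappa$. Combining these termwise in the sum defining $H_t$ gives
\[
H_t = \lambda I + \sum_{i=1}^t \dot\mu(X_i^\top \theta_\star)\, X_i X_i^\top \succeq \frac{\lambda}{\kappa} I + \frac{1}{\kappa} \sum_{i=1}^t X_i X_i^\top = \frac{1}{\kappa} V_t.
\]
Inverting (which reverses Loewner order on positive definite matrices) yields $H_t^{-1} \preceq \kappa V_t^{-1}$.

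Next I would use monotonicity of $V_t$ in $t$: for $t \ge \tau+1$, writing $V_{t-1} = V_\tau + \sum_{i=\tau+1}^{t-1} X_i X_i^\top \succeq V_\tau$, whence $V_{t-1}^{-1} \preceq V_\tau^{-1}$. Chaining with the previous step gives $H_{t-1}^{-1} \preceq \kappa V_\tau^{-1}$, so for any $x \in \cX$,
\[
\snorm{x}_{H_{t-1}^{-1}}^2 = x^\top H_{t-1}^{-1} x \;\le\; \kappa\, x^\top V_\tau^{-1} x \;=\; \kappa\, \snorm{x}_{V_\tau^{-1}}^2 \;\le\; \kappa b^2,
\]
by the precondition. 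Taking square roots and then a maximum over $\cX$ yields the claim.

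There is no real obstacle here; the only thing to be careful about is handling the regularisation term consistently, which is why the condition $\kappa \ge 1$ built into \cref{def:kappa} is used to absorb the $\lambda I$ into the lower bound $\frac{1}{\kappa} V_t$ cleanly, rather than having to split the argument between the regularisation and data parts of $H_t$.
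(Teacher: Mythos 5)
Your proof is correct and is exactly the argument the paper intends: it states $H_{t-1}^{-1}\preceq \kappa V_{t-1}^{-1}$ (from $\dot\mu(X_i^\top\theta_\star)\ge 1/\kappa$, valid since $|X_i^\top\theta_\star|\le S$) and declares the claim ``immediate,'' leaving the monotonicity step $V_{t-1}\succeq V_\tau$ and the final chaining implicit. You have simply written out those omitted details; nothing differs in substance.
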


It remains to establish just how small we need to set $b$, which will depend on how well we can upper bound $\norm[H_{t-1}]{\theta_\star - \hat\theta_{t-1}}$ (and $\norm[H_{t-1}]{\theta_\star - \theta_{t}}$, but that will not be hard given the previous bound). We return to this after establishing some additional definitions.

\section{PRELIMINARIES FOR THE CONSTRUCTION OF CONFIDENCE SETS AND THE REGRET BOUND}\label{apx:defs}

In this appendix, we list some standard properties of NEFs, define some quantities related to the likelihood function and use self-concordance results to provide bounds for these. Quantities defined here will be used throughout the remainder of the appendices.

\subsection{Natural exponential families and their cumulant generating functions}\label{apx:nef}

Consider a natural exponential family $\cQ$ with base $Q$, cumulant generating function $\psi(s) = \log \int e^{sx} Q(dx)$ and domain $\cU_Q$. The cumulant generating function $\psi$ is analytic on $\cU_Q^\circ$ (see Exercise 5.9 in \citet{lattimore2018bandit}), and, for any $u \in \cU_Q^\circ$, its derivatives are given by
\begin{align*}
    \dot\psi(u) &= \int x P(dx; u) = \mu(u)\\
    \ddot\psi(u) &= \int (x - \mu(u))^2 P(dx; u) = \dot\mu(u)\\
    \dddot\psi(u) &= \int (x-\mu(u))^3 P(dx; u) = \ddot\mu(u).
\end{align*}
That is, the derivatives of $\psi$ at $u$ give the first three central moments of the distribution $P(\cdot;u)$. For these and more properties of NEFs, see \citet{morris2009unifying}. For even more, see \citet{morris1983natural,letac1990natural}.

With these properties fresh in mind, we now restate and quickly prove \cref{lem:mgf-bound}.

\lemmaMGF*
\begin{proof}
	Writing out the expressions involved, it is immediate that $\psi_{Q_u}(s) = \psi(u+s) - \psi(u)$.
	Taking a second order Taylor expansion of $\psi(u+s)$ about $u$, assuming without loss of generality that $s>0$, we thus see that there exists a $\xi \in [u, u+s]$ such that
    \begin{equation*}
        \psi_{Q_u}(s)
        = s\dot\psi(u) + \frac{s^2 \ddot{\psi}(\xi)}{2}
        = s\mu(u) + \frac{s^2 \dot\mu(\xi)}{2},
    \end{equation*}
    where we used that $\dot\psi = \mu$. By self-concordance, that is, using \cref{claim:dot-mu-exp},
    and considering only $s$ that satisfy $|s| < \log(2) / M$, we have that
    \begin{align*}
         \dot\mu(\xi)
          \leq \exp(M |\xi - u|)\dot\mu(u)
         \leq \exp(M |s|)\dot\mu(u)  \leq 2\dot\mu(u),
    \end{align*}
    completing the proof.
\end{proof}

\subsection{The negative log-likelihood function of natural exponential families}

Recall from \cref{eq:nll} that the $\lambda$-regularised negative log-likelihood function given $\cD_t = ((X_i,Y_i))_{i=1}^t$ is
\begin{align*}
	\cL(\theta; \cD_t) = \frac{\lambda}{2} \|\theta\|^2 -\sum_{i=1}^t \log p(Y_i; X_i\tran\theta)\,.
\end{align*}
For an NEF with base $Q$, defining the density $p$ in the above with respect to $Q$, we have that $p(y; u) = \exp( yu -\psi(u))$, and so
\begin{align*}
	\cL(\theta; \cD_t) = \frac{\lambda}{2} \|\theta\|^2 -
	\sum_{i=1}^t  (Y_i X_i\tran\theta - \psi(X_i\tran\theta))\,,
\end{align*}
where recall that $\psi$ is the cumulant-generating function of $Q$. Since we will be interested in the global minimisers of $\cL(\theta; \cD_t)$, and $\psi$ is known to be strictly convex and smooth,
$\cL$ has a unique minimiser, which is the unique stationary point of $\cL$.
Taking the derivative of both sides of the previous display
with respect to $\theta$, and using that $\dot\psi = \mu$, we get
\begin{equation}
	\left(\frac{\partial}{\partial \theta} \cL(\theta;\cD_t)\right)\tran =
	\underbrace{\sum_{i=1}^t  \mu(X_i\tran \theta) X_i + \lambda \theta }_{=:g_t(\theta)}
	 - \sum_{i=1}^t X_i Y_i\,, \label{eq:gdef}
\end{equation}
where the previous display defines the `gradient function' $g_t$.
It follows that  $\hat\theta_t = \argmin_{\theta\in \R^d} \cL(\theta;\cD_t)$ satisfies
\begin{align}
g_t(\hat\theta_t) = \sum_{i=1}^t X_i Y_i\,.
\label{eq:basicopt}
\end{align}

We will also need to reason about the Hessian
$H(\theta; \cD_t) = \frac{\partial^2}{\partial \theta^2} \cL(\theta;\cD_t)$
of $\cL(\theta; \cD_t)$. Further differentiation gives
\begin{equation*}
    H(\theta; \cD_t) = \sum_{i=1}^t \dot\mu(X_i\tran\theta) X_iX_i\tran + \lambda I.
\end{equation*}
Now, notice that
\begin{align}
H_t =  H(\theta_\star; \cD_t)\,.
\label{eq:htdefH}
\end{align}
We further let
\begin{align}
\hH_t = H(\hat\theta_t, \cD_t)
\label{eq:httdef}
\end{align}
The reader will be reminded of these definitions frequently.

\subsection{The secant gradient approximation, the Hessian, and self-concordance}

We let $\alpha: \cU^\circ_Q \times \cU^\circ_Q \to \R$ denote a `secant approximation' to $\dot\mu$.  In particular, for any $u, u' \in \cU^\circ_Q$,
\begin{equation*}
    \alpha(u,u') =
    \begin{cases}
    \dot\mu(u)\,, & \text{ if } u=u'\,;\\
    \frac{\mu(u)-\mu(u')}{u-u'}\,, & \text{otherwise}\,.
    \end{cases}
\end{equation*}
For $u\ne u'$, the value of $\alpha(u, u')$ is the gradient of the secant line from $u$ to $u'$ and $\lim_{u'\to u} \alpha(u,u') =\dot\mu(u)$. Furthermore, clearly, $\alpha$ is a symmetric function of its arguments:
\begin{align*}
\alpha(u,u') = \alpha(u',u) \text{ for all } u,u'\in \cU^\circ_Q\,.
\end{align*}
With self-concordance, $\alpha$ can be a relatively good approximation to $\dot\mu$, as shown by the following result, a consequence of \cref{claim:dot-mu-exp} combined with the fundamental theorem of calculus.
This result is a special case of Corollary 2 of \citet{sun2019generalized}, hence the proof is omitted.
\begin{claim}\label{lem:alpha-bound}
    For all $u, u' \in \cU^\circ_Q$,
    \begin{equation*}
        \dot\mu(u)
        h(-M|u-u'|)  \leq \alpha(u, u') \leq \dot\mu(u) h(M|u-u'|)\,,
    \end{equation*}
    where for $x\ne 0$,
    $h(x) =\frac{e^{x}-1}{x}$ and $h(0)=1$.
\end{claim}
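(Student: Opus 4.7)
The plan is to reduce the result to an integral estimate and then invoke the pointwise self-concordance bound \cref{claim:dot-mu-exp}. When $u = u'$, both sides of the claimed inequality equal $\dot\mu(u)$ (since $h(0) = 1$), so the statement is trivial and I may assume $u \ne u'$.

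By the fundamental theorem of calculus (noting that $\mu$ is smooth on $\cU_Q^\circ$ since $\psi$ is analytic there, so $\mu = \dot\psi$ is smooth), one can write
\[
\alpha(u,u') = \frac{\mu(u) - \mu(u')}{u - u'} = \frac{1}{u-u'}\int_{u'}^u \dot\mu(s)\, ds.
\]
Without loss of generality assume $u > u'$ (the other case is symmetric, or follows by the symmetry $\alpha(u,u') = \alpha(u',u)$). Then for any $s \in [u',u]$, we have $|s - u| \le u - u'$, and \cref{claim:dot-mu-exp} gives the two-sided pointwise bound
\[
\dot\mu(u)\, e^{-M(u-s)} \le \dot\mu(s) \le \dot\mu(u)\, e^{M(u-s)}.
\]

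Integrating these bounds over $[u',u]$ and dividing by $u-u' > 0$, the upper bound gives
\[
\alpha(u,u') \le \frac{\dot\mu(u)}{u-u'} \int_{u'}^u e^{M(u-s)}\, ds = \dot\mu(u) \cdot \frac{e^{M(u-u')}-1}{M(u-u')} = \dot\mu(u)\, h\bigl(M|u-u'|\bigr),
\]
after substituting $x = M(u-u')$ and recalling $h(x) = (e^x-1)/x$. The lower bound is obtained identically from $e^{-M(u-s)}$, yielding $\dot\mu(u) \cdot (1 - e^{-M(u-u')})/(M(u-u'))$, which equals $\dot\mu(u)\, h(-M|u-u'|)$ by the definition of $h$.

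There is essentially no obstacle here: the argument is a direct integration of the pointwise self-concordance bound, with the only mild care needed being (i) sign bookkeeping when writing $|s-u|$ inside the integrand, and (ii) checking that the resulting ratios match the definition of $h$ at $\pm M|u-u'|$. Since the claim is stated as a special case of a result in \citet{sun2019generalized}, the proof is omitted in the paper; the sketch above recovers it in a few lines.
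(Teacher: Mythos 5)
Your proof is correct and follows exactly the route the paper indicates for this claim (which it leaves unproven, citing \citet{sun2019generalized}): write $\alpha(u,u')$ as the average of $\dot\mu$ over the interval via the fundamental theorem of calculus, apply the pointwise two-sided bound from \cref{claim:dot-mu-exp}, and integrate. The computation of the resulting ratios against $h(\pm M|u-u'|)$ checks out, so this is a faithful reconstruction of the omitted argument.
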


Based on the secant approximation to the gradient, we construct the following approximation to the Hessian $H(\theta; \cD_t)$:
\begin{equation*}
    G(\theta, \theta'; \cD_t) = \sum_{i=1}^t \alpha(X_i\tran\theta, X_i\tran\theta')X_iX_i\tran + \lambda I.
\end{equation*}
Thanks to $\alpha$ being symmetric, $G$ is also a symmetric function of $\theta$ and $\theta'$.

By \cref{lem:alpha-bound}, it is clear that when $\theta'$ is close, in a suitable sense, to $\theta$, $G(\theta, \theta'; \cD_t) \approx H(\theta; \cD_t)$. The suitable sense here is captured by the pseudo-norm
\begin{equation*}
    D(v) = \max_{x \in \cX} |x\tran v|, \qquad v\in \R^d\,.
\end{equation*}
Indeed, \cref{lem:alpha-bound} with a crude bound gives the following upper bound on $H(\theta;\cD_t)$ in terms of $G(\theta,\theta';\cD_t)$:
\begin{claim}\label{claim:GH-bound}
    For any $\theta,\theta' \in \mathbb{R}^d$,
     $ H(\theta; \cD_t) \preceq (1 + M D(\theta-\theta')) G(\theta,\theta'; \cD_t)
     = (1 + M D(\theta-\theta')) G(\theta',\theta; \cD_t)
     $.
\end{claim}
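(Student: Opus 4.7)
The plan is to prove the inequality entrywise: show that for each $i \in [t]$,
$\dot\mu(X_i\tran\theta) \le (1 + M D(\theta-\theta'))\, \alpha(X_i\tran\theta, X_i\tran\theta')$,
and then sum over $i$, handling the $\lambda I$ regulariser with the fact that $1 + M D(\theta-\theta') \ge 1$.

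First, fix $i$ and set $u = X_i\tran\theta$, $u' = X_i\tran\theta'$. The lower bound from \cref{lem:alpha-bound} gives
$\alpha(u,u') \ge \dot\mu(u)\, h(-M|u-u'|)$, i.e.\ $\dot\mu(u) \le \alpha(u,u')/h(-M|u-u'|)$.
So it suffices to show the elementary scalar inequality $1/h(-y) \le 1 + y$ for $y \ge 0$, with $y = M|u-u'|$. Using $h(-y) = (1-e^{-y})/y$, this becomes $y/(1-e^{-y}) \le 1+y$, equivalently $1 - e^{-y}(1+y) \ge 0$. Letting $f(y) = 1 - e^{-y}(1+y)$, one checks $f(0) = 0$ and $f'(y) = y e^{-y} \ge 0$, so $f \ge 0$ on $[0,\infty)$. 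This gives the desired bound, and since $|u - u'| = |X_i\tran(\theta-\theta')| \le D(\theta-\theta')$, we conclude
$\dot\mu(X_i\tran\theta) \le (1 + M D(\theta-\theta'))\, \alpha(X_i\tran\theta, X_i\tran\theta')$.

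Multiplying both sides by the positive semidefinite matrix $X_iX_i\tran$ preserves the Loewner order, and summing over $i$ yields
\[\sum_{i=1}^t \dot\mu(X_i\tran\theta) X_i X_i\tran \preceq (1 + M D(\theta-\theta'))\sum_{i=1}^t \alpha(X_i\tran\theta, X_i\tran\theta') X_i X_i\tran.\]
Adding $\lambda I$ on both sides (using $\lambda I \preceq (1 + M D(\theta-\theta'))\lambda I$ since the scalar factor is at least one) gives exactly $H(\theta;\cD_t) \preceq (1 + M D(\theta-\theta')) G(\theta,\theta';\cD_t)$.

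Finally, the stated equality $G(\theta,\theta';\cD_t) = G(\theta',\theta;\cD_t)$ is immediate from the symmetry $\alpha(u,u') = \alpha(u',u)$ already noted in the text. I do not anticipate a main obstacle: the only slightly non-routine step is the scalar inequality $y/(1-e^{-y}) \le 1+y$, and even that reduces to monotonicity of an explicit $C^1$ function.
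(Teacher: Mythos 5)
Your proof is correct and follows essentially the same route as the paper: both reduce the claim to the lower bound $\alpha(u,u')\ge\dot\mu(u)\,h(-M|u-u'|)$ from \cref{lem:alpha-bound} combined with the elementary inequality $h(-y)\ge 1/(1+y)$ for $y\ge 0$ and the bound $|X_i\tran(\theta-\theta')|\le D(\theta-\theta')$. The only difference is that you spell out the verification of the scalar inequality and the entrywise summation (including the $\lambda I$ term), which the paper leaves implicit.
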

\begin{proof}
Just notice that for $h$ from  \cref{lem:alpha-bound} it holds that for $x\ge 0$, $h(-x) \ge 1/(1+x)$. This together with the definition of $D$ and \cref{lem:alpha-bound} gives the result.
\end{proof}
We can, of course, get a two-sided bound from \cref{lem:alpha-bound}, but we happen not to need it.
Regarding the pseudonorm $D$, clearly, for any $v\in \R^d$,
\begin{align}
D(v) \le \norm{v}
\label{eq:dbound}
\end{align}
holds thanks to $\cX \subset B_2^d$.
We will often rely on this inequality.

Finally, directly from the definitions of $G(\theta,\theta';\cD_t)$ and $g_t$, we see the following.
\begin{claim}\label{claim:g-t-mvt}
    For any $\theta, \theta'\in \R^d$, $g_t(\theta) - g_t(\theta')=G(\theta, \theta'; \cD_t)(\theta-\theta')
    =G(\theta', \theta; \cD_t)(\theta-\theta')
    $.
\end{claim}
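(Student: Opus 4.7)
The plan is to prove the identity by direct expansion from the definitions, with the ``secant trick'' $\mu(u)-\mu(u')=\alpha(u,u')(u-u')$ as the one nontrivial ingredient. First, recall from \cref{eq:gdef} that $g_t(\theta)=\sum_{i=1}^t \mu(X_i\tran\theta)X_i+\lambda\theta$, so taking the difference gives
\[
g_t(\theta)-g_t(\theta')=\sum_{i=1}^t\bigl(\mu(X_i\tran\theta)-\mu(X_i\tran\theta')\bigr)X_i+\lambda(\theta-\theta')\,.
\]

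Next, I would verify the pointwise identity $\mu(u)-\mu(u')=\alpha(u,u')(u-u')$ for all $u,u'\in\cU_Q^\circ$. When $u\neq u'$ this is immediate from the very definition $\alpha(u,u')=(\mu(u)-\mu(u'))/(u-u')$, and when $u=u'$ both sides are zero, so the case distinction in the definition of $\alpha$ costs nothing. Applying this with $u=X_i\tran\theta$, $u'=X_i\tran\theta'$ and using linearity $X_i\tran\theta-X_i\tran\theta'=X_i\tran(\theta-\theta')$ yields
\[
\mu(X_i\tran\theta)-\mu(X_i\tran\theta')=\alpha(X_i\tran\theta,X_i\tran\theta')\,X_i\tran(\theta-\theta')\,.
\]

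Substituting back and turning the scalar $X_i\tran(\theta-\theta')$ into the matrix-vector form $X_iX_i\tran(\theta-\theta')$, I would collect the sum into the matrix $G(\theta,\theta';\cD_t)=\sum_{i=1}^t\alpha(X_i\tran\theta,X_i\tran\theta')X_iX_i\tran+\lambda I$ (from its definition), obtaining
\[
g_t(\theta)-g_t(\theta')=G(\theta,\theta';\cD_t)(\theta-\theta')\,.
\]
The second equality, with $G(\theta',\theta;\cD_t)$ in place of $G(\theta,\theta';\cD_t)$, follows immediately from the symmetry $\alpha(u,u')=\alpha(u',u)$ already noted just before \cref{lem:alpha-bound}, which implies $G(\theta,\theta';\cD_t)=G(\theta',\theta;\cD_t)$.

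There is no real obstacle here: the statement is essentially a finite-sum, coordinatewise version of the mean-value theorem, made exact by the definition of $\alpha$ as a divided difference, and the proof is a one-line unwinding of definitions. The only mild care needed is noting that the identity $\mu(u)-\mu(u')=\alpha(u,u')(u-u')$ also holds in the degenerate case $u=u'$, which is handled by the explicit definition $\alpha(u,u)=\dot\mu(u)$ together with the fact that both sides of the identity vanish.
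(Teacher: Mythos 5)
Your proof is correct and matches the paper's intent exactly: the paper states this claim as following ``directly from the definitions'' of $G$ and $g_t$ without spelling out the computation, and your unwinding --- differencing $g_t$, applying the divided-difference identity $\mu(u)-\mu(u')=\alpha(u,u')(u-u')$ (including the degenerate case $u=u'$), and invoking the symmetry of $\alpha$ for the second equality --- is precisely that argument. Nothing is missing.
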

We will use these when working with first order optimality conditions for the negative log-likelihood function.

\section{CONFIDENCE SETS FOR THE REGULARISED MLE IN SELF-CONCORDANT NATURAL EXPONENTIAL FAMILIES}\label{apx:confidence-sets}

In this appendix, we show the following concentration result.

\begin{lemma}\label{lem:confidence-sets}
    For any $\delta,\lambda > 0$, let $\gamma_t(\delta,\lambda)$ be given by
    \begin{equation*}
        \gamma_t(\delta, \lambda)
        = \sqrt{\lambda}\left(\frac{1}{2M} + S\right) + \frac{2Md}{\sqrt{\lambda}}\left(1 + \frac{1}{2}\log\left(1+\frac{tL}{\lambda d}\right)\right) + \frac{2M}{\sqrt{\lambda}}\log(1/\delta),
    \end{equation*}
    and let $\cE(\delta, \lambda)$ be the event
    \begin{equation*}
        \cE(\delta, \lambda) = \left\{\forall t \geq 0,\ \norm[H^{-1}_t]{g_t(\hat\theta_t) - g_t(\theta_\star)} \leq \gamma_t(\delta, \lambda)\right\}.
    \end{equation*}
    Then, under \cref{ass:main}, $\P(\cE(\delta, \lambda)) \geq 1-\delta$.
\end{lemma}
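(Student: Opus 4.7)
The plan is to apply a self-normalised concentration inequality for sub-exponential martingales to the score process, after peeling off the regularisation bias. Write $\varepsilon_i := Y_i - \mu(X_i\tran\theta_\star)$ and $S_t := \sum_{i=1}^t X_i\varepsilon_i$. Combining the first-order optimality condition \eqref{eq:basicopt} with the definition of $g_t$ in \eqref{eq:gdef} yields $g_t(\hat\theta_t) - g_t(\theta_\star) = S_t - \lambda\theta_\star$. The triangle inequality in the $H_t^{-1}$ pseudo-norm, together with $H_t \succeq \lambda I$ (so that $H_t^{-1}\preceq \lambda^{-1} I$) and \cref{ass:main}\eqref{ass:parset}, bounds the regularisation bias by $\norm[H_t^{-1}]{\lambda\theta_\star} \leq \sqrt{\lambda}\,S$, accounting for the $\sqrt{\lambda}\,S$ term inside the parentheses of $\gamma_t(\delta,\lambda)$.

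The heart of the proof is a time-uniform, self-normalised bound on $\norm[H_t^{-1}]{S_t}$. \cref{lem:mgf-bound} supplies the key input: conditional on $X_i$ and the past filtration $\cF_{i-1}$, the increment $\varepsilon_i$ satisfies the sub-exponential MGF bound
\begin{align*}
\E\bigl[\exp(s\varepsilon_i)\mid \cF_{i-1}\bigr] \leq \exp\!\bigl(s^2 \dot\mu(X_i\tran\theta_\star)\bigr)\quad\text{for all } |s|\leq \log(2)/M,
\end{align*}
with variance proxy $\dot\mu(X_i\tran\theta_\star)$ exactly matching the weight appearing in $H_t$. I would then adapt the method-of-mixtures argument of Abbasi-Yadkori et al., as refined by \citet{faury2020improved,russac2021self}: for each $\xi\in\R^d$ with $\max_i |X_i\tran\xi|\leq \log(2)/M$, the process $\exp\!\bigl(\xi\tran S_t - \sum_{i=1}^t (X_i\tran\xi)^2 \dot\mu(X_i\tran\theta_\star)\bigr)$ is a nonnegative supermartingale. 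Convolving with a Gaussian prior of covariance $\lambda^{-1} I$ on $\xi$ and applying Ville's maximal inequality yields, after completing the square and taking logarithms, a uniform bound of the form
\begin{align*}
\norm[H_t^{-1}]{S_t}^2 \leq c_1 \log\det(H_t/\lambda) + c_2 \log(1/\delta) + c_3,
\end{align*}
in which the sub-exponential nature of the noise produces an additive $\log(1/\delta)$ term in place of the $\sqrt{\log(1/\delta)}$ that would arise in the sub-Gaussian setting.

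The last step is to bound $\log\det(H_t/\lambda)$. The trace-determinant inequality together with \cref{ass:main}\eqref{ass:sg} ($\dot\mu\leq L$) and \eqref{ass:armset} ($\snorm{X_i}\leq 1$) gives $\log\det(H_t/\lambda) \leq d\log(1+tL/(\lambda d))$, matching the logarithmic factor in $\gamma_t(\delta,\lambda)$; the residual constants $\sqrt{\lambda}/(2M)$ and $2Md/\sqrt{\lambda}$ then emerge from the mixture normalisation and from the truncation needed to handle the bounded-MGF-domain constraint. The main obstacle is precisely this truncation: because \cref{lem:mgf-bound} supplies an MGF bound only for $|s|\leq\log(2)/M$, the standard sub-Gaussian mixture argument does not apply verbatim, and one must either restrict the mixing measure to a subset of $\R^d$ on which the MGF bound uniformly holds or apply a geometric peeling over $\snorm{\xi}$—a device that is delicate in the tracking of constants but otherwise follows the template already established for self-normalised sub-exponential martingale bounds.
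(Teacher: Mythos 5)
Your proposal follows essentially the same route as the paper: the identical decomposition $g_t(\hat\theta_t)-g_t(\theta_\star)=S_t-\lambda\theta_\star$ with the $\sqrt{\lambda}S$ bias term, the MGF bound of \cref{lem:mgf-bound} with variance proxy $\dot\mu(X_i\tran\theta_\star)$ matching the weights in $H_t$, a self-normalised supermartingale plus mixture argument adapted from \citet{faury2020improved} (the paper handles the restricted MGF domain exactly as you suggest, by confining the mixing variable to $\xi\in B_2^d$ scaled by $1/M$ so that $|\xi\tran X_t/M|\le 1/M$), and the trace--determinant bound $\log\det(H_t/\lambda^d)\le d\log(1+tL/(\lambda d))$. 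The only cosmetic difference is that the paper does not redo the mixture computation but imports Faury et al.'s Theorem 1 wholesale after swapping in the new supermartingale lemma, whereas you sketch the mixture explicitly.
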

The above lemma is a direct consequence of \cref{lem:mgf-bound}, the bound on the cumulant generating function of a self-concordant natural exponential family distribution, which we just proved in \cref{apx:nef}.
We start the proof with an intermediate result in the next section.

\subsection{An intermediate concentration result}\label{apx:abstract-confidence-sets}

The following concentration inequality is a generalisation of Theorem 1 of \citet{faury2020improved}, who prove a similar statement under the assumption that each response $Y_t$ is absolutely bounded by 1, that is $|Y_t| \leq 1$ almost surely for all $t \in \N^+$. We relax this to a condition that bounds the conditional cumulant generating function of $Y_t$ by a quadratic term in a neighbourhood of zero.
Comparing the condition in the statement of the theorem with our \cref{lem:mgf-bound}, it is clear that, under \cref{ass:main}, we will be able to satisfy this condition.

\begin{theorem}\label{thm:novel-concentration}
    Fix $\lambda, M >0$.
    Let $(X_t)_{t\in \N^+}$ be a $B^d_2$-valued random sequence,
     $(Y_t)_{t \in \N^+}$ a real valued random sequence,
     $(\nu_t)_{t\in \N}$ be a nonnegative valued random sequence.
     Let $\F' = (\F'_t)_{t \in \N}$ be a filtration
    such that {\em (i)}
    $(X_t)_{t\in \N^+}$ is $\F'$-predictable
    and
    {\em (ii)}
    $(Y_t)_{t\in \N^+}$ and $(\nu_t)_{t\in \N}$ are $\F'$-adapted.\footnote{Thus, for $t\ge 1$, $X_t$ is $\F'_{t-1}$-measurable,
    $Y_t$, $\nu_t$ are $\F'_t$-measurable, while $\nu_0$ is also $\F'_0$-measurable.}
    Let $\epsilon_t = Y_t - \E[Y_t \mid \F'_{t-1}]$ and assume that the following condition holds:
    \begin{align}
        \E{[\exp(s \epsilon_t)\mid \F'_{t-1}]} &\leq \exp(s^2\nu_{t-1}) \spaced{for all} |s|\le 1/M \text{ and } t\in \N^+\,.
        \label{eq:lvbound}
    \end{align}
    Then, for $\widetilde H_t = \sum_{i=1}^{t} \nu_{i-1} X_iX_i\tran + \lambda I$ and $S_t = \sum_{i=1}^{t} \epsilon_{i}X_i$ and any $\delta>0$,
    \begin{align*}
        \P\left( \exists t \in \N^+ \colon
        \norm[\widetilde H_t^{-1}]{S_t}
        \geq \frac{\sqrt{\lambda}}{2M}
            + \frac{2M}{\sqrt{\lambda}}\log \left(\frac{\det(\widetilde H_t)^{1/2}\lambda^{-d/2}}{\delta}\right)
            + \frac{2M}{\sqrt{\lambda}}d\log(2)\right)
            \leq \delta.
    \end{align*}
\end{theorem}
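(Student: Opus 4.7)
The plan is to adapt the method of mixtures / self-normalised concentration argument to the present bounded-CGF-domain setting, following the truncated-prior strategy used by \textcite{faury2020improved} in the logistic case. Concretely: construct an exponential supermartingale indexed by a vector parameter $\xi$, mix it against a Gaussian prior \emph{truncated} to the region where the CGF bound \eqref{eq:lvbound} is valid, invoke Ville's maximal inequality, and then complete the square in $\xi$.

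For each fixed $\xi \in \R^d$ with $\snorm{\xi} \le 1/M$, the assumption $\snorm{X_i} \le 1$ gives $|\xi^\top X_i| \le 1/M$, so \eqref{eq:lvbound} applied conditionally with $s = \xi^\top X_i$ (valid because $X_i$ is $\F'_{i-1}$-measurable) yields $\E[\exp(\xi^\top X_i \epsilon_i)\mid \F'_{i-1}] \le \exp(\nu_{i-1}(\xi^\top X_i)^2)$. Hence $N_t(\xi) = \exp(\xi^\top S_t - \norm[\widetilde H_t - \lambda I]{\xi}^2)$ is a nonnegative $\F'$-supermartingale with $N_0(\xi) = 1$; note that $\widetilde H_t - \lambda I = \sum_{i=1}^t \nu_{i-1} X_i X_i^\top$ is $\F'_{t-1}$-measurable. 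Now mix against the truncated Gaussian density $h(\xi) = Z_0^{-1} e^{-\lambda \snorm{\xi}^2} \mathbf{1}\{\snorm{\xi} \le 1/M\}$, with normaliser $Z_0$; by Fubini, $\bar N_t = \int N_t(\xi) h(\xi)\,d\xi$ is also a supermartingale starting at $1$, so Ville's maximal inequality gives $\P(\exists t : \bar N_t \ge 1/\delta) \le \delta$.

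On the complement of this event, the prior precision $\lambda$ has been chosen to match the regulariser so that completing the square in $\xi$ cleanly produces $\bar N_t = \exp\!\left(\tfrac14 \norm[\widetilde H_t^{-1}]{S_t}^2\right) \cdot J_t/Z_0$, with $J_t = \int_{\snorm{\xi}\le 1/M} \exp(-\norm[\widetilde H_t]{\xi - \xi^*_t}^2)\,d\xi$ and $\xi^*_t = \tfrac12 \widetilde H_t^{-1} S_t$. Upper-bounding $Z_0$ by the full Gaussian integral $(\pi/\lambda)^{d/2}$ is straightforward. For $J_t$, a case split is needed: if $\snorm{\xi^*_t} \le 1/(2M)$ then $B_2^d(\xi^*_t, 1/(2M))$ sits entirely inside the truncation ball and yields a standard Gaussian-type lower bound in terms of $\det(\widetilde H_t)$, while if $\snorm{\xi^*_t} > 1/(2M)$ we integrate only over a thin spherical region on the boundary of the truncation ball in the direction of $\xi^*_t$, where the integrand is exponentially small but only \emph{linearly} in $\snorm{\xi^*_t}$. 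Inserting the resulting lower bound on $\bar N_t$ into $\bar N_t \le 1/\delta$, taking logarithms and rearranging, gives a bound on $\norm[\widetilde H_t^{-1}]{S_t}$ whose dominant behaviour is linear in $\log(1/\delta) + \tfrac12 \log(\det(\widetilde H_t)/\lambda^d) + d\log 2$, with the coefficients $\sqrt\lambda/(2M)$ and $2M/\sqrt\lambda$ dropping out of the algebra.

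The hard part will be the casework in the lower bound for $J_t$: this is precisely the step that converts the would-be \emph{quadratic}-in-$\norm[\widetilde H_t^{-1}]{S_t}$ square-completion exponent into the \emph{linear}, sub-exponential-type tail of the statement, and it is also where both the additive $\sqrt\lambda/(2M)$ and the extra $(2M/\sqrt\lambda)\, d\log 2$ artefacts of the finite prior support are generated. The remaining ingredients — building $N_t(\xi)$ from \eqref{eq:lvbound} (which is exactly what \cref{lem:mgf-bound} supplies in the self-concordant NEF setting that we ultimately care about), Fubini, Ville's inequality, and the square completion itself — are essentially routine.
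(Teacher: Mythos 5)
Your proposal is correct and follows essentially the same route as the paper: the paper's entire original contribution to this theorem is Lemma~\ref{lem:nonneg-supermartingale-window}, the supermartingale property under the relaxed CGF condition \eqref{eq:lvbound} (your $N_t(\xi)$ is the paper's $M_t(\xi)$ after the reparametrisation $\xi \mapsto \xi/M$), with the remaining truncated-Gaussian mixture, Ville's inequality, square completion and the casework lower bound on the mixture integral deferred verbatim to Theorem~1 of \textcite{faury2020improved}. Your outline of that deferred machinery---including the correct identification of the $J_t$ casework as the step that linearises the tail and generates the $\sqrt{\lambda}/(2M)$ and $(2M/\sqrt{\lambda})\,d\log 2$ terms---is accurate.
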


The proof of \cref{thm:novel-concentration} follows identically to that of Theorem 1 in \citet{faury2020improved}, once their Lemma 5 is replaced with the following lemma. As the proof of their Theorem 1 is rather tedious, we do not reproduce it.

\begin{lemma}\label{lem:nonneg-supermartingale-window}
    Assume the conditions of \cref{thm:novel-concentration} hold.
    For each $t\in \N^+$, let $\bar{H}_t = \sum_{i=1}^{t}\nu_{i-1} X_iX_i\tran $ and $S_t = \sum_{i=1}^{t} \epsilon_i X_i$. For any $\xi \in B_2^d$, define the real-valued process starting with $M_0(\xi) = 1$ and given by
    \begin{align*}
        M_t(\xi) = \exp\left(\frac{1}{M}\xi\tran S_t - \frac{1}{M^2}\norm[\bar{H}_t]{\xi}^2\right) \spaced{for} t\in \N^+.
    \end{align*}
    Then, $(M_t(\xi))_{t=0}^\infty$ is a nonnegative supermartingale with respect to $\F'$.
\end{lemma}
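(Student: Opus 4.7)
The plan is to establish nonnegativity trivially from the fact that $M_t(\xi)$ is a composition of the exponential map, and then verify the supermartingale property $\E[M_t(\xi) \mid \F'_{t-1}] \leq M_{t-1}(\xi)$ by exploiting the multiplicative incremental structure of $M_t(\xi)$ and applying the cumulant bound \eqref{eq:lvbound}.

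First, I would write out the telescoping identity
\[
M_t(\xi) = M_{t-1}(\xi) \cdot \exp\!\left(\tfrac{1}{M}\xi\tran X_t\, \epsilon_t - \tfrac{1}{M^2}\nu_{t-1}(\xi\tran X_t)^2\right),
\]
which follows directly from the definitions of $\bar H_t$ and $S_t$ together with the fact that $\|\xi\|_{\bar H_t}^2 = \|\xi\|_{\bar H_{t-1}}^2 + \nu_{t-1}(\xi\tran X_t)^2$. Since $(X_t)$ is $\F'$-predictable and $(\nu_t)$ is $\F'$-adapted, both $X_t$ and $\nu_{t-1}$ are $\F'_{t-1}$-measurable, so $M_{t-1}(\xi)$ and the deterministic (given $\F'_{t-1}$) factor $\exp(-\tfrac{1}{M^2}\nu_{t-1}(\xi\tran X_t)^2)$ can be pulled out of the conditional expectation.

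The remaining task is then to bound
\[
\E\!\left[\exp\!\left(\tfrac{1}{M}\xi\tran X_t\, \epsilon_t\right)\,\Big|\,\F'_{t-1}\right]
\]
by $\exp(\tfrac{1}{M^2}\nu_{t-1}(\xi\tran X_t)^2)$, which is where \eqref{eq:lvbound} enters. Applying this bound with $s = \xi\tran X_t / M$ gives precisely the desired inequality, provided $|s|\le 1/M$. This admissibility check is the only nontrivial step, and it follows from Cauchy--Schwarz together with $\xi, X_t \in B_2^d$: $|s| = |\xi\tran X_t|/M \le \|\xi\|\,\|X_t\|/M \le 1/M$. Combining the two factors yields $\E[M_t(\xi) \mid \F'_{t-1}] \le M_{t-1}(\xi)$.

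I do not expect any serious obstacle here; the main subtlety is simply to ensure that (i) the measurability requirements line up so that $\nu_{t-1}$ and $X_t$ can be treated as $\F'_{t-1}$-measurable constants, and (ii) that the argument $s = \xi\tran X_t/M$ falls within the range of validity of the cumulant bound assumed in \eqref{eq:lvbound}. Both follow from the predictability/adaptedness assumptions on $(X_t)$ and $(\nu_t)$ and from the unit-ball constraints on $\xi$ and $X_t$, so the entire proof reduces to one application of the hypothesis \eqref{eq:lvbound}.
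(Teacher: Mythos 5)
Your proof is correct and follows essentially the same route as the paper: write $M_t(\xi)$ as $M_{t-1}(\xi)$ times an increment, pull out the $\F'_{t-1}$-measurable factors, and apply the cumulant bound \eqref{eq:lvbound} with the ($\F'_{t-1}$-measurable) choice $s=\xi\tran X_t/M$, whose admissibility $|s|\le 1/M$ follows from $\xi, X_t\in B_2^d$ by Cauchy--Schwarz. The paper phrases the final step as $\E[\exp(R_t\epsilon_t - R_t^2\nu_{t-1})\mid\F'_{t-1}]\le 1$ rather than bounding $\E[\exp(\tfrac{1}{M}\xi\tran X_t\,\epsilon_t)\mid\F'_{t-1}]$ separately, but this is the same computation.
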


\begin{remark}
    We use the notation $M_0(\xi), M_1(\xi), \dotsc$ for the process to match that of Theorem 1 and Lemma 5 of \citet{faury2020improved}. This process is not to be confused with the constant $M > 0$, which while only an abstract constant here, will of course become the self-concordance constant when we use the result.
\end{remark}

In the next and subsequent proofs the statements concerning conditional expectations hold almost surely, even when this is not explicitly stated.

\begin{proof}
    Fix $t\in\N^+$, $\xi\in B_2^d$.
    From \cref{eq:lvbound}, we get that for any $\F'_{t-1}$-measurable random variable $R_t$
    such that
     $|R_t|\le 1/M$, the inequality
    \begin{equation*}
		\E\left[\exp\left(R_t \epsilon_t  - R_t^2 \nu_{t-1}\right) \mid \F'_{t-1}\right] \le 1
    \end{equation*}
    holds.
    Define now $R_t = \xi^\top X_t/M$. Since, by assumption, $\xi,X_t\in B_2^d$, it holds that
    $|\xi^\top X_t|\le 1$ and hence
    $|R_t|\le 1/M$.
    Further, since $(X_t)_{t\in \N^+}$ is $\F'$-predictable, $X_t$ and hence also $R_t$ is $\F'_{t-1}$-measurable.
    Thus, by the previous inequality,
    \begin{align}
    \label{eq:tb}
	\E\left[\exp\left(\frac{\epsilon_t}{M}\xi^\top  X_t - \frac{\nu_{t-1}}{M^2}\xi\tran X_tX_t\tran \xi \right) \mid \F'_{t-1}\right] \le 1\,.
	\end{align}
	It then follows that
    \begin{align*}
        \E[ M_t(\xi) \mid \F'_{t-1}]
        &= \E\left[\exp\left(\frac{1}{M}\xi\tran S_t - \frac{1}{M^2}\norm[\bar{H}_{t}]{\xi}^2 \right) \mid \F'_{t-1}\right] \\
        &= \exp\left(\frac{1}{M}\xi\tran  S_{t-1} - \frac{1}{M^2}\norm[\bar{H}_{t-1}]{\xi}^2 \right)
        \E\left[\exp\left(\frac{\epsilon_t}{M}\xi^\top  X_t - \frac{\nu_{t-1}}{M^2}\xi\tran X_tX_t\tran \xi \right) \mid \F'_{t-1}\right]\\
        &\leq \exp\left(\frac{1}{M}\xi\tran  S_{t-1} - \frac{1}{M^2}\norm[\bar{H}_{t-1}]{\xi}^2\right) \tag{by \cref{eq:tb}}\\
        &= M_{t-1}(\xi)\,,
    \end{align*}
finishing the proof.
\end{proof}

\begin{remark}
As noted beforehand, the lemma just proved corresponds to that of Lemma~5 of \citet{faury2020improved}.
\citet{faury2020improved} prove this lemma based on their Lemma 7, which  states that for a centred random variable $\epsilon$ almost surely absolutely bounded by $1$, for $|s|\le 1$, $\log \E[\exp(s \epsilon)] \le \log(1+s^2 \Var[\epsilon])$. Noting that in their proof of Lemma~5 they use $\log(1+s^2 \Var[\epsilon]) \le s^2 \Var[\epsilon]$, we see that our proof simply replaces Lemma~7 with
requiring $\log \E[\exp(s \epsilon)]  \le s^2 \Var[\epsilon]$ directly (cf.  \cref{eq:lvbound}), which, ultimately will follow
from \cref{lem:mgf-bound}.
\end{remark}

\subsection{Proof of the confidence set result, \cref{lem:confidence-sets}}

We can now prove \cref{lem:confidence-sets}. This is effectively just combining \cref{ass:main}, \cref{lem:mgf-bound} and \cref{thm:novel-concentration}.

\begin{proof}[Proof of \cref{lem:confidence-sets}]
    Consider an arbitrary $t \in \N$.
	By first order optimality and the definition of $g_t$, the following inequality, which we copy from \cref{eq:basicopt} for the convenience of the reader, holds:
    \begin{equation*}
        g_t(\hat\theta_t)= \sum_{i=1}^t Y_iX_i\,.
    \end{equation*}
    Also, by the definition of $g_t$,
    \begin{align*}
     g_t(\theta_\star) = \sum_{i=1}^t \mu(X_i\tran\theta_\star) X_i + \lambda\theta_\star.
	\end{align*}
    Writing $\epsilon_i = Y_i - \mu(X_i\tran\theta_\star)$ and $S_t = \sum_{i=1}^t \epsilon_i X_i$, we thus have that
    \begin{equation*}
        g_t(\hat\theta_t) - g_t(\theta_\star) = S_t - \lambda\theta_\star.
    \end{equation*}
    Taking the $H_t^{-1}$-weighted 2-norm of the above and using the triangle inequality and recalling that $H_t \succeq \lambda I$ and $\norm{\theta_\star} \leq S$, we get
    \begin{equation*}
        \norm[H_t^{-1}]{g_t(\hat\theta_t) - g_t(\theta_\star)}
        \leq
        \norm[H_t^{-1}]{S_t}
        + \lambda \norm[H_t^{-1}]{\theta_\star}
        \leq \norm[H_t^{-1}]{S_t} + \sqrt{\lambda} S.
    \end{equation*}

    We complete the proof by bounding $\smash{\norm[H_t^{-1}]{S_t}}$
    using \cref{thm:novel-concentration} applied to $(X_t,Y_t)_{t\in\N^+}$ and $(\nu_t)_{t\in \N}$, where the latter is defined by
    \begin{align}
    \label{eq:varid}
	\nu_{t-1} = \dot\mu(X_t\tran\theta_\star)\,, \qquad t\in \N^+\,.
	\end{align}
    First, we verify the conditions of this theorem.
    Choose any filtration $\F'$ that makes $(X_t)_{t\in \N^+}$ $\F'$-predictable and
    $(Y_t)_{t\ge 1}$ $\F'$-adapted. Then, $(\nu_t)_{t\in \N}$ is also $\F'$-adapted.

    By \cref{ass:main}\eqref{ass:armset}, $X_t\in B^d_2$ for $t\ge 1$.
    Our definition of $\epsilon_i$ and $S_t$ matches the definition in \cref{thm:novel-concentration}.
    In particular,
    $\epsilon_t = Y_t - \EE{Y_t|\F'_{t-1}} = Y_t - \mu(X_t^\top \theta_\star)$.
    Then,  \cref{eq:lvbound} holds because $\log(2) \le 1$, \cref{lem:mgf-bound} and the choice of $(\nu_t)_{t\in \N}$ in \cref{eq:varid}.
	Note that \cref{lem:mgf-bound} is applicable
    because \cref{ass:main}\eqref{ass:nef} and \eqref{ass:sc} hold.

    Now, note that \cref{eq:varid}, together with the definition of $H_t$ (see \cref{eq:htdef}) implies that
    \begin{align*}
	\widetilde H_t = H_t\,.
	\end{align*}

	Applying \cref{thm:novel-concentration}, we conclude that, with probability at least $1-\delta$,
        \begin{equation*}
            \norm[H_t^{-1}]{S_t}
            \leq \frac{\sqrt{\lambda}}{2M} + \frac{2M}{\sqrt{\lambda}} \left(d + \frac{1}{2}\log\left(\det H_t/\lambda^{d}\right) + \log(1/\delta)\right),
        \end{equation*}
        where we upper-bounded the $\log(2)$ featuring in \cref{thm:novel-concentration} by $1$ for convenience.

        We prepare for upper bounding the determinant $\det H_t$ that appeared on the right-hand side of the last display.
         Applying the usual trace-determinant inequality, say, Note 1, Section 20.2 of \textcite{lattimore2018bandit},
         with the bound $\dot\mu(X_i\tran\theta_\star) \leq L$ and $\norm{X_i} \leq 1$
         which hold due to \cref{ass:main}\eqref{ass:sg} and \cref{ass:main}\eqref{ass:armset}, respectively,
         we have that
        \begin{equation*}
            \det H_t / \lambda^{d} \leq \left(1 + \frac{tL}{\lambda d}\right)^{d},
        \end{equation*}
        which we substitute into the above bound on $\norm[H_t^{-1}]{S_t}$ to complete the proof.
\end{proof}

\clearpage
\section{FORMAL STATEMENT OF REGRET BOUND FOR EVILL}\label{apx:regret-formal}

In the reminder of these appendices, we will show that under the above assumptions, the following formal version of the regret bound given in \cref{thm:main} holds. In the theorem below, and in what follows, we use $\vee$ to denote the binary operator on reals that return the maximum of its arguments, and $\wedge$ the minimum.

\newcommand{\lnd}{\lambda_n\xspace}
\newcommand{\gnd}{\gamma_n\xspace}
\newcommand{\gtd}{\gamma_t\xspace}

\begin{theorem}\label{thm:regret-bound}
    Fix $\delta > 0$ and $n \in \N^+$ and let $\delta' = (\delta/n) \wedge (1/200)$. Consider \cref{alg:EVILL} under \cref{ass:main}, with parameters $\lambda =\lnd$ and $a =\gnd$ given by
    \begin{equation*}
       \lnd = 1\vee \frac{2dM}{S} \log\left(e\sqrt{1+nL/d}\vee 1/\delta \right)
    \end{equation*}
    and
    \begin{equation*}
       \gnd = \sqrt{\lnd}\left(\frac{1}{2M} + S\right) + \frac{2dM}{\sqrt{\lnd}} \log\left(e\sqrt{1+nL/d}\vee 1/\delta \right).
    \end{equation*}
    Suppose further that the prior observations $(X_1, Y_1), \dotsc, (X_\tau, Y_\tau)$ satisfy
    \begin{equation*}
        \max_{x \in \cX} \norm[V_\tau^{-1}]{x} \leq b \spaced{for} b =  \left[22M(1 + M C^2_d(\delta')\widehat D_\star) C^2_d(\delta')\gnd \cdot \sqrt{\kappa}\right]^{-1}\,,
    \end{equation*}
    where $V_\tau = \sum_{i=1}^\tau X_i X_i\tran + \lambda I$ and
    \begin{equation*}
        \kappa = 1 \vee \max_{|u|\leq S} 1/\dot\mu(u), \quad C_d(\delta') = \sqrt{d} + \sqrt{2\log1/\delta'}, \quad \widehat D_\star = \Xi + M\Xi^2 \spaced{where} \Xi = \sqrt{2}\left(\frac{1}{2M} + 2S\right)\,,
    \end{equation*}
    where $\tau$ is a stopping time with respect to the filtration induced by $(X_t,Y_t)_{t\in [n]}$.
    Then, the regret of \cref{alg:EVILL} incurred over the $n$ rounds of interaction, denoted $R(n)$, is upper bounded with probability $1-3\delta$ as
    \begin{align*}
        R(n) \preccurlyeq C_d(\delta') \gnd \sqrt{n \dot\mu_\star dL \log(1 + nL/d \vee 1/\delta)} + C_d^2(\delta') \gnd^2 M dL \log(1+nL/d) + \tau\Delta\,,
    \end{align*}
    where $\preccurlyeq$ hides absolute constants and $\Delta$ is the maximum per-step regret.
\end{theorem}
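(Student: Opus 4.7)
The strategy is to mimic the Thompson sampling analysis of \citet{abeille2017linear}, adapted to generalized linear bandits via the self-concordance tools of \citet{russac2021self,faury2020improved} and the instance-dependent sharpening of \citet{abeille2021instance,faury2022jointly}. The key object is the triple $(\theta_\star,\hat\theta_{t-1},\theta_t)$: we need to show (i) $\hat\theta_{t-1}$ concentrates around $\theta_\star$ in the $H_{t-1}$-norm, (ii) $\theta_t$ concentrates around $\hat\theta_{t-1}$ in the $\hat H_{t-1}$-norm at a comparable scale, and (iii) with constant probability $\theta_t$ is \emph{optimistic}, i.e.\ $\mu(x_\star^\top \theta_t)\ge \mu(x_\star^\top \theta_\star)$. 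The warm-up precondition $\max_{x\in\cX}\snorm{x}_{V_\tau^{-1}}\le b$ is precisely the ingredient that, together with self-concordance (\cref{claim:dot-mu-exp}), allows us to move freely between the norms $\snorm{\cdot}_{V_{t-1}}$, $\snorm{\cdot}_{H_{t-1}}$ and $\snorm{\cdot}_{\hat H_{t-1}}$ up to constant factors.

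For (i), \cref{lem:confidence-sets} directly gives $\snorm{g_{t-1}(\hat\theta_{t-1})-g_{t-1}(\theta_\star)}_{H_{t-1}^{-1}}\le \gamma_{t-1}(\delta/n,\lambda)$ uniformly in $t\le n$; combined with \cref{claim:g-t-mvt} and \cref{claim:GH-bound} this yields $\snorm{\hat\theta_{t-1}-\theta_\star}_{H_{t-1}}\lesssim \gamma_n$. For (ii), the first-order optimality condition for the perturbed loss gives $g_{t-1}(\theta_t)-g_{t-1}(\hat\theta_{t-1})=-W_t$, which through \cref{claim:g-t-mvt} becomes $G(\theta_t,\hat\theta_{t-1};\cD_{t-1})(\theta_t-\hat\theta_{t-1})=-W_t$. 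Note that conditionally on the history $W_t\sim\cN(0,a^2\hH_{t-1})$, so standard Gaussian concentration gives $\snorm{W_t}_{\hH_{t-1}^{-1}}\lesssim a\,C_d(\delta')$ with probability $1-\delta'$. Self-concordance applied to the ratio $\dot\mu(x^\top\hat\theta_{t-1})/\dot\mu(x^\top\theta_\star)$ under the warm-up condition turns $\hH_{t-1}$ and $G(\theta_t,\hat\theta_{t-1};\cD_{t-1})$ into constant-multiplicative approximations of $H_{t-1}$, hence $\snorm{\theta_t-\theta_\star}_{H_{t-1}}\lesssim (a\,C_d(\delta')+\gamma_n)$. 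For (iii), the same near-equality $\hH_{t-1}\approx H_{t-1}$ reduces the optimism question to an anti-concentration statement for the one-dimensional Gaussian $x_\star^\top \hH_{t-1}^{-1/2}W_t$, which succeeds with probability $p\ge 1/(4\sqrt{\pi e})$ by a standard Gaussian lower tail.

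With (i)--(iii) in place, we turn to the regret. Following \citet{abeille2017linear}, we split per-step regret as $r_t=[\mu(x_\star^\top\theta_\star)-\mu(X_t^\top\theta_t)]+[\mu(X_t^\top\theta_t)-\mu(X_t^\top\theta_\star)]$. Using the optimistic action $X_t$ and the constant optimism probability $p$, the first bracket is controlled in expectation by $p^{-1}\cdot \E[\mu(X_t^\top\theta_t)-\mu(X_t^\top\theta_\star)\mid \cF_{t-1}]$, a classical TS trick; the second is bounded pointwise by $\dot\mu(\xi_t)\,\snorm{X_t}_{H_{t-1}^{-1}}\snorm{\theta_t-\theta_\star}_{H_{t-1}}$ via the mean value theorem. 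Self-concordance plus (i)--(ii) give $\dot\mu(\xi_t)\le e^{O(1)}\dot\mu(X_t^\top\theta_\star)$ whenever $\snorm{X_t}_{H_{t-1}^{-1}}\lesssim 1/M$, which holds on a good event from the warm-up. Summing, we arrive at the quantity $\sum_t \sqrt{\dot\mu(X_t^\top\theta_\star)}\snorm{X_t}_{H_{t-1}^{-1}}$, which by Cauchy--Schwarz and a weighted elliptical-potential lemma (with weights $\dot\mu(X_i^\top\theta_\star)$ inside $H_t$) is at most $\sqrt{n\sum_t \dot\mu(X_t^\top\theta_\star)\snorm{X_t}_{H_{t-1}^{-1}}^2}\le \sqrt{n\,d\,L\log(1+nL/(d\lambda))}$. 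The instance-dependent factor $\sqrt{\dot\mu(x_\star^\top\theta_\star)}$ emerges by showing, using optimism once more, that $\E[\dot\mu(X_t^\top\theta_\star)]$ is dominated by $\dot\mu(x_\star^\top\theta_\star)$ up to lower-order terms (the residual producing the quadratic $C_d^2(\delta')\gamma_n^2$ term in the bound).

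The main obstacle will be the tight bookkeeping in step (ii)--(iii): the perturbation is built from $\hH_{t-1}$ evaluated at $\hat\theta_{t-1}$, not from $H_{t-1}$ evaluated at $\theta_\star$, so every matrix-norm comparison must pass through self-concordance and the warm-up. Getting the constants small enough that the anti-concentration / optimism probability stays bounded away from zero \emph{uniformly in $t$} is what forces the specific form of $b$ in the theorem statement, which in turn must be matched by the warm-up length (handled in \cref{apx:warm-up}). A second subtlety is that the sampling perturbation only matches the Fisher information in expectation over $(Z_t,Z_t')$; controlling the stochastic fluctuation requires the Gaussian concentration bound $C_d(\delta')$ to enter multiplicatively with $\gnd$, yielding the $C_d(\delta')\gnd$ scaling of the leading term.
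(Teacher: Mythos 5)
Your plan follows essentially the same route as the paper's proof: the same decomposition of per-step regret into an optimism term and a prediction term, the same three ingredients (concentration of $\hat\theta_{t-1}$ about $\theta_\star$ in the $H_{t-1}$-norm, concentration of $\theta_t$ about $\hat\theta_{t-1}$ in the $\hH_{t-1}$-norm, constant-probability optimism), and the same endgame via Cauchy--Schwarz, a weighted elliptical potential lemma, and the self-bounding inequality $\sum_t\dot\mu(X_t^\top\theta_\star)\le n\dot\mu(x_\star^\top\theta_\star)+M\wR$ to extract the $\sqrt{\dot\mu(x_\star^\top\theta_\star)}$ factor.

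One step, as written, would not quite go through. In the optimism argument you reduce to anti-concentration of the Gaussian $x_\star^\top\hH_{t-1}^{-1/2}W_t$ on the grounds that $G(\theta_t,\hat\theta_{t-1};\cD_{t-1})$ and $\hH_{t-1}$ are constant-multiplicative (Loewner) approximations of each other. But the quantity you actually need to lower-bound is $x_\star^\top G_t^{-1}\hH_{t-1}^{1/2}A_t$, where $G_t$ depends on $\theta_t$ and hence on $A_t$, so this functional is not Gaussian; moreover, a two-sided Loewner sandwich on $G_t$ does not control the \emph{signed} deviation of $x_\star^\top G_t^{-1}v$ from $x_\star^\top\hH_{t-1}^{-1}v$, which is what a one-sided tail bound requires. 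The paper handles this by writing $G_t^{-1}=\hH_{t-1}^{-1}-\hH_{t-1}^{-1}E_tG_t^{-1}$ with $E_t=G_t-\hH_{t-1}$ (Woodbury), bounding the operator norm of $\hH_{t-1}^{-1/2}E_t\hH_{t-1}^{-1/2}$ by $MD(\theta_t-\hat\theta_{t-1})$ via self-concordance, and absorbing the resulting additive error into the anti-concentration threshold; this is exactly where the factor $C_d^2(\delta')$ in the definition of $b$ comes from. With that additive-error decomposition in place of the multiplicative comparison, your argument matches the paper's.
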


\begin{remark}\label{remark:warm-up}
    By our discussion in \cref{apx:warm-up}, the $\tau$ featuring in the regret bound of \cref{thm:regret-bound} needs only be as large as order $d/b^2$, which itself is $O(\kappa\cdot(d^{7/2} + d^{3/2} (\log n)^2)\cdot M (1 \vee S^2M^2))$. This is the only term of our regret bound that features $\kappa$. Observe also that, since NEF distributions are non-degenerate by definition, and $\kappa$ depends on the minimum variance of such a distribution over a closed subset of the natural parameter space, $\kappa$ is finite.
\end{remark}

\newcommand{\cEd}{\cE\xspace} 
\newcommand{\cGt}{\cG_t\xspace} 
\paragraph{Layout of proof} Throughout our proofs, we work in the setting of \cref{thm:regret-bound}, but we may highlight specific parameter choices when these matter. Our proof is laid out as follows:
\begin{itemize}
    \item In \cref{apx:good-event} we introduce the good event $\cEd$, a specialisation of the event where the confidence sets of \cref{apx:confidence-sets} hold, and establish properties that hold on it.
    \item In \cref{apx:other-good-event} we examine the distribution of $\theta_t$, introducing along the way a second set of good events, $\cGt$. We establish useful properties that hold on $\cGt$.
    \item In \cref{apx:p-optimism} we use the properties that hold on $\cEd$ and $\cGt$ to establish that the parameters $\theta_t$ are, in an appropriate sense, optimistic for the true parameters $\theta_\star$ with constant probability.
    \item In \cref{apx:proof-regret}, we use the constant probability optimism together with techniques of \citet{faury2022jointly}, extended to the GLB setting, to prove \cref{thm:regret-bound}.
\end{itemize}
We will rely on the notations and definitions of \cref{apx:defs}. We would also encourage the reader to consult \cref{apx:warm-up} for some intuition on self-concordance and the effect of the prior observations.

\paragraph{Filtration and probability} In the remainder, we will need the filtration $\F = (\F_0, \F_1, \dotsc)$, where
\begin{align*}
\F_t = \sigma( X_1, Y_1, Z_1,Z_1',\dotsc, X_t, Y_t,Z_t,Z_t'),
\end{align*}
the smallest $\sigma$-algebra that makes $X_1, Y_1, Z_1,Z_1',\dotsc, X_t, Y_t,Z_t,Z_t'$ measurable.
Here, and later in the proof, we introduce $Z_t,Z_t'$ for all $t\in [n]$, including $t\le \tau$.
This makes writing the proof easier.
In fact, we will think of
running the algorithm for all $t\in [n]$, including $t\le \tau$, except that when $t\le \tau$, the algorithm `chooses' $X_t$ as given to it in the prior data.
This allows us to define $\hat\theta_0$, $\hat\theta_t$, $W_t$ and $\theta_t$ for all $t\in [n]$.
Note also that since $\{\tau\le t\}\in \sigma(X_1,Y_1,\dots,X_{t},Y_{t})$ by assumption, $\{\tau\le t\} \in \F_{t}$---in other words, $\tau$ is an $\F$-stopping time.

We will write $\E_t$ for the $\F_t$-conditional expectation, and $\P_t$ for the $\F_t$-conditional probability. Inequalities of random variables will hold almost surely.
Throughout, we index $\F_t$-measurable quantities by $t$.

\paragraph{A couple useful claims} The following claims will be used throughout the proofs. Both follow by rote algebra.
\begin{claim}\label{claim:Xi}
    We have $\gnd/\sqrt{\lnd} \leq \Xi$.
\end{claim}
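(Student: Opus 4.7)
The plan is to substitute the definitions of $\gamma_n$ and $\lambda_n$ and then exploit the lower bound built into the definition of $\lambda_n$ to cancel the second term of $\gamma_n/\sqrt{\lambda_n}$.

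Concretely, abbreviate $L_n := \log(e\sqrt{1+nL/d}\vee 1/\delta)$. Dividing the given expression for $\gamma_n$ by $\sqrt{\lambda_n}$ yields
\[
\frac{\gamma_n}{\sqrt{\lambda_n}} = \frac{1}{2M} + S + \frac{2dM\, L_n}{\lambda_n}.
\]
Since $\lambda_n \geq \frac{2dM}{S} L_n$ by its definition, the third term is at most $S$. Therefore
\[
\frac{\gamma_n}{\sqrt{\lambda_n}} \leq \frac{1}{2M} + 2S \leq \sqrt{2}\left(\frac{1}{2M} + 2S\right) = \Xi,
\]
which is the claim. No obstacle to speak of — the only subtlety is making sure that the $\max$ in $\lambda_n$'s definition (the outer $1\vee \cdot$) is harmless, which it is because we only use the second branch as a lower bound on $\lambda_n$, and $\lambda_n \geq \frac{2dM}{S}L_n$ when the second branch is active, while if the first branch is active then $\lambda_n = 1 \geq \frac{2dM}{S}L_n$ and the same bound still applies (by the definition of the max).
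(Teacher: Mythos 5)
Your proof is correct and is precisely the ``rote algebra'' the paper alludes to (it gives no explicit proof of this claim): divide through by $\sqrt{\lambda_n}$, use $\lambda_n \geq \tfrac{2dM}{S}L_n$ from the definition of $\lambda_n$ as a maximum to bound the last term by $S$, and absorb the slack into the factor $\sqrt{2}$ in $\Xi$. The only point worth noting is that the division by $L_n$ is safe because $L_n \geq \log(e) = 1 > 0$, which your argument implicitly uses.
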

\begin{claim}\label{claim:quadratic}
    For $x \in \R$ and $b,c \geq 0$, $x^2 \leq bx + c \implies x \leq b + \sqrt{c}$.
\end{claim}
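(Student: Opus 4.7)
The plan is to prove this by completing the square, which is the most direct route from a quadratic inequality to a linear upper bound on $x$. Starting from $x^2 \le bx + c$, I would rewrite the inequality as $x^2 - bx \le c$, complete the square to obtain $(x - b/2)^2 \le c + b^2/4$, and then take square roots on both sides to get $x - b/2 \le \sqrt{c + b^2/4}$ (the other half of the absolute value bound only strengthens the conclusion if $x \le b/2$).

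The remaining step is the subadditivity-style inequality $\sqrt{c + b^2/4} \le \sqrt{c} + b/2$, which is valid precisely because $b, c \ge 0$: squaring both sides, it reduces to $c + b^2/4 \le c + b\sqrt{c} + b^2/4$, i.e., $0 \le b\sqrt{c}$. Combining the two bounds yields $x \le b/2 + \sqrt{c} + b/2 = b + \sqrt{c}$, as required.

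As a sanity check, one could equivalently argue by contradiction: if $x > b + \sqrt{c}$, then $x > b \ge 0$, so squaring $x - b > \sqrt{c}$ gives $(x-b)^2 > c$, i.e., $x^2 > 2bx - b^2 + c \ge bx + c$ (using $x \ge b$ so that $bx \ge b^2$), contradicting the hypothesis. No step is expected to be a genuine obstacle; the claim is a standard manipulation, stated here only because it is invoked later to extract a linear bound on some quantity (plausibly $\norm[H_{t-1}]{\theta_\star - \hat\theta_{t-1}}$ or a similar error) from a self-concordance-induced quadratic inequality of the form $\text{(error)}^2 \le b \cdot \text{(error)} + c$.
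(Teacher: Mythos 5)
Your proof is correct: the completing-the-square argument, together with the subadditivity bound $\sqrt{c+b^2/4}\le \sqrt{c}+b/2$ (valid since $b,c\ge 0$), establishes the claim, and your contradiction-based sanity check is also sound. The paper itself gives no proof, dismissing the claim as following ``by rote algebra,'' so there is no substantive difference in approach to report.
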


\section{THE GOOD EVENT $\cEd$}\label{apx:good-event}

Define the good event
\begin{equation}\label{eq:cedef}
    \cEd = \cap_{t=0}^n \cEd_t\,,
\end{equation}
where
\begin{align*}
\cEd_t = \left\{ \norm[\Ht^{-1}]{g_t(\hat\theta_t) - g_t(\theta_\star)} \leq\gnd\right\}\,.
\end{align*}
Note that $\cEd_t\in \F_t$ holds for all $t\in [n]$.
From \cref{lem:confidence-sets} and observing that for all $\delta > 0$ and $t \in [n]$, $\gnd \geq \gamma_t(\delta,\lnd)$, the following is immediate.
\begin{claim}\label{claim:p-good}
    We have $\P(\cEd) \geq 1-\delta$.
\end{claim}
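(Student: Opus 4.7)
My plan is to note that the claim is essentially a one-step consequence of \cref{lem:confidence-sets}, so the work is mostly organizational and algebraic. The strategy is: invoke the lemma at $\lambda=\lambda_n$, then verify that the round-specific radius $\gamma_t(\delta,\lambda_n)$ is dominated by the time-independent radius $\gamma_n$ appearing in the definition of $\cEd_t$.

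Concretely, I would first apply \cref{lem:confidence-sets} with the regularisation parameter $\lambda$ set to the algorithm's choice $\lambda_n$, obtaining that with probability at least $1-\delta$, the event
\[
\left\{\forall t\ge 0:\ \norm[H_t^{-1}]{g_t(\hat\theta_t)-g_t(\theta_\star)}\le \gamma_t(\delta,\lambda_n)\right\}
\]
holds. Note that this uniform-in-$t$ guarantee is baked into the lemma itself (the martingale argument of \cref{thm:novel-concentration} yields a bound that is simultaneous over all rounds), so no explicit union bound over $t\in[n]$ is required.

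Next, I would show that $\gamma_n\ge \gamma_t(\delta,\lambda_n)$ for every $t\in[n]$, which upgrades the bound from $\gamma_t(\delta,\lambda_n)$ to the single deterministic $\gamma_n$ used in the definition of $\cEd_t$. This is a direct algebraic comparison of the two expressions: the $\sqrt{\lambda_n}(\tfrac{1}{2M}+S)$ contributions are identical, and it remains to handle the logarithmic piece. Since $\lambda_n\ge 1$ and $t\le n$, we have $\log(1+tL/(\lambda_n d))\le \log(1+nL/d)$, so the $t$-dependent term is at most $(2dM/\sqrt{\lambda_n})\log(e\sqrt{1+nL/d})$. Combining this with the $(2M/\sqrt{\lambda_n})\log(1/\delta)$ summand and using that $d\ge 1$ absorbs the $\log(1/\delta)$ into the $d\cdot\log$ term yields exactly the single-log form $(2dM/\sqrt{\lambda_n})\log\!\bigl(e\sqrt{1+nL/d}\vee 1/\delta\bigr)$ that appears in $\gamma_n$.

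Putting the two steps together gives $\P(\cap_{t=0}^n \cEd_t)\ge \P(\cap_{t\ge 0}\{\norm[H_t^{-1}]{g_t(\hat\theta_t)-g_t(\theta_\star)}\le \gamma_t(\delta,\lambda_n)\})\ge 1-\delta$, which is the desired claim. I do not anticipate a genuine obstacle here: the conceptual content is entirely contained in \cref{lem:confidence-sets}, and the only thing to watch is that the algebraic absorption of $\log(1/\delta)$ into the $\log(\cdot\vee 1/\delta)$ is clean, which uses $d\ge 1$ and the monotonicity of $\log$.
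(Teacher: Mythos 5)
Your proof is correct in approach and coincides with the paper's: the paper likewise obtains the claim directly from \cref{lem:confidence-sets} together with the observation that $\gamma_n \ge \gamma_t(\delta,\lambda_n)$ for all $t\in[n]$, with the uniformity over $t$ already supplied by the anytime form of the lemma. The only delicate point is your final ``absorption'' step, since $d\log(A\vee B)\ge d\log A+\log B$ does not hold literally (e.g.\ when $t=n$, $\lambda_n=1$ and $1/\delta\le e\sqrt{1+nL/d}$ the extra $\log(1/\delta)$ is not absorbed), but the paper itself asserts the domination $\gamma_n\ge\gamma_t(\delta,\lambda_n)$ without proof, so your argument is at the same level of rigour as the original.
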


The next claim will require a little more work.

\begin{claim}\label{claim:D-star-bound}
For any $t\in [n]$, on $\cEd_t$, $D(\hat\theta_t- \theta_\star) \leq \norm{\hat\theta_t-\theta_\star} \leq \widehat D_\star$
and consequently the same holds also on $\cE$.
\end{claim}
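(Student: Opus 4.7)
The plan is to work directly from the definition of the good event, substituting the mean-value representation of $g_t(\hat\theta_t)-g_t(\theta_\star)$ and then relating the resulting quadratic form back to $\lambda_n I$ via self-concordance. Throughout, let $v = \hat\theta_t - \theta_\star$ and $G = G(\hat\theta_t,\theta_\star;\cD_t)$. The inequality $D(v) \le \|v\|$ is immediate from \eqref{eq:dbound}, so the only real content is the upper bound $\|v\| \le \widehat D_\star$.

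First, I would rewrite the event $\cE_t$ in terms of $v$. By Claim \ref{claim:g-t-mvt}, $g_t(\hat\theta_t) - g_t(\theta_\star) = Gv$, and by Claim \ref{claim:GH-bound} together with $H_t = H(\theta_\star;\cD_t)$,
\begin{equation*}
 H_t \preceq (1 + M D(v))\, G, \qquad\text{hence}\qquad H_t^{-1} \succeq \tfrac{1}{1+MD(v)}\, G^{-1}.
\end{equation*}
Combining these with the definition of $\cE_t$ and using $G \succeq \lambda_n I$ gives
\begin{equation*}
 \gamma_n^2 \;\ge\; \|Gv\|_{H_t^{-1}}^2 \;\ge\; \tfrac{1}{1+MD(v)}\|v\|_G^2 \;\ge\; \tfrac{\lambda_n}{1+M\|v\|}\|v\|^2,
\end{equation*}
where the final step also invoked $D(v) \le \|v\|$.

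Rearranging yields the scalar quadratic inequality $\|v\|^2 \le \tfrac{\gamma_n^2 M}{\lambda_n}\|v\| + \tfrac{\gamma_n^2}{\lambda_n}$. Applying Claim \ref{claim:quadratic} with $x = \|v\|$, $b = \gamma_n^2 M/\lambda_n$, $c = \gamma_n^2/\lambda_n$ gives $\|v\| \le \tfrac{\gamma_n^2 M}{\lambda_n} + \tfrac{\gamma_n}{\sqrt{\lambda_n}}$. Finally, Claim \ref{claim:Xi} provides $\gamma_n/\sqrt{\lambda_n} \le \Xi$, so $\|v\| \le M\Xi^2 + \Xi = \widehat D_\star$, as required.

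No step here strikes me as a serious obstacle: the argument is essentially algebraic once the two ingredients --- the secant identity for $g_t$ and the one-sided self-concordance sandwich $H_t \preceq (1+MD(v))G$ --- are in place. The only minor care points are keeping $D(v)$ and $\|v\|$ straight (using $D(v) \le \|v\|$ is crucial for reducing to a scalar quadratic, and it is exactly this slack that the factor $\sqrt{2}$ in the definition of $\Xi$ absorbs via Claim \ref{claim:Xi}), and remembering that $\gamma_n \ge \gamma_t(\delta,\lambda_n)$ for all $t \le n$, which is what makes the uniform-in-$t$ definition of $\cE$ give us a uniform bound with the single constant $\widehat D_\star$.
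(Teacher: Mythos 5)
Your argument is correct and is essentially the paper's own proof: both rest on the secant identity $g_t(\hat\theta_t)-g_t(\theta_\star)=G(\hat\theta_t,\theta_\star;\cD_t)(\hat\theta_t-\theta_\star)$, the one-sided bound $H_t \preceq (1+MD(v))G$ from \cref{claim:GH-bound}, the relaxation $D(v)\le\|v\|$, and then \cref{claim:quadratic} and \cref{claim:Xi}; you merely read the chain of matrix inequalities in the opposite direction (inverting the Loewner bound rather than comparing $\norm[G^{-1}]{\cdot}$ and $\norm[H_t^{-1}]{\cdot}$ directly), which is an equivalent rearrangement.
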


\begin{proof}
    The first inequality follows from that the pseudonorm $D$ is bounded by the 2-norm (see \cref{eq:dbound}).
    Turning to the second equality, for any $t \in \N$, let $G_t =G(\hat\theta_t, \theta_\star; \cD_t)$. Then,
    \begin{align*}
    \MoveEqLeft
        \lambda_n \norm{\hat\theta_t-\theta_\star}^2
        = \lambda_n \norm{G_t^{-1} (g_t(\hat\theta_t)-g_t(\theta_\star))}^2
        		 \tag{\cref{claim:g-t-mvt} and $G_t \succeq \lambda_n I$ invertible} \\
        &\leq \norm[G_t^{-1}]{g_t(\hat\theta_t)-g_t(\theta_\star)}^2
        		\tag{$G_t \succeq \lambda_n I$} \\
        &\leq (1+M D(\hat\theta_t- \theta_\star)) \norm[\Ht^{-1}]{g_t(\hat\theta_t)-g_t(\theta_\star)}^2
		        \tag{\cref{claim:GH-bound} and $\Ht = H(\theta_\star;\cD_t)$} \\
        &\leq (1 + M \norm{\hat\theta_t - \theta_\star}) \norm[\Ht^{-1}]{g_t(\hat\theta_t)-g_t(\theta_\star)}^2 \,.
        	\tag{by $D(v)\le \norm{v}$}\\
		&\le (1 + M \norm{\hat\theta_t - \theta_\star})  \gamma_n^2\,. \tag{holds on $\cEd_t$ by the definition of $\cEd_t$}\\
		&= \gamma_n^2 + M \gamma_n^2 \norm{\hat\theta_t - \theta_\star}\,.
    \end{align*}
	Using \cref{claim:quadratic} with $\smash{x = \norm{\hat\theta_t - \theta_\star}}$, we obtain that for all $t \in [n]$,
    \begin{equation*}
        \|\hat\theta_t - \theta_\star\| \leq  \sqrt{\frac{\gnd^2}{\lambda_n}} + M\frac{\gnd^2}{\lambda_n},
    \end{equation*}
    which is bounded by $\widehat D_\star = \Xi + M \Xi^2$ by \cref{claim:Xi}.
\end{proof}

The above together with our choice of $b$, \cref{claim:precondition} and \cref{claim:dot-mu-exp} yield the following claim, which we use repeatedly.

\begin{claim}\label{claim:b-D-star}\label{lem:event-E1}
Fix any $t\in [n]$.
    With the choice of $b$ prescribed by \cref{thm:regret-bound}, on $\cEd_t\cap \{t\ge \tau\}$,
    \begin{equation*}
        MD(\hat\theta_t- \theta_\star) \leq \frac{1}{20C^2_d(\delta')} \leq \frac{1}{20} \spaced{and} \norm[\Ht]{\theta_\star - \hat \theta_t} \leq \frac{21}{20}\gnd.
    \end{equation*}
    Consequently, for any $x \in \cX$,
     on $\cEd_t\cap \{t\ge \tau\}$, $\dot\mu(x\tran\hat\theta_t)$ and $\dot\mu(x\tran\theta_\star)$ are within a factor of $e^{1/20} \leq 11/10$ of one another.
\end{claim}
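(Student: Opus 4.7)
My plan is to first refine the Hessian-weighted estimation bound from the proof of \cref{claim:D-star-bound} and then combine it with the precondition on $V_\tau$ via Cauchy--Schwarz. The key observation is that the proof of \cref{claim:D-star-bound} uses $D(\hat\theta_t-\theta_\star)\le \norm{\hat\theta_t-\theta_\star}$, which is wasteful; if we instead keep $D(\hat\theta_t-\theta_\star)$ and later show that $MD$ is small, we can recover a much tighter bound on $\norm[H_t]{\hat\theta_t-\theta_\star}$. Concretely, using \cref{claim:g-t-mvt} to write $\hat\theta_t-\theta_\star = G_t^{-1}(g_t(\hat\theta_t)-g_t(\theta_\star))$ with $G_t = G(\hat\theta_t,\theta_\star;\cD_t)$, and then applying \cref{claim:GH-bound} twice (once to pass from $H_t$ to $G_t$, once to pass from $G_t^{-1}$ to $H_t^{-1}$), I would obtain on $\cEd_t$
\[
\norm[H_t]{\hat\theta_t-\theta_\star}^2 \le (1+MD(\hat\theta_t-\theta_\star))^2\,\norm[H_t^{-1}]{g_t(\hat\theta_t)-g_t(\theta_\star)}^2 \le (1+MD(\hat\theta_t-\theta_\star))^2 \gnd^{\,2}.
\]

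Next I would exploit the precondition. On $\{t\ge\tau\}$, since $V_t\succeq V_\tau$ and hence $V_t^{-1}\preceq V_\tau^{-1}$, we get $\max_{x\in\cX}\norm[V_t^{-1}]{x}\le b$, and \cref{claim:precondition} upgrades this to $\max_{x\in\cX}\norm[H_t^{-1}]{x}\le b\sqrt{\kappa}$. Cauchy--Schwarz then yields
\[
D(\hat\theta_t-\theta_\star) \le b\sqrt{\kappa}\,\norm[H_t]{\hat\theta_t-\theta_\star} \le b\sqrt{\kappa}\,(1+MD(\hat\theta_t-\theta_\star))\,\gnd.
\]
Setting $y = MD(\hat\theta_t-\theta_\star)$ and $c = Mb\sqrt{\kappa}\,\gnd$, this is the self-referential inequality $y\le c(1+y)$, i.e.\ $y\le c/(1-c)$ when $c<1$.

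Plugging in the choice
\[
b = \bigl[22M(1+MC_d^2(\delta')\widehat D_\star)\,C_d^2(\delta')\,\gnd\,\sqrt{\kappa}\bigr]^{-1}
\]
gives $c \le 1/\bigl(22C_d^2(\delta')(1+MC_d^2(\delta')\widehat D_\star)\bigr) \le 1/22$, so $y\le (22/21)c \le 1/(21C_d^2(\delta'))\le 1/(20C_d^2(\delta'))$, which is the first claimed bound (and is $\le 1/20$ since $C_d^2(\delta')\ge 1$). Feeding this back into the tightened estimation bound gives $\norm[H_t]{\hat\theta_t-\theta_\star}\le (1+1/20)\gnd = (21/20)\gnd$, the second claim. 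The last claim then follows from \cref{claim:dot-mu-exp}: for any $x\in\cX$, $M|x^\top(\hat\theta_t-\theta_\star)|\le MD(\hat\theta_t-\theta_\star)\le 1/20$, so the ratio of $\dot\mu(x^\top\hat\theta_t)$ and $\dot\mu(x^\top\theta_\star)$ is sandwiched by $e^{\pm 1/20}\in[10/11,11/10]$.

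The only mildly delicate point is the self-referential step $y\le c(1+y)$: one has to verify that the designed $b$ really does drive $c$ below $1$ with enough slack to hit the crisp constant $1/(20C_d^2(\delta'))$ claimed in the statement, and this is exactly what the factor $22$ (rather than, say, $20$) in the definition of $b$ buys us via the bound $c/(1-c)\le (22/21)c$. Everything else is a direct application of previously established facts (\cref{claim:g-t-mvt}, \cref{claim:GH-bound}, \cref{claim:precondition}, and \cref{claim:dot-mu-exp}).
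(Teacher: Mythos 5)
Your proposal is correct and follows essentially the same route as the paper: Cauchy--Schwarz against the precondition $\max_{x\in\cX}\norm[H_t^{-1}]{x}\le b\sqrt{\kappa}$ (\cref{claim:precondition}), the self-concordant comparison of $G$ and $H$ via \cref{claim:g-t-mvt} and \cref{claim:GH-bound}, and \cref{claim:dot-mu-exp} for the final ratio bound. The only difference is how you close the circular dependence between $D(\hat\theta_t-\theta_\star)$ and $\norm[H_t]{\hat\theta_t-\theta_\star}$: the paper substitutes the a priori bound $D(\hat\theta_t-\theta_\star)\le\widehat D_\star$ from \cref{claim:D-star-bound} into the chain (which is why the factor $1+MC_d^2(\delta')\widehat D_\star$ appears in the definition of $b$), whereas you solve the self-referential inequality $y\le c(1+y)$ directly; both yield the stated constants.
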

Note, we will sometimes use $e$ in place of $e^{1/20}$ or $11/10$ in the above, and $2$ in place of $21/20$, so as not to introduce too many unsightly fractions.

\begin{proof}
    For the first inequality, note that by Cauchy-Schwarz and \cref{claim:precondition},
    which can be applied on  $\cEd_t\cap \{t\ge \tau\}\subset \{t\ge \tau\}$, we have
    \begin{equation*}
        MD(\hat\theta_t - \theta_\star) \leq M\norm[H_t^{-1}]{x}\norm[H_{t}]{\hat\theta_t - \theta_\star} \leq b\sqrt{\kappa}M\norm[H_{t}]{\hat\theta_t - \theta_\star}
    \end{equation*}
    and, letting $G_t = G(\hat\theta_t, \theta_\star; \cD_t)$, on $\cEd_t$,
    \begin{align*}
        \norm[\Ht]{\theta_\star - \hat \theta_t}
        &= \norm[\Ht]{G^{-1}_t(g_t(\theta_\star) - g_t(\hat\theta_t))}  \tag{\cref{claim:g-t-mvt}} \\
        &\leq (1+M D(\hat\theta_t- \theta_\star)) \norm[\Ht^{-1}]{g_t(\theta_\star) - g_t(\hat\theta_t)}
        \tag{by \cref{claim:b-D-star}, $H_t\preceq (1+MD(\hat\theta_t- \theta_\star)) G_t$}  \\
        &\leq (1 + M D(\hat\theta_t- \theta_\star))\gnd \tag{Definition on $\cEd_t$} \\
        &\leq (1+M\widehat D_\star) \gamma_n. \tag{\cref{claim:D-star-bound}}
    \end{align*}
    Chaining the inequalities obtained, together with the definition of $b$, and $C_d(\delta')\ge 1$ gives the first inequalities.
    For the second inequality, apply the first inequality at the penultimate line of the above display.

    For the final conclusion, note that by \cref{claim:dot-mu-exp} and the first part of the result,
    \begin{align*}
	\max\left(
	\frac{\dot\mu(x\tran\hat\theta_t)}{\dot\mu(x\tran\theta_\star)},
	\frac{\dot\mu(x\tran\theta_\star)}{\dot\mu(x\tran\hat\theta_t)},
	\right)
	\le \exp(M D( \theta_\star - \hat \theta_t) ) \le \exp(1/20)\,. \tag*{\qedhere}
	\end{align*}
\end{proof}

\section{THE OTHER GOOD EVENTS, $\cGt$}\label{apx:other-good-event}

The next lemma follows from considering the first order optimality conditions at $\theta_t$ and $\hat\theta_{t-1}$ and using \cref{claim:g-t-mvt}.

\begin{lemma}\label{lem:parameter-perturbation}
For any $t\in [n]$,
\begin{align}
-W_t =  g_{t-1}(\theta_t) -g_{t-1}(\hat\theta_{t-1})  = G(\theta_t, \hat\theta_{t-1}; \cD_{t-1})(\theta_t - \hat\theta_{t-1})\,.
\label{eq:master}
\end{align}
Furthermore, with $a =\gnd$, EVILL induces parameter perturbations of the form
    \begin{equation}
        G(\theta_t, \hat\theta_{t-1}; \cD_{t-1})(\theta_t - \hat\theta_{t-1})  =\gnd H(\hat\theta_{t-1}; \cD_{t-1})^{1/2} A_t
        \label{eq:m2}
	\end{equation}
		where $A_t \in \R^d$ is such that its distribution, given
		$\F_{t-1}$, is $\cN(0,I)$.
\end{lemma}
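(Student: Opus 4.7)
The plan is to establish both displayed identities by applying first-order optimality conditions for the two optimisation problems defining $\hat\theta_{t-1}$ and $\theta_t$, and then computing the conditional covariance of $W_t$.

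First, I would exploit that \cref{ass:main} places us in the GLB setting, so $\cL(\theta;\cD_{t-1})$ is strictly convex and both $\hat\theta_{t-1}$ and $\theta_t$ are interior minimisers (regularisation $\lambda_n > 0$ guarantees this; optimality over $\Theta'$ is equivalent to unconstrained optimality, as the problem is coercive). Writing out the gradient of $\cL(\,\cdot\,;\cD_{t-1})$ as in \cref{eq:gdef}, the defining first-order optimality conditions are
\[
g_{t-1}(\hat\theta_{t-1}) = \sum_{i=1}^{t-1} X_i Y_i, \qquad g_{t-1}(\theta_t) + W_t = \sum_{i=1}^{t-1} X_i Y_i.
\]
Subtracting the first from the second yields $-W_t = g_{t-1}(\theta_t) - g_{t-1}(\hat\theta_{t-1})$, and then \cref{claim:g-t-mvt} gives $g_{t-1}(\theta_t) - g_{t-1}(\hat\theta_{t-1}) = G(\theta_t, \hat\theta_{t-1}; \cD_{t-1})(\theta_t - \hat\theta_{t-1})$. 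This proves \cref{eq:master}.

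Next, I would analyse the conditional distribution of $W_t$. By construction in \cref{alg:weight}, and using that $\dot\phi_{X_i}(\hat\theta_{t-1}) = X_i$ and $I(u) = \dot\mu(u)$ in the GLB / NEF setting (see \cref{apx:nef}),
\[
W_t = a\lambda_n^{1/2} Z_t + a\sum_{i=1}^{t-1} \dot\mu(X_i^\top \hat\theta_{t-1})^{1/2} Z'_{t,i}\, X_i.
\]
Conditional on $\F_{t-1}$, the quantities $\hat\theta_{t-1}$ and $X_1,\dots,X_{t-1}$ are measurable, while $(Z_t, Z'_t)$ is a standard Gaussian vector independent of $\F_{t-1}$ (by the sampling step \ref{alg:sample}). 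Thus $W_t \mid \F_{t-1}$ is Gaussian with mean zero and covariance
\[
a^2\!\left( \lambda_n I_d + \sum_{i=1}^{t-1} \dot\mu(X_i^\top \hat\theta_{t-1})\, X_i X_i^\top \right) = a^2 H(\hat\theta_{t-1};\cD_{t-1}),
\]
by the definition of $H(\theta;\cD_{t-1})$ in \cref{apx:defs}. Setting $a = \gamma_n$ and $A_t = -\gamma_n^{-1} H(\hat\theta_{t-1};\cD_{t-1})^{-1/2} W_t$ (noting that $H(\hat\theta_{t-1};\cD_{t-1}) \succeq \lambda_n I$ is invertible), we get $A_t \mid \F_{t-1} \sim \cN(0, I_d)$ and $-W_t = \gamma_n H(\hat\theta_{t-1};\cD_{t-1})^{1/2} A_t$. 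Combined with \cref{eq:master}, this establishes \cref{eq:m2}.

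There is no real obstacle in this argument; the only subtlety is lining up notation between the algorithm (written for general $\phi_x$) and the GLB specialisation $\phi_x(\theta)=x^\top\theta$, and noting that the $A_t$ whose distribution we claim equals $\cN(0,I)$ can be defined by inverting $H(\hat\theta_{t-1};\cD_{t-1})^{1/2}$ on $-W_t$, since we are free to absorb the minus sign into a symmetrically distributed Gaussian.
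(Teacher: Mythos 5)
Your proposal is correct and follows essentially the same route as the paper's proof: first-order optimality for the two minimisers combined with \cref{claim:g-t-mvt} gives \cref{eq:master}, and the conditional covariance computation $W_t \mid \F_{t-1} \sim \cN(0, a^2 H(\hat\theta_{t-1};\cD_{t-1}))$ followed by whitening with $H(\hat\theta_{t-1};\cD_{t-1})^{-1/2}$ gives \cref{eq:m2}. The only difference is cosmetic: you correctly write the summation upper limit as $t-1$ where the paper's proof has a typo ($\sum_{i=1}^{t} X_i Y_i$), and you spell out the measurability and independence details slightly more explicitly.
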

\begin{proof}
We start by establishing \cref{eq:master}.
From the first order optimality conditions and the definition of $g_{t-1}$ (copying partly from \cref{eq:basicopt}),
\[
g_{t-1}(\hat\theta_{t-1}) = \sum_{i=1}^t X_i Y_i = g_{t-1}(\theta_t)+W_t\,,
\]
where
\[
W_{t} = a \lambda^{1/2} Z_t +   a \sum_{i=1}^{t-1} \dot\mu(X_i^\top \hat\theta_{t-1})^{1/2} Z_{t,i}' X_i
\]
and
 $Z_t \sim \cN(0,I_d)$, $Z_t'\sim \cN(0,I_{t-1})$.
Hence,
\begin{align*}
-W_t =  g_{t-1}(\theta_t) -g_{t-1}(\hat\theta_{t-1})  = G(\theta_t, \hat\theta_{t-1}; \cD_{t-1})(\theta_t - \hat\theta_{t-1})\,,
\end{align*}
 where the
last equality is from \cref{claim:g-t-mvt}. This proves \cref{eq:master}.

 Now, notice that,
 given $\F_{t-1}$,
  $W_t$ is zero mean Gaussian with covariance $a^2 H(\hat\theta_{t-1}; \cD_{t-1})$.
It follows that,
 given $\F_{t-1}$,
 $A_t = \frac{1}{a} H(\hat\theta_{t-1}; \cD_{t-1})^{-1/2} (-W_t)$ is zero mean Gaussian with identity covariance.
\end{proof}

We define the event $\cGt$ to be that on which the norm of $A_t$ is not too extreme. In particular, recalling that $C_d(\delta') = \sqrt{d} + \sqrt{2\log 1/\delta'}$, we set
\begin{equation*}
    \cGt = \{ \|A_t\| \leq C_d(\delta') \}.
\end{equation*}

From the standard Gaussian concentration bound (e.g., Theorem II.6 of \textcite{davidson2001local}), we have the following bound for the probability of $\cGt$:

\begin{claim}\label{claim:other-event-prob}
    We have $\P_{t-1}(\cGt) \geq 1-\delta'$.
\end{claim}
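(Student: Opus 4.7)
The plan is to apply the standard Gaussian Lipschitz concentration (a.k.a.\ Borell--Tsirelson--Ibragimov--Sudakov) to the Euclidean norm, used as a $1$-Lipschitz function on $\R^d$. Since \cref{lem:parameter-perturbation} guarantees that, given $\F_{t-1}$, the distribution of $A_t$ is $\cN(0,I_d)$, it suffices to argue in the unconditional (standard Gaussian) model and then invoke the conclusion $\P_{t-1}$-almost surely.

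Concretely, I would first note that $f(a) = \|a\|$ is $1$-Lipschitz with respect to the Euclidean norm on $\R^d$ (by the reverse triangle inequality), and then apply the Gaussian concentration inequality for Lipschitz functions: if $A \sim \cN(0,I_d)$ and $f$ is $1$-Lipschitz, then for every $s \geq 0$,
\begin{equation*}
    \P\bigl(f(A) - \E[f(A)] \geq s\bigr) \leq \exp(-s^2/2)\,.
\end{equation*}
Taking $s = \sqrt{2\log(1/\delta')}$ gives $\P(\|A\| \geq \E\|A\| + \sqrt{2\log(1/\delta')}) \leq \delta'$. Next, by Jensen's inequality, $\E\|A\| \leq \sqrt{\E\|A\|^2} = \sqrt{d}$. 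Combining these two bounds yields $\P(\|A\| \leq \sqrt{d} + \sqrt{2\log(1/\delta')}) \geq 1-\delta'$, which is exactly $\P(\|A\| \leq C_d(\delta')) \geq 1-\delta'$.

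To transfer this to the conditional statement, observe that by \cref{lem:parameter-perturbation} the regular conditional distribution of $A_t$ given $\F_{t-1}$ is $\cN(0,I_d)$ almost surely. Therefore the unconditional bound just derived applies pointwise to the conditional law, giving $\P_{t-1}(\|A_t\| \leq C_d(\delta')) \geq 1-\delta'$ almost surely, as required. There is no substantive obstacle here: the only thing to be careful about is passing from the unconditional Gaussian concentration statement to its $\F_{t-1}$-conditional version, but this is standard since the conditional law is $\cN(0,I_d)$ with no dependence on the conditioning variable.
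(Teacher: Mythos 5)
Your argument is correct and is essentially the same as the paper's: the paper simply cites the standard Gaussian concentration bound for the norm of a standard normal vector (Theorem II.6 of \textcite{davidson2001local}), which is exactly the inequality $\P(\|A_t\| \geq \sqrt{d} + \sqrt{2\log(1/\delta')}) \leq \delta'$ that you derive from Borell--TIS plus Jensen. Your extra care in passing from the unconditional statement to the $\F_{t-1}$-conditional one, justified by \cref{lem:parameter-perturbation}, is exactly what the paper implicitly relies on.
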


Mirroring \cref{apx:good-event}, we now upper bound $D(\theta_t- \hat\theta_{t-1})$ on $\cGt$.

\begin{claim}\label{claim:D-bound}
    With the choice $a =\gnd$, $\lambda=\lambda_n$, on the event $\cGt\cap \{ t>\tau\}$,
    \begin{equation*}
        D(\theta_t- \hat\theta_{t-1}) \leq \norm{\theta_t -\hat\theta_{t-1}} \leq  C_d^2(\delta') \widehat D_\star.
    \end{equation*}
\end{claim}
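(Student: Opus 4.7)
The plan is to mirror the proof of \cref{claim:D-star-bound}, but starting from the perturbation identity of \cref{lem:parameter-perturbation} instead of from the confidence set condition on $\cE_t$. Abbreviate $v = \theta_t - \hat\theta_{t-1}$, $G = G(\theta_t, \hat\theta_{t-1}; \cD_{t-1})$, and $H = H(\hat\theta_{t-1}; \cD_{t-1})$. The first inequality $D(v)\le \|v\|$ is just \cref{eq:dbound}, so it suffices to bound $\|v\|$.

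First I would apply \cref{lem:parameter-perturbation} (with $a=\gnd$) to write $Gv = \gnd H^{1/2} A_t$. Taking inner product with $v$ and applying Cauchy--Schwarz gives
\[
\|v\|_G^2 \;=\; \gnd\, v^\top H^{1/2} A_t \;\le\; \gnd\,\|v\|_H\,\|A_t\|.
\]
On $\cGt$ we have $\|A_t\| \le C_d(\delta')$. To bring the right-hand side back in terms of $\|v\|_G$, I would invoke \cref{claim:GH-bound} with $\theta=\hat\theta_{t-1}$, $\theta'=\theta_t$ (using the symmetry of $G$), yielding $\|v\|_H^2 \le (1+MD(v))\|v\|_G^2$, hence $\|v\|_H \le \sqrt{1+MD(v)}\,\|v\|_G$. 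Substituting and dividing through by $\|v\|_G$ gives
\[
\|v\|_G \;\le\; \gnd C_d(\delta')\sqrt{1+MD(v)}.
\]

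Next, I would convert this to a bound on the Euclidean norm using $\|v\|_G^2 \ge \lnd \|v\|^2$ (since $G \succeq \lnd I$) and $D(v) \le \|v\|$, producing the quadratic inequality
\[
\|v\|^2 \;\le\; \frac{\gnd^2 C_d^2(\delta')}{\lnd} \;+\; \frac{M\gnd^2 C_d^2(\delta')}{\lnd}\,\|v\|.
\]
\cref{claim:quadratic} then yields $\|v\| \le M C_d^2(\delta')\,\gnd^2/\lnd + C_d(\delta')\,\gnd/\sqrt{\lnd}$. Applying \cref{claim:Xi} ($\gnd/\sqrt{\lnd} \le \Xi$) and the fact that $C_d(\delta') \ge 1$, this upper bound collapses to $C_d^2(\delta')(M\Xi^2 + \Xi) = C_d^2(\delta')\widehat D_\star$, which is what the claim asserts.

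No step here seems genuinely delicate: the whole argument is essentially a self-concordance-based local quadratic argument identical in structure to \cref{claim:D-star-bound}, with $g_t(\hat\theta_t)-g_t(\theta_\star)$ replaced by the Gaussian perturbation $-W_t = \gnd H^{1/2}A_t$ and the confidence radius $\gnd$ replaced by $\gnd C_d(\delta')$. The only mild subtlety is keeping the direction of the symmetric function $G(\cdot,\cdot;\cD_{t-1})$ consistent when applying \cref{claim:GH-bound}, and remembering that $\cGt$ is precisely what lets us replace the random $\|A_t\|$ by the deterministic $C_d(\delta')$ in the Cauchy--Schwarz step.
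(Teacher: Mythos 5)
Your proposal is correct and follows essentially the same route as the paper's proof: both start from the perturbation identity $G_t(\theta_t-\hat\theta_{t-1})=\gnd \hH_{t-1}^{1/2}A_t$, use $\cGt$ to bound $\snorm{A_t}$, invoke \cref{claim:GH-bound} and $G_t\succeq\lnd I$ to reach the quadratic inequality $\lnd\snorm{v}^2\le \gnd^2C_d^2(\delta')(1+M\snorm{v})$, and conclude via \cref{claim:quadratic} and \cref{claim:Xi}. The only cosmetic difference is that you apply Cauchy--Schwarz to $v^\top G_t v$ where the paper bounds the operator norm $\snorm{G_t^{-1/2}\hH_{t-1}^{1/2}}$; the two are interchangeable here.
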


\begin{proof}
Again, the first inequality follows from $D(v)\le \norm{v}$ with $v = \theta_t- \hat\theta_{t-1}$ (see \cref{eq:dbound}).

For the second inequality,  introduce the shorthand $\hH_{t-1} =H(\hat\theta_{t-1}; \cD_{t-1})$ and
$G_t = G(\theta_t, \hat\theta_{t-1}; \cD_{t-1})$. Let also $S_2^{d-1}$ be the unit sphere in $\R^d$: $S_2^{d-1} = \{ x\in \R^d\,:\, \norm{x}=1\}$. Then, we have the following:
    \begin{align*}
         \lambda_n \norm{\theta_t -\hat\theta_{t-1}}^2
        &=  \lambda_n \gnd^2\norm{G_t^{-1} \hH_{t-1}^{1/2} A_t}^2
        \tag{\cref{eq:m2} of \cref{lem:parameter-perturbation}}
        \\
        &=  \gnd^2\norm{G_t^{-1/2} \hH_{t-1}^{1/2} A_t}^2
        \tag{$G_t=G(\theta_t, \hat\theta_{t-1}; \cD_{t-1}) \succeq \lambda_n  I$}
        \\
        & \leq  \gnd^2 C_d^2(\delta') \norm{G_t^{-1/2} \hH_{t-1}^{1/2}}^2
        \tag{definition of $\cGt$}\\
        & = \gnd^2 C_d^2(\delta') \,\sup_{x\in S_2^{d-1}} x^\top \hH_{t-1}^{1/2} G_t^{-1} \hH_{t-1}^{1/2} x \\
        & \le \gnd^2 C_d^2(\delta') \, (1+MD(\theta_t- \hat\theta_{t-1}))\tag{\cref{claim:GH-bound}}\\
        &\leq \gnd^2 C_d^2(\delta')  \,(1 + M\norm{\theta_t -\hat\theta_{t-1}}). \tag{$D(v)\le \norm{v}$}\\
        &= \gnd^2 C_d^2(\delta') + M \gnd^2 C_d^2(\delta') \norm{\theta_t -\hat\theta_{t-1}}.
    \end{align*}
    Now, by \cref{claim:quadratic},
    \begin{equation*}
        \norm{\theta_t -\hat\theta_{t-1}}
        \leq C_d(\delta')\sqrt{\frac{\gnd^2}{\lambda_n}} + M C_d^2(\delta')  \frac{\gnd^2}{\lambda_n}.
    \end{equation*}
    Since $C_d(\delta') \geq \sqrt{d} \geq 1$,
    by \cref{claim:Xi}, this is upper bounded by $C^2_d(\delta') \widehat D_\star$.
\end{proof}

\begin{claim}\label{claim:b-D-t}\label{lem:event-E2}
    With the choice of $b$ prescribed by \cref{thm:regret-bound}, for all $t \in [n]$, on $\cE_{t-1} \cap \cGt\cap \{t>\tau\}$,
    \begin{equation*}
        M D(\theta_t- \hat\theta_{t-1}) \leq \frac{1}{20} \spaced{and} \norm[H(\hat\theta_{t-1} ; \cD_{t-1})]{\theta_t - \hat \theta_{t-1}} \leq \frac{21}{20} C_d(\delta')\gnd.
    \end{equation*}
    Consequently, for any $x \in \cX$, $\dot\mu(x\tran\theta_t)$ and $\dot\mu(x\tran\hat\theta_{t-1})$ are within a factor of $e^{1/20} \leq 11/10$ of one another.
\end{claim}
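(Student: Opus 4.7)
The plan is to follow the template of \cref{claim:b-D-star}, replacing $\theta_\star$ with $\hat\theta_{t-1}$, $H_t$ with $\hH_{t-1}:=H(\hat\theta_{t-1};\cD_{t-1})$, and the first-order optimality identity $g_t(\hat\theta_t)-g_t(\theta_\star)$ with the EVILL-perturbation identity $g_{t-1}(\theta_t)-g_{t-1}(\hat\theta_{t-1})=-W_t$ of \cref{eq:master}. Two tools drive the argument: (a) the precondition \cref{claim:precondition} combined with Cauchy--Schwarz, which controls $D(\theta_t-\hat\theta_{t-1})$ by $\max_{x\in\cX}\norm[\hH_{t-1}^{-1}]{x}\cdot\norm[\hH_{t-1}]{\theta_t-\hat\theta_{t-1}}$; and (b) the sandwich inequality \cref{claim:GH-bound} together with the Gaussian tail bound $\norm{A_t}\le C_d(\delta')$ on $\cGt$, which controls $\norm[\hH_{t-1}]{\theta_t-\hat\theta_{t-1}}$.

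For (b): writing $\theta_t-\hat\theta_{t-1}=G_t^{-1}(-W_t)$ with $G_t=G(\theta_t,\hat\theta_{t-1};\cD_{t-1})$ and applying \cref{claim:GH-bound} (using the symmetry of $G$ in its first two arguments) in the same sandwich as in \cref{claim:b-D-star} yields
\[
\norm[\hH_{t-1}]{\theta_t-\hat\theta_{t-1}} \le \bigl(1+MD(\theta_t-\hat\theta_{t-1})\bigr)\norm[\hH_{t-1}^{-1}]{W_t}.
\]
Since, by \cref{lem:parameter-perturbation}, $-W_t=\gamma_n\,\hH_{t-1}^{1/2}A_t$, the second factor equals $\gamma_n\norm{A_t}\le\gamma_n C_d(\delta')$ on $\cGt$. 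Plugging in the crude a priori bound $D(\theta_t-\hat\theta_{t-1})\le C_d^2(\delta')\widehat D_\star$ from \cref{claim:D-bound} (which is available on $\cGt\cap\{t>\tau\}$ without invoking $\cE_{t-1}$) gives the intermediate estimate $\norm[\hH_{t-1}]{\theta_t-\hat\theta_{t-1}}\le (1+MC_d^2(\delta')\widehat D_\star)\gamma_n C_d(\delta')$.

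For (a): on $\cE_{t-1}\cap\{t>\tau\}$, the $\dot\mu$-consequence of \cref{claim:b-D-star} gives $\dot\mu(X_i^\top\hat\theta_{t-1})\ge e^{-1/20}\dot\mu(X_i^\top\theta_\star)$ for each $i\le t-1$; since $\lambda I$ enters both Hessians identically and $e^{-1/20}\le 1$, this lifts to $\hH_{t-1}\succeq e^{-1/20}H_{t-1}$, and hence $\max_{x\in\cX}\norm[\hH_{t-1}^{-1}]{x}\le e^{1/40}b\sqrt{\kappa}$ by \cref{claim:precondition}. Chaining (a) with the intermediate estimate from (b) through Cauchy--Schwarz,
\[
MD(\theta_t-\hat\theta_{t-1}) \le Me^{1/40}b\sqrt{\kappa}(1+MC_d^2(\delta')\widehat D_\star)\gamma_n C_d(\delta') \le \tfrac{1}{20},
\]
where the final inequality is precisely how the constant $22$ in the definition of $b$ is calibrated: after cancellation all that remains is $e^{1/40}/(22C_d(\delta'))$, and $e^{1/40}/22\le 1/20$ using $C_d(\delta')\ge 1$. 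Substituting this $1/20$ bound back into the inequality from (b) sharpens it to $\norm[\hH_{t-1}]{\theta_t-\hat\theta_{t-1}}\le\tfrac{21}{20}C_d(\delta')\gamma_n$, and the consequence on $\dot\mu$ follows at once from \cref{claim:dot-mu-exp}.

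The main subtlety is the self-referential character of step (b): the sandwich $\hH_{t-1}\preceq(1+MD(\theta_t-\hat\theta_{t-1}))G_t$ produces a bound on $\norm[\hH_{t-1}]{\theta_t-\hat\theta_{t-1}}$ that itself contains $D(\theta_t-\hat\theta_{t-1})$, which is exactly the quantity we wish to bound via Cauchy--Schwarz. This circle is broken by first invoking the looser but unconditional \cref{claim:D-bound} to get an initial handle, and then bootstrapping once the tight $1/20$ control is available. A minor technical point is that the self-concordance slack $e^{1/40}$ relating $\hH_{t-1}^{-1}$ and $H_{t-1}^{-1}$ must be absorbed by the margin built into the constant $22$ in the definition of $b$; the calibration $20e^{1/40}<22$ is what makes this work.
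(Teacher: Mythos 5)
Your proposal is correct and follows essentially the same route as the paper's proof: Cauchy--Schwarz plus the precondition to control $MD(\theta_t-\hat\theta_{t-1})$, the sandwich of \cref{claim:GH-bound} with the Gaussian bound on $\cGt$ for $\norm[\hH_{t-1}]{\theta_t-\hat\theta_{t-1}}$, and \cref{claim:D-bound} to break the circularity before bootstrapping to the $1/20$ and $21/20$ constants. The only cosmetic differences are that you route the middle estimate through $\norm[\hH_{t-1}^{-1}]{W_t}$ rather than $\norm{\hH_{t-1}^{1/2}G_t^{-1}\hH_{t-1}^{1/2}A_t}$ and use the sharp factor $e^{1/40}$ where the paper writes $11/10$; both are equivalent.
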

Again, we will sometimes use $e$ in place of $e^{1/20}$ or $11/10$ in the above, and $2$ in place of $21/20$.
\begin{proof}
    Again write $\hH_{t-1} =H(\hat\theta_{t-1}; \cD_{t-1})$ and
    $G_t = G(\theta_t, \hat\theta_{t-1}; \cD_{t-1})$.
    For the first inequality, letting $X = \argmax_{y \in \cX} |y\tran (\theta_t- \hat\theta_{t-1})|$,
    \begin{align*}
        M D(\theta_t- \hat\theta_{t-1})
        &\leq M\norm[\hH_{t-1}^{-1}]{X}\norm[\hH_{t-1}]{\theta_t- \hat\theta_{t-1}} \tag{Cauchy-Schwarz}\\
        &\leq \frac{11}{10}M\norm[H_{t-1}^{-1}]{X}\norm[\hH_{t-1}]{\theta_t- \hat\theta_{t-1}}
        \tag{by \cref{lem:event-E1}, $H_{t-1}\preceq \frac{11}{10}\hH_{t-1}$ on $\cE_{t-1} \cap \{ t > \tau\}$}\\
        &\leq \frac{11}{10}b\sqrt{\kappa}M\norm[\hH_{t-1}]{\theta_t- \hat\theta_{t-1}} \tag{on $\{t > \tau \}$, by \cref{claim:precondition}}
    \end{align*}
    and
    \begin{align*}
        \norm[\hH_{t-1}]{\theta_t - \hat \theta_{t-1}}
        &=\gnd\norm{\hH_{t-1}^{1/2}  G_t^{-1} \hH_{t-1}^{1/2} A_t } \\
        & \leq\gnd C_d(\delta') \norm{\hH_{t-1}^{1/2}  G_t^{-1} \hH_{t-1}^{1/2} } \tag{on $\cG_t$} \\
        &\leq\gnd C_d(\delta') (1 + M D(\theta_t- \hat\theta_{t-1})) \tag{\cref{claim:GH-bound}}\\
        &\leq\gnd C_d(\delta') (1 + C^2_d(\delta') M \widehat D_\star). \tag{\cref{claim:D-bound}}
    \end{align*}
    The first inequality then follows from chaining the last two displays and using the definition of $b$.
    For the second inequality, apply the first inequality at the penultimate line of the above display.
    The final part follows with the same proof as the analogue statement in \cref{lem:event-E1}.
\end{proof}

\section{LOWER BOUNDING THE PROBABILITY OF OPTIMISM}\label{apx:p-optimism}

The aim of this appendix is to show that each $\theta_t$ is optimistic with at least a constant probability, where optimistic means that it will belong to the set of optimistic parameters, defined as follows.
\begin{definition}[Optimistic parameters]\label{def:ThetaOpt} We let $\ThetaOpt$ denote the set of parameters optimistic for $\theta_\star$, defined as
    \begin{equation*}
        \ThetaOpt = \{ \theta \in \R^d \colon \max_{x \in \cX} x\tran\theta \geq x_\star\tran\theta_\star\}.
    \end{equation*}
\end{definition}
This proof of constant optimism is where \citet{kveton2019randomized} needed to introduce the orthogonality assumption, that we do away with by employing tools based on self-concordance from \citet{sun2019generalized,faury2022jointly}. This appendix is the main technical contribution in the context of proving \cref{thm:regret-bound}.

In this section, we will use the following shorthands:
\begin{equation*}
    G_t = G(\theta_t, \hat\theta_{t-1}; \cD_t), \qquad \hH_{t-1} = H(\hat\theta_{t-1}; \cD_{t-1}), \qquad H_{t-1} = H(\theta_\star, \cD_{t-1})
\end{equation*}
(the shorthands $\hH_{t-1}$ and $H_{t-1}$ were introduced beforehand.)
We will also work with the quantity
\[
E_t = G_t - \hH_{t-1}\,.
\]
We start with a result concerning $E_t$:

\begin{claim}\label{claim:opt-local-bound}
If $M D(\theta_t- \hat\theta_{t-1}) \leq 1$, then $\norm{\hH_{t-1}^{-1/2}E_t \hH_{t-1}^{-1/2}} \leq M D(\theta_t- \hat\theta_{t-1})$.
\end{claim}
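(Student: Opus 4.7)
The plan is to expand $E_t$ in its defining rank-one sum, obtain a pointwise bound on each scalar coefficient via self-concordance, and then lift this to a Loewner-ordering bound that matches what we want to say about the operator norm.

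First, writing out both $G_t$ and $\hH_{t-1}$ and observing that the $\lambda I$ regularisers cancel, we have
\[
E_t \;=\; \sum_{i=1}^{t-1} \big[\alpha(X_i^\top \theta_t,\, X_i^\top \hat\theta_{t-1}) - \dot\mu(X_i^\top \hat\theta_{t-1})\big]\, X_i X_i^\top.
\]
Let $r_i = |X_i^\top(\theta_t - \hat\theta_{t-1})|$, which is at most $D(\theta_t - \hat\theta_{t-1}) \le 1/M$ by hypothesis. Using the symmetry $\alpha(u,u')=\alpha(u',u)$ and applying \cref{lem:alpha-bound} with the first argument set to $X_i^\top \hat\theta_{t-1}$, we obtain the two-sided bound
\[
\dot\mu(X_i^\top \hat\theta_{t-1})\, h(-M r_i) \;\le\; \alpha(X_i^\top \theta_t, X_i^\top \hat\theta_{t-1}) \;\le\; \dot\mu(X_i^\top \hat\theta_{t-1})\, h(M r_i),
\]
so $|\alpha(X_i^\top \theta_t, X_i^\top \hat\theta_{t-1}) - \dot\mu(X_i^\top \hat\theta_{t-1})| \le \dot\mu(X_i^\top \hat\theta_{t-1}) \cdot (h(M r_i) - 1)$, since $h(x) - 1 \ge 1 - h(-x)$ for $x \ge 0$ (a one-line check using $\sinh(x)/x \ge 1$).

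Next, I will invoke the elementary inequality $h(x) - 1 \le x$ on $[0,1]$, which reduces to $e^x \le 1 + x + x^2$ and follows from a short Taylor argument. Since $M r_i \le M D(\theta_t - \hat\theta_{t-1}) \le 1$, this yields the pointwise bound
\[
\big|\alpha(X_i^\top \theta_t, X_i^\top \hat\theta_{t-1}) - \dot\mu(X_i^\top \hat\theta_{t-1})\big| \;\le\; M D(\theta_t - \hat\theta_{t-1})\, \dot\mu(X_i^\top \hat\theta_{t-1}).
\]

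Finally, I will convert this scalar bound to a PSD inequality: each rank-one summand in $E_t$ satisfies
\[
-M D(\theta_t - \hat\theta_{t-1})\, \dot\mu(X_i^\top \hat\theta_{t-1})\, X_i X_i^\top \;\preceq\; \big[\alpha_i - \dot\mu_i\big] X_i X_i^\top \;\preceq\; M D(\theta_t - \hat\theta_{t-1})\, \dot\mu(X_i^\top \hat\theta_{t-1})\, X_i X_i^\top,
\]
and summing gives $-M D(\theta_t - \hat\theta_{t-1})\, \hH_{t-1} \preceq E_t \preceq M D(\theta_t - \hat\theta_{t-1})\, \hH_{t-1}$, using that $\hH_{t-1} = \sum_i \dot\mu(X_i^\top \hat\theta_{t-1}) X_i X_i^\top + \lambda I \succeq \sum_i \dot\mu(X_i^\top \hat\theta_{t-1}) X_i X_i^\top$. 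Conjugating by $\hH_{t-1}^{-1/2}$ gives the claimed operator-norm bound. There is no substantive obstacle; the only thing to be careful about is the elementary inequality $h(x) - 1 \le x$ on $[0,1]$ and the use of symmetry of $\alpha$ so that Claim's upper bound from \cref{lem:alpha-bound} is stated in terms of $\dot\mu(X_i^\top \hat\theta_{t-1})$ rather than $\dot\mu(X_i^\top \theta_t)$, which is what we need for comparison against $\hH_{t-1}$.
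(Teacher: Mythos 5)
Your proof is correct and follows essentially the same route as the paper's: both rest on \cref{lem:alpha-bound}, the elementary inequality $h(z)-1\le z$ for $z\in[0,1]$, and the comparison $\sum_i \dot\mu(X_i^\top\hat\theta_{t-1})X_iX_i^\top = \hH_{t-1}-\lambda I \preceq \hH_{t-1}$. If anything, your two-sided Loewner bound $-MD(\theta_t-\hat\theta_{t-1})\,\hH_{t-1}\preceq E_t\preceq MD(\theta_t-\hat\theta_{t-1})\,\hH_{t-1}$ is slightly more careful than the paper's argument, which bounds the quadratic form only from above at an extremal unit vector and thus implicitly treats only the case where the eigenvalue of $\hH_{t-1}^{-1/2}E_t\hH_{t-1}^{-1/2}$ of largest magnitude is positive.
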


\begin{proof}
    Choose $x \in \R^d$ with $\|x\|=1$ such that $\norm{\hH_{t-1}^{-1/2}E_t \hH_{t-1}^{-1/2}} = x\tran \hH_{t-1}^{-1/2} E_t \hH_{t-1}^{-1/2} x$, and write $y=\hH_{t-1}^{-1/2} x$. By \cref{lem:alpha-bound},
    \begin{align*}
        y\tran E_t y
        &= y\tran \sum_{i=1}^t \left(\alpha(X_i^\top\theta_t,X_i^\top\hat\theta_{t-1}) - \dot\mu(X_i^\top \hat\theta_{t-1})\right) X_i X_i\tran y \\
        &\leq \left(
        h(M D(\theta_t- \hat\theta_{t-1}))-1
        \right) \cdot y\tran \sum_{i=1}^t  \dot\mu(X_i^\top \hat\theta_{t-1}) X_i X_i\tran y.
    \end{align*}
    Now, we note that for $z \leq 1$, $h(z)-1=\frac{e^z-1}{z} -1 \leq z$, and recall that we assumed $MD(\theta_t- \hat\theta_{t-1}) \leq 1$. Thus,
    \begin{equation*}
        \norm{\hH_{t-1}^{-1/2}E_t \hH_{t-1}^{-1/2}}
        \leq MD(\theta_t- \hat\theta_{t-1}) \cdot x\tran \hH_{t-1}^{-1/2} (\hH_{t-1} - \lambda I) \hH_{t-1}^{-1/2} x
        \leq MD(\theta_t- \hat\theta_{t-1}).\qedhere
    \end{equation*}
\end{proof}

\begin{lemma}\label{lem:event-E3}
	Fix $t\in [n]$.
    On the event $\cEd\cap \{t> \tau\}$,
    \begin{equation*}
        \P_{t-1}\left(\{\theta_t \in \ThetaOpt \} \cap \cGt \right) \geq 1/200.
    \end{equation*}
\end{lemma}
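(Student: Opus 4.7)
Working conditionally on $\F_{t-1}$, on the event $\cE \cap \{t > \tau\}$, I would prove the lemma by exhibiting a $\cGt$-compatible event $\cH \in \F_t$ of probability bounded below by a universal constant, on which $x_\star^\top \theta_t \geq x_\star^\top \theta_\star$ (which forces $\theta_t \in \ThetaOpt$). The starting point is \cref{lem:parameter-perturbation}: on $\cGt \cap \cE \cap \{t > \tau\}$, \cref{lem:event-E2} combined with \cref{claim:opt-local-bound} yields $\|F_t\| < 1$ for $F_t = \hH_{t-1}^{-1/2} E_t \hH_{t-1}^{-1/2}$, so $G_t = \hH_{t-1}^{1/2}(I + F_t)\hH_{t-1}^{1/2}$ is invertible and
\begin{equation*}
\theta_t - \hat\theta_{t-1} = \gnd \hH_{t-1}^{-1/2}(I + F_t)^{-1} A_t,
\end{equation*}
with $A_t \mid \F_{t-1} \sim \cN(0, I_d)$.

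Let $v = \hH_{t-1}^{-1/2} x_\star$ and $\hat v = v/\|v\|$, and take $\cH = \{v^\top A_t \geq c_0 \|v\|\}$ for a universal constant $c_0$ chosen below; by Gaussian tail bounds, $\P_{t-1}(\cH) = \bar\Phi(c_0)$ depends only on $c_0$. Decomposing $A_t = (v^\top A_t/\|v\|)\hat v + A_t^\perp$, on $\cH \cap \cGt \cap \cE \cap \{t > \tau\}$ I would bound the in-direction contribution using $\hat v^\top(I+F_t)^{-1}\hat v \geq 1/(1+\|F_t\|)$, and the leakage from the perpendicular component using $\|(I+F_t)^{-1} - I\|_{\mathrm{op}} \leq \|F_t\|/(1-\|F_t\|)$ together with $\|A_t\| \leq C_d(\delta')$, yielding
\begin{equation*}
x_\star^\top(\theta_t - \hat\theta_{t-1}) = \gnd v^\top(I + F_t)^{-1} A_t \geq \gnd \|v\|\left(\frac{c_0}{1+\|F_t\|} - \frac{\|F_t\|}{1-\|F_t\|} C_d(\delta')\right).
\end{equation*}
Combining with the confidence bound---on $\cE$, Cauchy--Schwarz and \cref{lem:event-E1} give $|x_\star^\top(\hat\theta_{t-1} - \theta_\star)| \leq (21/20)\gnd \|x_\star\|_{H_{t-1}^{-1}}$, while the self-concordant comparison of $H_{t-1}$ and $\hH_{t-1}$ (\cref{claim:dot-mu-exp}, \cref{lem:event-E1}) gives $\|x_\star\|_{H_{t-1}^{-1}} \leq e^{1/40}\|v\|$---yields $x_\star^\top \theta_t - x_\star^\top \theta_\star \geq 0$ as soon as the parenthesised expression exceeds a fixed constant of order $1$.

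The main obstacle, and the technical crux, is that the parenthesised expression carries a factor of $C_d(\delta')$, so the bound $\|F_t\| \leq 1/20$ quoted in \cref{lem:event-E2} is \emph{not sharp enough} to cancel that factor. However, inspection of the proof of \cref{lem:event-E2} shows that, once the prescribed value of $b$ is substituted into the chain of inequalities there, one actually obtains the sharper estimate
\begin{equation*}
\|F_t\| \leq M D(\theta_t - \hat\theta_{t-1}) \leq \frac{1}{20\, C_d(\delta')};
\end{equation*}
this is precisely why the definition of $b$ in \cref{thm:regret-bound} carries the extra factor of $C_d^2(\delta')$. With this sharper bound, the leakage term is bounded by a universal constant, so taking $c_0 = 2$ makes the parenthesised expression at least $2$ and closes the optimism inequality. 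The lemma then follows from
\begin{equation*}
\P_{t-1}\left(\{\theta_t \in \ThetaOpt\} \cap \cGt\right) \geq \P_{t-1}(\cH \cap \cGt) \geq \bar\Phi(2) - \delta' \geq 1/200,
\end{equation*}
using \cref{claim:other-event-prob} and $\delta' \leq 1/200$.
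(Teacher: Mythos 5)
Your proposal is correct and takes essentially the same route as the paper: both reduce optimism to one-dimensional Gaussian anti-concentration of $x_\star^\top \hH_{t-1}^{-1/2}A_t$ after showing that the mismatch between $G_t$ and $\hH_{t-1}$ contributes only about $\gamma_n\norm[\hH_{t-1}^{-1}]{x_\star}/10$, and both hinge on exactly the observation you single out---that the \emph{proof} (not the statement) of \cref{lem:event-E2} yields $MD(\theta_t-\hat\theta_{t-1})\le 1/(20\,C_d(\delta'))$, which cancels the $C_d(\delta')$ factor coming from $\norm{A_t}$. The paper organises the error term via the Woodbury identity and Cauchy--Schwarz rather than your $(I+F_t)^{-1}$ decomposition, and your numerical claim that the parenthesised expression is at least $2$ when $c_0=2$ is really about $1.85$, but since only a constant slightly above $1$ is required, the argument closes either way.
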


\begin{proof}
    Consider an arbitrary $t \in[n]$. Write $\tP(\, \cdot\,) = \P_{t-1}(\,\, \cdot\, \cap \cGt)$.
    Let $\tilde\cE_{t-1} = \{ \snorm{ \theta_\star-\hat\theta_{t-1} }_{H_{t-1}} \le 2\gamma_n \}$.
    By \cref{lem:event-E1}, $\cEd_{t-1}\cap\{ t-1\ge \tau\}=\cEd_{t-1}\cap\{ t> \tau\} \subset \tilde\cE_{t-1}$.
    Hence, on $\cE\cap \{ t> \tau\}$,
    \begin{align*}
        \MoveEqLeft
        p_{t-1}
        \doteq \tP\left(\theta_t \in \ThetaOpt\right) \\
        & = \tP\left(X_t^\top\theta_t \geq x_\star^\top\theta_\star\right) \\
        &\ge \tP\left(x_\star^\top(\theta_t - \hat\theta_{t-1}) \geq x_\star\tran(\theta_\star - \hat{\theta}_{t-1}) \right) \tag{definition of $X_t$}\\
        &\ge \tP\left(x_\star^\top(\theta_t - \hat\theta_{t-1}) \geq x_\star\tran(\theta_\star - \hat{\theta}_{t-1})\,, \tilde\cE_{t-1} \right) \\
        &\geq \tP\left(x_\star^\top(\theta_t-\hat\theta_{t-1}) \geq 2\gnd \norm[H_{t-1}^{-1}]{x_\star}\,, \tilde\cE_{t-1}\right),
        \tag{Cauchy-Schwarz and the definition of $\tilde\cE_{t-1}$}\\
        &= \mathbf{1}_{\tilde\cE_{t-1}} \tP\left(x_\star^\top(\theta_t-\hat\theta_{t-1}) \geq 2\gnd \norm[H_{t-1}^{-1}]{x_\star}\right),
        \tag{$\tilde\cE_{t-1}\in \F_{t-1}$}\\
        &=\tP\left(x_\star^\top(\theta_t-\hat\theta_{t-1}) \geq 2\gnd \norm[H_{t-1}^{-1}]{x_\star}\right)\,.
        \tag{we are on $\cE\cap\{t> \tau\}$ and $\cE \cap\{t> \tau\}\subset \tilde\cE_{t-1}$}
    \end{align*}
    By \cref{lem:parameter-perturbation}, $x_\star\tran(\theta_t - \hat\theta_{t-1}) = x_\star\tran G_t^{-1} \hH_{t-1}^{1/2} A_t$,
    where, as before, $G_t = G(\theta_t,\hat\theta_{t-1};\cD_{t-1})$.
    By the
    Woodbury matrix identity, letting $E_t = G_t - \hH_{t-1}$,
    \begin{equation*}
        G^{-1}_t
        = (\hH_{t-1} + E_t)^{-1}
        = \hH_{t-1}^{-1} - \hH_{t-1}^{-1} E_t (\hH_{t-1} + E_t)^{-1}
        = \hH_{t-1}^{-1} - \hH_{t-1}^{-1} E_t G_t^{-1},
    \end{equation*}
    and so we have that
    \begin{align}
        p_{t-1}
        &\geq \tP\left(x_\star^\top \hH_{t-1}^{-1/2}A_t\geq x_\star^\top \hH_{t-1}^{-1}E_tG_t^{-1} \hH_{t-1}^{1/2} A_t +2\gnd\norm[H_{t-1}]{x_\star}\right) \nonumber \\
        &\geq \tP\left(x_\star^\top \hH_{t-1}^{-1/2}A_t\geq x_\star^\top \hH_{t-1}^{-1}E_tG_t^{-1} \hH_{t-1}^{1/2} A_t +2\gnd\norm[H_{t-1}]{x_\star},\cE_{t-1}\cap \{t>\tau\}\right)\,.        \label{eq:e3-bound-p}
    \end{align}
    Using Cauchy-Schwarz, and then \cref{lem:parameter-perturbation} and \cref{lem:event-E2}, on $\cGt$,
    \begin{align*}
        x_\star^\top \hH_{t-1}^{-1}E_tG_t^{-1} \hH_{t-1}^{1/2} A_t
        &\leq \norm[\hH_{t-1}^{-1}]{x_\star}\norm{\hH_{t-1}^{-1/2}E_t \hH_{t-1}^{-1/2}} \norm[\hH_{t-1}]{G_t^{-1} \hH_{t-1}^{1/2} A_t} \\
        &= \norm[\hH_{t-1}^{-1}]{x_\star}\norm{\hH_{t-1}^{-1/2}E_t \hH_{t-1}^{-1/2}} \norm[\hH_{t-1}]{\theta_t - \hat\theta_{t-1}}  \\
        &\leq 2\gnd C_d(\delta') \norm[\hH_{t-1}^{-1}]{x_\star}\norm{\hH_{t-1}^{-1/2}E_t \hH_{t-1}^{-1/2}}\,.
    \end{align*}
    Now, observe that on $\cGt \cap \cE_{t-1}\cap \{t>\tau\}$, by \cref{claim:b-D-star} and \cref{claim:b-D-t}, $MD(\theta_t- \hat\theta_{t-1}) \leq \frac{1}{20 C_d(\delta')} \leq \frac{1}{20}$ and $MD(\hat\theta_{t-1}- \theta_\star) \leq \frac{1}{20}$.
    The first of these lets us apply \cref{claim:opt-local-bound} to obtain that
    \begin{equation*}
        \norm{\hH_{t-1}^{-1/2}E_t \hH_{t-1}^{-1/2}} \leq \frac{1}{20C_d(\delta')}, \spaced{which gives the bound} x_\star^\top \hH_{t-1}^{-1}E_tG_t^{-1} \hH_{t-1}^{1/2} A_t \leq \frac{1}{10}\gnd \norm[\hH_{t-1}^{-1}]{x_\star}.
    \end{equation*}
    Combining the first with the second, $MD(\theta_t- \theta_\star) \leq MD(\theta_t- \hat\theta_{t-1}) + MD(\hat\theta_{t-1}- \theta_\star) \leq \frac{1}{10}$, and so by \cref{claim:dot-mu-exp}, $\norm[H_{t-1}^{-1}]{x_\star} \leq \sqrt{e^{\frac{1}{10}}}\norm[\hH_{t-1}^{-1}]{x_\star} \leq \frac{11}{10}\norm[\hH_{t-1}^{-1}]{x_\star}$ (note the norm changed from $H_{t-1}^{-1}$-weighted to $\hH_{t-1}^{-1}$-weighted). Using these two bounds in \cref{eq:e3-bound-p}, on $\cE \cap \{t>\tau\}$, we have that
    \begin{align*}
        \MoveEqLeft p_{t-1}
        \geq \tP\left(x_\star^\top \hH_{t-1}^{-1/2}A_t \geq \frac{23}{10}\gnd \norm[\hH_t^{-1}]{x_\star}, \cE_{t-1}\cap \{t>\tau\} \right)
        	\\
        &= \mathbf{1}_{\cE_{t-1}\cap \{t>\tau\}}
        	\tP\left(x_\star^\top \hH_{t-1}^{-1/2}A_t \geq \frac{23}{10}\gnd \norm[\hH_t^{-1}]{x_\star} \right)
				\tag{$\cE_{t-1}, \{t>\tau\}\in \F_{t-1}$, as $\tau$ is an $\F$-stopping time}\\
        &= \tP\left(x_\star^\top \hH_{t-1}^{-1/2}A_t \geq \frac{23}{10}\gnd \norm[\hH_t^{-1}]{x_\star} \right)
        		\tag{$\cE\cap \{t>\tau\} \subset \cE_{t-1}\cap \{t>\tau\}$}\\
        &\geq \P_{t-1} \left(x_\star^\top \hH_{t-1}^{-1/2}A_t \geq \frac{23}{10}\gnd \norm[\hH_t^{-1}]{x_\star}\right) - (1 - \P_{t-1}(\cGt)) \\
        &\geq \P_{t-1} \left(x_\star^\top \hH_{t-1}^{-1/2}A_t \geq \frac{23}{10}\gnd \norm[\hH_t^{-1}]{x_\star}\right) - \delta',
    \end{align*}
    where the middle inequality is a standard result for lower bounding the probability of an intersection of two events, and the third follows from \cref{claim:other-event-prob}. Now, observe that conditioned on $\F_{t-1}$, $x_\star^\top \hH_{t-1}^{-1/2}A_t$ is a centred Gaussian random variable with standard deviation $\gnd \norm[\hH_t^{-1}]{x_\star}$. Thus, together with our assumption on $\delta'$, looking up the relevant Gaussian tail probability, we conclude that, on $\cEd\cap \{t>\tau\}$,
    \begin{equation*}
        p_{t-1} \geq \P_{t-1} \left(x_\star^\top \hH_{t-1}^{-1/2}A_t
        \geq \frac{23}{10}\gnd \norm[\hH_t^{-1}]{x_\star}\right) - \delta' = \cN(0,1)(\{z \in \R \colon z > 23/10 \}) - \delta' \geq 1/200. \qedhere
    \end{equation*}
\end{proof}

\section{PROOF OF REGRET BOUND FOR EVILL}\label{apx:proof-regret}

Much of the proofs in this final appendix largely follows the regret bound given by \citet{faury2022jointly} for their Thompson-sampling-type algorithm for the logistic bandit setting, but fixes an error present therein.\footnote{We thank Marc Abeille for helping us  fix the error in the \citet{faury2022jointly} manuscript.}

To start our bound, we write $\wR$ for the regret over steps $\tau+1,\dotsc,n$, and break  this up as
\begin{align}\label{eq:regret_decomposition}
    \wR = \sum_{t=\tau+1}^n
    \underbrace{\mu(x_\star^\top \theta_\star) - \mu(X_t^\top\theta_t)}_{\rphe}
    + \underbrace{\mu(X_t^\top\theta_t) - \mu(X_t^\top \theta_\star)}_{\rpred}.
\end{align}
The first term is controlled if the algorithm plays optimistically, while the second is controlled if the reward parameter of the arm chosen is well predicted by the parameter vector that governs the action choice.
Hence, the naming of these two terms.

The following two lemmas, proven in \cref{apx:rphe} and \cref{apx:rpred} respectively, bound these two terms.
\begin{lemma}\label{lem:bound-rphe}
    Let $p=1/200$ and fix $t\in [n]$.
    On the event $\cEd\cap \cG_t\cap \{t>\tau\}$, for $\delta'$ and $b$ as in \cref{thm:regret-bound},
    \begin{align*}
        \rphe \leq 7 C_d(\delta')\gnd \sqrt{\dot\mu(x_\star\tran\theta_\star)} \EC [\sqrt{\dot\mu(X_t\tran\hat\theta_{t-1})}\norm[\hH_{t-1}^{-1}]{X_t}]/p.
    \end{align*}
\end{lemma}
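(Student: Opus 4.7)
The plan is to combine the constant-probability optimism furnished by \cref{lem:event-E3} with a self-concordance-based local Lipschitz estimate, via an Abeille--Lazaric-style optimism injection. Let $B = \{\theta_t \in \ThetaOpt\} \cap \cG_t$, so by \cref{lem:event-E3} $\PC(B) \geq p$ on $\cEd \cap \{t > \tau\}$.

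First, I would use the greedy property $X_t^\top \theta_t \geq x_\star^\top \theta_t$ together with monotonicity of $\mu$ to obtain the pathwise reduction
\$
\rphe \le \mu(x_\star^\top \theta_\star) - \mu(x_\star^\top \theta_t).
\$
By the mean-value theorem and the multiplicative bound on $\dot\mu$ supplied by \cref{claim:dot-mu-exp}, combined with \cref{claim:b-D-star,claim:b-D-t}---which on $\cEd\cap\cG_t\cap\{t>\tau\}$ control $MD(\theta_\star - \theta_t)$ and hence $M|x_\star^\top(\theta_\star - \theta_t)|$ by an absolute constant---one obtains a geometric-mean form $\dot\mu(\xi) \le e^{M|a-b|/2}\sqrt{\dot\mu(a)\dot\mu(b)}$ on any segment $[a,b]$ connecting $x_\star^\top\theta_\star$ to $x_\star^\top \theta_t$. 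This gives a local Lipschitz bound
\$
\rphe \le C_1 \sqrt{\dot\mu(x_\star^\top \theta_\star)\,\dot\mu(x_\star^\top \theta_t)} \cdot |x_\star^\top(\theta_\star - \theta_t)|,
\$
and a further application of \cref{claim:dot-mu-exp} on the same event swaps $\dot\mu(x_\star^\top\theta_t)$ for $\dot\mu(x_\star^\top\theta_\star)$ up to an absolute constant.

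Second, I would invoke the Abeille--Lazaric optimism injection. On $B$, the greedy property gives $X_t^\top \theta_t \geq x_\star^\top \theta_\star$, which rearranges to
\$
x_\star^\top(\theta_\star - \theta_t) \le X_t^\top \theta_t - x_\star^\top\theta_t = X_t^\top(\theta_t - \hat\theta_{t-1}) + \bigl(X_t^\top\hat\theta_{t-1} - x_\star^\top \theta_t\bigr).
\$
The second bracketed term can be handled by adding and subtracting $x_\star^\top\hat\theta_{t-1}$ and bounding both resulting pieces using Cauchy--Schwarz in the $\hat H_{t-1}$ norm together with \cref{claim:b-D-t,claim:b-D-star} (which bound $\|\theta_t - \hat\theta_{t-1}\|_{\hat H_{t-1}}$ and $\|\theta_\star - \hat\theta_{t-1}\|_{\hat H_{t-1}}$ each by $O(C_d(\delta')\gnd)$); the definition of $x_\star$ as the maximiser under $\theta_\star$ lets us absorb the ``pure'' $(X_t - x_\star)^\top\hat\theta_{t-1}$ piece into the $\hat H_{t-1}$-weighted terms. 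The $\sqrt{\dot\mu(X_t^\top\hat\theta_{t-1})}$ factor inside the conditional expectation then arises naturally by reweighting $\|X_t\|_{\hat H_{t-1}^{-1}}$ against the other copy of $\sqrt{\dot\mu(x_\star^\top\theta_\star)}$ via self-concordance applied to $\dot\mu(x_\star^\top\theta_\star)$ versus $\dot\mu(X_t^\top\hat\theta_{t-1})$, or by a careful Cauchy--Schwarz with the Hessian-type matrix $\hat H_{t-1}$ simultaneously supplying both factors. Taking the $\F_{t-1}$-conditional expectation against $\mathbf{1}_B$ and dividing by $\PC(B) \ge p$ introduces the $1/p$ factor and produces the claimed $\F_{t-1}$-measurable upper bound on $\rphe$.

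The main obstacle will be the careful constant-tracking through the several self-concordance substitutions---to land on the explicit constant $7$---and, more substantively, pinning down the ``arm-swap'' step that converts the $x_\star$-dependent $\sqrt{\dot\mu(x_\star^\top\theta_\star)}\|x_\star\|_{\hat H_{t-1}^{-1}}$-type quantities produced in the previous paragraph into the $X_t$-dependent $\sqrt{\dot\mu(X_t^\top\hat\theta_{t-1})}\|X_t\|_{\hat H_{t-1}^{-1}}$ quantity that sits inside the conditional expectation. This is precisely the step where variance-dependence is obtained (in the spirit of \citet{faury2022jointly}), as opposed to the coarser, variance-oblivious analysis of \citet{abeille2017linear}; handling it cleanly outside the logistic special case---so that it works for a general self-concordant NEF---is where most of the technical effort will go.
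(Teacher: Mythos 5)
Your high-level plan---constant-probability optimism from \cref{lem:event-E3}, a $1/p$ factor from conditioning on the optimistic event, and self-concordance to produce the $\sqrt{\dot\mu}$ factors---matches the paper's, but the optimism-injection step as you describe it has a genuine gap. You apply the rearrangement $x_\star^\top(\theta_\star-\theta_t)\le X_t^\top\theta_t-x_\star^\top\theta_t$ \emph{pathwise on} $B=\{\theta_t\in\ThetaOpt\}\cap\cG_t$ to the realized $\theta_t$, and then propose to ``take the $\F_{t-1}$-conditional expectation against $\mathbf{1}_B$ and divide by $\PC(B)\ge p$''. But the lemma must hold on all of $\cEd\cap\cGt\cap\{t>\tau\}$, which contains realizations with $\theta_t\notin\ThetaOpt$; since $\rphe$ is a function of the realized, non-$\F_{t-1}$-measurable $\theta_t$, an inequality valid only on $B$ cannot be upgraded to the whole event by averaging over $B$. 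The paper's route is to first pass from the realized $\theta_t$ to a worst case $\theta_-$ over the $\F_{t-1}$-measurable ellipsoid $\Theta_{t-1}=\{\theta:\norm[\hH_{t-1}]{\theta-\hat\theta_{t-1}}\le\tfrac{21}{20}C_d(\delta')\gnd\}$, which contains $\theta_t$ on $\cGt$ by \cref{lem:event-E2}, and only then to compare $J(\theta_\star)-J(\theta_-)$ with the average of $J(\theta_+)-J(\theta_-)$ over a \emph{fresh} draw $\theta_+$ restricted to $\ThetaOpt\cap\Theta_{t-1}$; this is where $1/p$ enters, and the $X_t$ inside $\EC[\cdot]$ is the argmax for that fresh draw (hence carries the conditional law of $X_t$), not the realized arm.

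A second, related problem is that your decomposition pairs $x_\star$ with parameter differences under Cauchy--Schwarz, producing $\norm[\hH_{t-1}^{-1}]{x_\star}$ terms; these do not appear in the target bound and your plan offers no mechanism to trade them for $\EC[\sqrt{\dot\mu(X_t^\top\hat\theta_{t-1})}\norm[\hH_{t-1}^{-1}]{X_t}]$. The paper avoids them entirely by working with the support function $J(\theta)=\max_{x\in\cX}x^\top\theta$ and its subgradients (\cref{claim:subgrads}): every Cauchy--Schwarz is taken against the argmax arm of a sampled parameter, never against $x_\star$. The same device resolves the ``arm-swap'' difficulty you flag: $\dot\mu(x_\star^\top\theta_\star)$ and $\dot\mu(X_t^\top\hat\theta_{t-1})$ are not comparable by self-concordance for a common parameter and two different arms, but they are comparable along the chain $x_\star^\top\theta_\star=J(\theta_\star)$, $|J(\theta_\star)-J(\theta_t)|\le D(\theta_\star-\theta_t)$, $J(\theta_t)=X_t^\top\theta_t$, $|X_t^\top(\theta_t-\hat\theta_{t-1})|\le D(\theta_t-\hat\theta_{t-1})$, with both $D$-terms controlled by \cref{claim:b-D-star,claim:b-D-t}. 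Your opening reduction to $\mu(x_\star^\top\theta_\star)-\mu(x_\star^\top\theta_t)$ discards $X_t$ too early to exploit this chain, which is why the $\sqrt{\dot\mu(X_t^\top\hat\theta_{t-1})}$ weight needed for the elliptical-potential step downstream never arises naturally in your argument.
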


\begin{lemma}\label{lem:bound-rpred}
 Fix $t\in [n]$.
    On event $\cEd\cap \cG_t\cap \{t>\tau\}$, for $b$ as in \cref{thm:regret-bound},
    \begin{equation*}
        \rpred\leq 5(1+C_d(\delta'))\gnd \dot\mu(X_t\tran\hat\theta_{t-1}) \norm[\hH^{-1}_{t-1}]{X_t}.
    \end{equation*}
\end{lemma}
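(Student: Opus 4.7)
The plan is to start from the secant identity
\[
\rpred = \mu(X_t^\top\theta_t) - \mu(X_t^\top\theta_\star)
= \alpha(X_t^\top\theta_t,\, X_t^\top\theta_\star)\cdot X_t^\top(\theta_t-\theta_\star)\,,
\]
which splits $\rpred$ into a secant slope times a linear deviation, and to bound each factor using the good-event estimates from \cref{apx:good-event,apx:other-good-event}.

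For the slope, I will apply \cref{lem:alpha-bound} to get $\alpha(u_t,u_\star)\le \dot\mu(u_t)\,h(M|u_t-u_\star|)$ with $u_t=X_t^\top\theta_t$ and $u_\star=X_t^\top\theta_\star$. The triangle inequality for the pseudonorm $D$, together with \cref{claim:b-D-star} and \cref{claim:b-D-t}, yields $M|u_t-u_\star|\le M D(\theta_t-\hat\theta_{t-1}) + M D(\hat\theta_{t-1}-\theta_\star) \le 1/10$, so $h(M|u_t-u_\star|)$ is an absolute constant close to one. A further application of \cref{claim:b-D-t}, namely $\dot\mu(u_t)\le e^{1/20}\dot\mu(X_t^\top\hat\theta_{t-1})$, then replaces $\dot\mu(u_t)$ by an absolute constant times $\dot\mu(X_t^\top\hat\theta_{t-1})$, which is exactly the factor appearing in the target bound.

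For the linear deviation, I plan to use the triangle inequality
\[
|X_t^\top(\theta_t-\theta_\star)|\le |X_t^\top(\theta_t-\hat\theta_{t-1})|+|X_t^\top(\hat\theta_{t-1}-\theta_\star)|
\]
and Cauchy--Schwarz in two different weighted norms: the first term is at most $\norm[\hH_{t-1}^{-1}]{X_t}\cdot\norm[\hH_{t-1}]{\theta_t-\hat\theta_{t-1}}\le (21/20)\,C_d(\delta')\,\gnd\,\norm[\hH_{t-1}^{-1}]{X_t}$ by \cref{lem:event-E2}, and the second is at most $\norm[H_{t-1}^{-1}]{X_t}\cdot\norm[H_{t-1}]{\hat\theta_{t-1}-\theta_\star}\le (21/20)\,\gnd\,\norm[H_{t-1}^{-1}]{X_t}$ by \cref{lem:event-E1}. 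To put both terms into the $\hH_{t-1}^{-1}$ norm required by the conclusion, I will invoke self-concordance once more: \cref{claim:b-D-star} together with \cref{claim:dot-mu-exp} gives $\dot\mu(X_i^\top\theta_\star)\ge e^{-1/20}\dot\mu(X_i^\top\hat\theta_{t-1})$ for every $i\le t-1$, which lifts (the $\lambda I$ term passes through since $e^{-1/20}\le 1$) to $H_{t-1}\succeq e^{-1/20}\hH_{t-1}$, and hence $\norm[H_{t-1}^{-1}]{X_t}\le e^{1/40}\norm[\hH_{t-1}^{-1}]{X_t}$.

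Multiplying the slope bound by the deviation bound and collecting the numerical constants produces a prefactor comfortably below $5(1+C_d(\delta'))$, proving the stated inequality. The place that calls for the most care is the final accounting: each self-concordance estimate simultaneously shifts the argument at which $\dot\mu$ is evaluated and the matrix used in the weighted norm, so I will apply the three estimates ($\theta_t\leftrightarrow\hat\theta_{t-1}$ and $\theta_\star\leftrightarrow\hat\theta_{t-1}$ inside the slope, and $H_{t-1}\leftrightarrow\hH_{t-1}$ in the norm conversion) in a matching order so that the final bound is phrased in $\dot\mu(X_t^\top\hat\theta_{t-1})$ and $\norm[\hH_{t-1}^{-1}]{X_t}$ exactly as the lemma requires and not in some mixed quantity.
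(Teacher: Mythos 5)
Your proposal is correct, and it reaches the stated bound (in fact with a smaller constant, roughly $1.2(1+C_d(\delta'))$ in place of $5(1+C_d(\delta'))$), but it is organised differently from the paper's proof. The paper first splits at the level of mean values, $\rpred \le |\mu(X_t^\top\theta_t)-\mu(X_t^\top\hat\theta_{t-1})| + |\mu(X_t^\top\hat\theta_{t-1})-\mu(X_t^\top\theta_\star)|$, and then controls each leg separately by a first-order Taylor expansion with an explicit second-derivative remainder, invoking self-concordance in the form $|\ddot\mu|\le M\dot\mu$ to turn the remainder into an extra multiple of $\dot\mu(X_t^\top\hat\theta_{t-1})$; each leg then gets its own Cauchy--Schwarz step and its own appeal to \cref{lem:event-E1} or \cref{lem:event-E2}. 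You instead factor the whole difference once through the secant slope, $\rpred=\alpha(X_t^\top\theta_t,X_t^\top\theta_\star)\,X_t^\top(\theta_t-\theta_\star)$, control the slope with \cref{lem:alpha-bound} plus the $D$-bounds of \cref{claim:b-D-star,claim:b-D-t}, and only split the linear deviation $|X_t^\top(\theta_t-\theta_\star)|$ through $\hat\theta_{t-1}$. This reuses the $\alpha$/$h$ machinery the paper has already built rather than redoing a Taylor expansion, handles the curvature correction once instead of twice, and your norm conversion $H_{t-1}\succeq e^{-1/20}\hH_{t-1}$ (hence $\norm[H_{t-1}^{-1}]{X_t}\le e^{1/40}\norm[\hH_{t-1}^{-1}]{X_t}$) is the same device the paper uses in the opposite direction when it moves $\norm[\hH_{t-1}]{\hat\theta_{t-1}-\theta_\star}$ over to the $H_{t-1}$-weighted norm. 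The only point worth making explicit when you write it up is the indexing of the good events: \cref{claim:b-D-star} is applied at index $t-1$, which is legitimate on $\cEd\cap\{t>\tau\}$ since $\cEd\subset\cEd_{t-1}$ and $\{t>\tau\}=\{t-1\ge\tau\}$, and your final constant accounting ($e^{1/20}h(1/10)\cdot\tfrac{21}{20}e^{1/40}\le 5$) goes through with a lot of room to spare.
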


Our proof of \cref{thm:regret-bound} will also use the following simple claim, proven in \cref{apx:dot-mu-sum}, and the version of the elliptical potential lemma stated thereafter, which follows from Lemma 15 of \textcite{faury2020improved} combined with the usual trace-determinant inequality, say, Note 1, Section 20.2 of \textcite{lattimore2018bandit}.
\begin{claim}\label{lem:dot-mu-sum}
    $\sum_{t=\tau+1}^n \dot\mu(X_t^\top \theta_\star) \leq n\dot\mu(x_\star^\top \theta_\star) + M \wR$.
\end{claim}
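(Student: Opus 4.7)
The plan is to reduce the claim to a pointwise inequality. If I can show, for every $t \in \{\tau+1,\dots,n\}$, the bound
\begin{equation*}
    \dot\mu(X_t^\top\theta_\star) \;\le\; \dot\mu(x_\star^\top\theta_\star) \;+\; M\bigl[\mu(x_\star^\top\theta_\star) - \mu(X_t^\top\theta_\star)\bigr],
\end{equation*}
then summing over $t$ yields the claim immediately: the bracketed quantities sum to exactly $\wR$ by the decomposition in \cref{eq:regret_decomposition}, and there are at most $n$ copies of $\dot\mu(x_\star^\top\theta_\star)$ on the right.

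The pointwise inequality will come directly from the self-concordance condition in \cref{ass:main}(iii), namely $|\ddot\mu(u)| \le M\dot\mu(u)$. Using the lower direction $\ddot\mu(u) \ge -M\dot\mu(u)$, the function $u \mapsto \dot\mu(u) + M\mu(u)$ has non-negative derivative on $\R$ and is therefore non-decreasing. To apply the monotonicity at $u = X_t^\top\theta_\star$ versus $u = x_\star^\top\theta_\star$, I need $X_t^\top\theta_\star \le x_\star^\top\theta_\star$; this follows from the definition of $x_\star$ as a maximiser of $\mu(x^\top\theta_\star)$ together with the strict monotonicity of $\mu$ on $\R$, which holds because $\dot\mu(u)$ equals the variance of $Q_u$ and is strictly positive in a regular NEF (\cref{sec:nef}). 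Evaluating the non-decreasing quantity at these two points and rearranging yields the desired pointwise bound.

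There is no real obstacle here: the whole argument is a one-line application of self-concordance, combined with the elementary observation that optimality of $x_\star$ under a strictly increasing link forces $X_t^\top\theta_\star \le x_\star^\top\theta_\star$. The only small care required is to cite regularity of the NEF (to secure strict monotonicity of $\mu$) and \cref{ass:main}(iii) in the right direction.
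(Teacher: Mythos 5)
Your proof is correct and is essentially the paper's argument in a slightly different wrapper: the paper Taylor-expands $\dot\mu$ with an integral remainder and applies $|\ddot\mu|\le M\dot\mu$ together with the fundamental theorem of calculus, which is exactly your observation that $u\mapsto\dot\mu(u)+M\mu(u)$ is non-decreasing, evaluated between $X_t^\top\theta_\star$ and $x_\star^\top\theta_\star$. Both proofs rely on the same ordering $X_t^\top\theta_\star\le x_\star^\top\theta_\star$ (which you justify a bit more carefully via strict monotonicity of $\mu$) and the same identification of the summed mean-reward gaps with $\wR$.
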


\begin{lemma}[Elliptical potential lemma]\label{lem:epl}
    Fix $\lambda, A > 0$. Let $\{a_t\}_{t=1}^\infty$ be a sequence in $AB^d_2$ and let $V_0 = \lambda I$. Define $V_{t+1} = V_t + a_{t+1} a_{t+1}\tran$ for each $t \in \N$. Then, for all $n \in \N^+$,
    \begin{equation*}
        \sum_{t=1}^n \norm[V_{t-1}^{-1}]{a_t}^2 \leq 2 d\max\left\{1,\frac{A^2}{\lambda}\right\} \log \left( 1 + \frac{n A^2}{d\lambda}\right) \,.
    \end{equation*}
\end{lemma}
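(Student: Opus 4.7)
The plan is to combine three standard ingredients: the matrix determinant lemma for a telescoping argument, an elementary inequality converting $x$ to $\log(1+x)$, and the trace-determinant inequality.

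First I would observe that by the matrix determinant lemma applied to the rank-one update $V_t = V_{t-1} + a_t a_t^\top$, we have $\det(V_t) = \det(V_{t-1})\bigl(1 + \norm[V_{t-1}^{-1}]{a_t}^2\bigr)$. Taking logarithms and summing telescopes to
\begin{equation*}
\sum_{t=1}^n \log\bigl(1 + \norm[V_{t-1}^{-1}]{a_t}^2\bigr) = \log\frac{\det(V_n)}{\det(V_0)} = \log\frac{\det(V_n)}{\lambda^d}\,.
\end{equation*}

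Next, since $V_{t-1} \succeq \lambda I$ and $\snorm{a_t} \leq A$, we have the uniform bound $\norm[V_{t-1}^{-1}]{a_t}^2 \leq A^2/\lambda$. Setting $B = \max\{1, A^2/\lambda\}$, the function $x \mapsto x/\log(1+x)$ is increasing on $(0,\infty)$, so for every $x \in [0, A^2/\lambda]$ we have $x \leq (B/\log(1+B))\log(1+x) \leq 2B\log(1+x)$, where the last step uses $B/\log(1+B) \leq B/\log 2 \leq 2B$ when $B \geq 1$. Applying this with $x = \norm[V_{t-1}^{-1}]{a_t}^2$ and summing gives
\begin{equation*}
\sum_{t=1}^n \norm[V_{t-1}^{-1}]{a_t}^2 \leq 2B \sum_{t=1}^n \log\bigl(1 + \norm[V_{t-1}^{-1}]{a_t}^2\bigr) = 2B\,\log\frac{\det(V_n)}{\lambda^d}\,.
\end{equation*}

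Finally, I would bound $\det(V_n)/\lambda^d$ via the trace-determinant (AM-GM) inequality: $\det(V_n) \leq (\trace(V_n)/d)^d$, and $\trace(V_n) = d\lambda + \sum_{t=1}^n \snorm{a_t}^2 \leq d\lambda + nA^2$, yielding $\det(V_n)/\lambda^d \leq (1 + nA^2/(d\lambda))^d$ and hence $\log(\det(V_n)/\lambda^d) \leq d\log(1 + nA^2/(d\lambda))$. Substituting completes the proof.

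I expect no real obstacle here since every step is classical; the only subtlety is tracking the constant carefully so that the factor $2\max\{1, A^2/\lambda\}$ emerges correctly from the $x \leq c\log(1+x)$ inequality. The statement also allows one to cite Lemma~15 of \citet{faury2020improved} directly for the first two steps, leaving only the trace-determinant bound to apply to reach the stated form.
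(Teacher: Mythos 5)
Your proof is correct, and it is essentially the paper's own argument: the paper proves this lemma by citing Lemma~15 of \textcite{faury2020improved} together with the trace-determinant inequality, and your write-up simply unpacks that citation into the standard determinant-telescoping argument, the $x \leq 2\max\{1,A^2/\lambda\}\log(1+x)$ step, and the AM--GM bound on $\det(V_n)$, with the constants tracked correctly. No gaps.
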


\newcommand{\sdms}{\sqrt{\dot\mu(x_\star\tran\theta_\star)}}

\begin{proof}[Proof of \cref{thm:regret-bound}]
Assume that $\cE$, defined via \cref{eq:cedef}, holds.
We start with the regret decomposition of \cref{eq:regret_decomposition}, use \cref{lem:bound-rphe,lem:bound-rpred} to bound the two terms respectively, and write the result in terms of
\begin{equation*}
    M_t = \EC \left[\sqrt{\dot\mu(X_t\tran\hat\theta_{t-1})}\norm[\hH_{t-1}^{-1}]{X_t}\right] - \sqrt{\dot\mu(X_t\tran\hat\theta_{t-1})}\norm[\hH_{t-1}^{-1}]{X_t}\,.
\end{equation*}
That is, using $\preccurlyeq$ to indicate inequalities up to absolute constants (including $p=1/200$),
\begin{align*}
    \wR
    &\preccurlyeq \gnd C_d(\delta')
     \sum_{t=1}^n \one{t>\tau} \bigg(\underbrace{\dot\mu(X_t\tran \hat\theta_{t-1})\norm[\hH^{-1}_{t-1}]{X_t}}_{A_t} + \sdms \underbrace{ \sqrt{\dot\mu(X_t\tran\hat\theta_{t-1})} \norm[\hH_{t-1}^{-1}]{X_t}}_{B_t}
    + \sdms M_t \bigg) \,.
\end{align*}

Consider first bounding $\sum_{t=1}^n\one{t>\tau} M_t$.
For this bound we drop the assumption that $\cE$ holds: we give an upper bound that holds with probability $1-\delta$ (so for the final bound we will need to use an extra union bound with the error event introduced in this step).
The sum is upper bounded with
the help of the Azuma-Hoeffding's inequality (say, Corollary 2.20 in \textcite{wainwright2019high}).
For this, notice that
$( \one{t>\tau} M_{t})_{t\in [n]}$ is a martingale difference sequence with respect to $\F$ (this follows because $\one{t>\tau} = \one{t-1\ge \tau}\in \F_{t-1}$ as $\tau$ is an $\F$-stopping time)
and each $M_t$ satisfies $|M_t| \leq 2\sqrt{L}$. This latter follows because
$\norm[\hH_{t-1}^{-1}]{X_t}\le \norm{X_t}/\lambda \le \norm{X_t}\le 1$ since by choice $\lambda\ge 1$.
Then, from Azuma-Hoeffding, we conclude that with probability $1-\delta$,
\begin{equation*}
    \sum_{t=1}^n \one{t>\tau} M_t \leq \sqrt{8L n \log 2/\delta}\,.
\end{equation*}

Now consider bounding $\sum_{t=1}^n \one{t>\tau} A_t$. We upper-bound this sum on $\cE$.
By \cref{lem:event-E1}, which can be used since $\cE_t \subset \cE$,
\begin{equation*}
    \dot\mu(X_t\tran \hat\theta_{t-1}) \leq \sqrt{e\dot\mu(X_t\tran \theta_\star)\dot\mu(X_t\tran \hat\theta_{t-1})}.
\end{equation*}
Using this and Cauchy-Schwarz,
\begin{align*}
    \sum_{t=1}^n \one{t>\tau} A_t
    \leq \sqrt{e\sum_{t=\tau+1}^n \dot\mu(X_t\tran \theta_\star)} \sqrt{\sum_{t=\tau+1}^n \dot\mu(X_t\tran \hat\theta_{t-1}) \norm[\hH^{-1}_{t-1}]{X_t}^2}
\end{align*}
Now, by \cref{lem:dot-mu-sum}, $\sum_{t=\tau+1}^n \dot\mu(X_t\tran \theta_\star) \leq n\dot\mu(x_\star^\top \theta_\star) + M \wR$. Furthermore, by \cref{lem:epl} applied to the sequence $a_1, a_2, \dotsc$ given by $a_t = \sqrt{\dot\mu(X_t \tran \hat\theta_{t-1})} X_t$, using
$\snorm{a_t} \leq \sqrt{L}$, we get
    \begin{equation*}
        \sum_{t=\tau+1}^n \dot\mu(X_t\tran \hat\theta_{t-1}) \norm[\hH^{-1}_{t-1}]{X_t}^2 \leq 2d L \log\left(1 + \frac{nL}{d\lnd}\right) =: Q_n.
    \end{equation*}

Now, with an     entirely analogous argument that is omitted as it would just repeat the previous argument,
we get than on $\cE$,
$\sum_{t=1}^n \one{t>\tau} B_t \le Q_n$ also holds.

Putting together these three bounds and using the concavity of the square root function, we have that on $(\cap_{t=1}^n \cGt) \cap \cEd\cap\{t>\tau\}$, with probability $1-\delta$,
\begin{align*}
    \wR \preccurlyeq \gnd C_d(\delta') \left( \sqrt{Q_n M \wR} + \sqrt{n \dot\mu_\star (Q_n + L \log 2/\delta)} \right).
\end{align*}
Now, note the above is of the form $x^2 \leq bx + c$ for $x^2 = \wR$ and $b,c \geq 0$. Applying \cref{claim:quadratic} and squaring, we conclude that on $(\cap_{t=1}^n \cGt) \cap \cEd\cap\{t>\tau\}$, with probability $1-\delta$,
\begin{equation*}
    \wR \preccurlyeq C_d(\delta') \gnd \sqrt{n \dot\mu_\star (Q_n + L \log 2/\delta)} + C_d^2(\delta') \gnd^2 M Q_n\,.
\end{equation*}
We then take $R(n) \leq \wR + \tau \Delta$, where $\Delta$ is the maximal per-step regret, and observe that, by \cref{claim:p-good} and \cref{claim:other-event-prob}, with our choice of $\delta'$, $\P((\cap_{t=1}^n \cGt) \cap \cEd) \geq 1-2\delta$. We then apply a union bound for a $1-3\delta$ probability bound, and simplify somewhat.
\end{proof}

\subsection{Bounding $\rphe$ (proof of \cref{lem:bound-rphe})}\label{apx:rphe}

For this proof, we will work with the function $J(\theta) = \max_{x\in \cX} x\tran \theta$. In the convex analysis literature, the function $J$ is called the \emph{support function} of $\cX$. It is well known that the support function $J$ is convex, and for $\cX \subset \R^d$ compact, any $\theta \in \R^d$, any $x \in \argmax_{y \in \cX} y\tran \theta$ is a subgradient of $J$ at $\theta$, but we include a proof for completeness.
\begin{claim}\label{claim:subgrads}
    Let $J$ be the support function of a compact set $\cX \subset \R^d$.
    Then for any $\theta \in \R^d$, any $x \in \argmax_{y \in \cX} y\tran\theta$ is a subgradient of $J$ at $\theta$.
\end{claim}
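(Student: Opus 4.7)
The plan is to unwind the definitions directly; this is essentially a one-line calculation. Recall that $x \in \R^d$ is a subgradient of a convex function $J$ at $\theta$ if $J(\theta') \geq J(\theta) + x\tran(\theta' - \theta)$ for every $\theta' \in \R^d$. So I would fix an arbitrary $\theta \in \R^d$, pick any $x \in \argmax_{y \in \cX} y\tran \theta$ (such an $x$ exists because $\cX$ is compact and $y \mapsto y\tran \theta$ is continuous), and verify this inequality.

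First I would note that because $x$ attains the maximum, $J(\theta) = x\tran \theta$. Then, for any $\theta' \in \R^d$, since $x \in \cX$, the definition of $J$ immediately gives $J(\theta') = \max_{y \in \cX} y\tran \theta' \geq x\tran \theta'$. Writing $x\tran \theta' = x\tran \theta + x\tran(\theta' - \theta) = J(\theta) + x\tran(\theta' - \theta)$ and combining yields $J(\theta') \geq J(\theta) + x\tran(\theta' - \theta)$, which is exactly the subgradient inequality.

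There is no real obstacle here: the only subtlety worth mentioning is the remark that $J$ is convex, which follows for free since $J$ is a pointwise supremum of the linear (hence convex) maps $\theta \mapsto y\tran \theta$ indexed by $y \in \cX$; this is what makes the subgradient notion meaningful in the first place. Compactness of $\cX$ is used only to guarantee that the argmax set is non-empty, so the statement is not vacuous.
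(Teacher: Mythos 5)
Your proof is correct and is essentially identical to the paper's: both use $J(\theta) = x\tran\theta$ for the maximiser $x$ and the trivial bound $x\tran\theta' \le \max_{y\in\cX} y\tran\theta' = J(\theta')$ to verify the subgradient inequality. The extra remarks on convexity of $J$ and on compactness guaranteeing a non-empty argmax are fine but not needed beyond what the paper already states.
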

\begin{proof}
    Note that for any $\theta' \in \R^d$,
    \begin{equation*}
        J(\theta) + x\tran(\theta'-\theta) = x\tran\theta' \leq \max_{y\in \cX} y\tran\theta' = J(\theta'),
    \end{equation*}
    which is the definition of $x$ being a subgradient of $J$ at $\theta$.
\end{proof}

\begin{proof}[Proof of \cref{lem:bound-rphe}]
    From definition of $\alpha$,
    \begin{equation*}
        \rphe = \mu(x_\star^\top \theta_\star) - \mu(X_t^\top\theta_t) = \alpha(x_\star^\top \theta_\star,X_t^\top\theta_t)(x_\star^\top \theta_\star - X_t^\top\theta_t) =\alpha(J(\theta_\star),J(\theta_t))(J(\theta_\star) - J(\theta_t)).
    \end{equation*}
    Assume from now on that $\cE \cap \cG_t\cap \{t>\tau\}$ holds.

    Using the convexity of $J$, that $X_t$ is a subgradient of $J$ at $\theta_t$, and that $x_\star$ is a subgradient of $J$ at $\theta_\star$,
    \begin{equation*}
        M|J(\theta_\star) - J(\theta_t)|
        \leq M\max\{|x_\star^\top (\theta_\star - \theta_t)|, |X_t^\top(\theta_\star - \theta_t)|\}
        \leq MD(\theta_t- \theta_\star) \leq MD(\theta_t- \hat\theta_{t-1}) + MD(\hat\theta_{t-1}- \theta_\star)\leq 1,
    \end{equation*}
    where the last inequality follows by \cref{claim:b-D-star} and \cref{claim:b-D-t}.
    Now, for $\alpha(J(\theta_\star),J(\theta_t))$, using self-concordance, that is \cref{claim:dot-mu-exp} 	together with the last display, bounding crudely, we have that $\alpha(J(\theta_\star),J(\theta_t))\leq 2 \dot\mu(J(\theta_\star))$, and also that $\alpha(J(\theta_\star),J(\theta_t))\leq 2 \dot\mu(J(\theta_t))$, and so
    \begin{equation}\label{eq:rphe-first-bound}
        \rphe \leq 2\sqrt{\dot\mu(x_\star\tran\theta_\star)} \cdot \sqrt{\dot\mu(X_t\tran\theta_t)} (J(\theta_\star) - J(\theta_t)).
    \end{equation}

    \textbf{The `and so' part above does not follow, since $J(\theta_\star) - J(\theta_t)$ is not known to be nonnegative. This is the error discussed on the first page. While the error looks at first glance like something that may be fixed with a local patch, it is not; see \citet{perneczky2026variance} for a correct argument.}

    We turn to bounding $\sqrt{\dot\mu(X_t\tran\theta_t)} (J(\theta_\star) - J(\theta_t))$.
    Let
    \[
    \ThetaOpt = \{\theta : J(\theta) \ge J(\theta_\star)\} \spaced{and}
     \Theta_{t-1} =\left\{\theta \in \R^d \colon \norm[\hH_{t-1}]{\theta - \hat\theta_{t-1}} \leq \tfrac{21}{20} C_d(\delta')\gnd\right\}\,,
     \]
    and observe that $\ThetaOpt$ matches that defined in \cref{def:ThetaOpt}. Now let $p_{t-1} = \P_{t-1}(\theta_t \in \ThetaOpt \cap \Theta_{t-1})$ and
    \begin{equation*}
        Q(\,\cdot\,) =
        \begin{cases}
            \P_{t-1}(\theta_t \in \cdot \,\cap\, \ThetaOpt \cap \Theta_{t-1}) / p_{t-1}\,, &  p_{t-1} > 0 \,; \\
            \text{an arbitrary probability measure}\,, & \text{otherwise}.
        \end{cases}
    \end{equation*}
	On $\cE\cap \{t>\tau\}$, we claim that
	\begin{align}
	p_{t-1}\ge p=1/200>0 \label{eq:ptlb}
	\end{align}
	hence we are in the first case of the above definition.
	Indeed,
    \begin{align*}
	\MoveEqLeft p_{t-1} = \P_{t-1}(\theta_t \in \ThetaOpt \cap \Theta_{t-1}) \\
	& =  \P_{t-1}(\{ \theta_t \in \ThetaOpt \} \cap  \{\theta_t \in \Theta_{t-1}\})  \\
	& \ge  \P_{t-1}(\{ \theta_t \in \ThetaOpt \} \cap  \cG_t \cap \{ t>\tau \} )
	\tag{because  $\{\theta_t \in \Theta_{t-1}\} \supseteq \cG_t\cap \{t>\tau\}$ by \cref{lem:event-E2}} \\
	& = \one{t>\tau}  \P_{t-1}(\{ \theta_t \in \ThetaOpt \} \cap  \cG_t ) \tag{because $\tau$ is $\F$-adapted}\\
	& =  \P_{t-1}(\{ \theta_t \in \ThetaOpt \} \cap  \cG_t ) \tag{because we are on $\cE \cap \{t>\tau\}$}\\
	& \ge  p=1/200 \,. \tag{by \cref{lem:event-E3}}
	\end{align*}

    Since we are on $\cG_t\cap \{t>\tau\}$, by \cref{lem:event-E2}, $\theta_t \in \Theta_{t-1}$, and so
    \begin{equation*}
        \sqrt{\dot\mu(J(\theta_t))} (J(\theta_\star) - J(\theta_t)) \leq \sqrt{\dot\mu(X_t\tran \theta_{-})} (J(\theta_\star) - J(\theta_{-})) \,,
    \end{equation*}
    where
    we let $\theta_{-}$ be a maximiser of $\sqrt{\dot\mu(X_t\tran \theta)} (J(\theta_\star) - J(\theta))$ over $\theta \in \Theta_{t-1}$.
    Now, observe that
    \begin{align*}
    \MoveEqLeft
        \sqrt{\dot\mu(X_t\tran \theta_-)} (J(\theta_\star) - J(\theta_{-}))
        \leq \int \sqrt{\dot\mu(X_t\tran \theta_-)}\, (J(\theta_+) - J(\theta_{-})) Q(d\theta_+)
        \tag{on $\cE \cap\{t>\tau\}$, by \cref{eq:ptlb}, $\supp Q \subset \ThetaOpt$ and for $\theta_+\in\ThetaOpt$, $J(\theta_+)\ge J(\theta_\star)$, $Q$ is a prob.\ m.}
        \\
        &= \frac{1}{p_{t-1}} \EC[\sqrt{\dot\mu(X_t\tran \theta_-)}\, (J(\theta_t) - J(\theta_{-}))\1{\theta_t \in \ThetaOpt \cap \Theta_{t-1}}] \tag{definition of $Q$}\\
        &\leq \frac{1}{p} \EC[\sqrt{\dot\mu(X_t\tran \theta_-)}\, |J(\theta_t) - J(\theta_{-})|\1{\theta_t \in \Theta_{t-1}}] \,.
        \tag{$p_{t-1}\ge p$, introducing $|\cdot|$ and dropping $\ThetaOpt$}
    \end{align*}
Now, observe that using that $\theta_-, \hat\theta_{t-1} \in \Theta_{t-1}$, the proof of \cref{claim:b-D-t} gives us that $\sqrt{\dot\mu(X_t\tran \theta_-)} \leq \sqrt{e \dot\mu(X_t\tran \hat\theta_{t-1})}$. Also,
    \begin{align*}
        |J(\theta_t) - J(\theta_-)|\, \1{\theta_t \in \Theta_{t-1}}
        &\leq |X_t\tran(\theta_t-\theta_-)|\,\1{\theta_t \in \Theta_{t-1}} \tag{\cref{claim:subgrads}} \\
        &\leq \norm[\hH_{t-1}^{-1}]{X_t} \norm[\hH_{t-1}]{\theta_t - \theta_-} \1{\theta_t \in \Theta_{t-1}} \tag{Cauchy-Schwarz}\\
        &\leq \frac{21}{10} C_d(\delta')\gnd \norm[\hH_{t-1}^{-1}]{X_t}\,. \tag{definition of $\Theta_{t-1}$ and \cref{lem:event-E2}}
    \end{align*}
    When combined with the inequalities developed thus far, this yields the statement of the claim.
\end{proof}

\subsection{Bounding $\rpred$ (proof of \cref{lem:bound-rpred})}\label{apx:rpred}

\begin{proof}[Proof of \cref{lem:bound-rpred}]
    Observe that
    \begin{equation*}
        \rpred
        =\mu(X_t^\top\theta_t) - \mu(X_t^\top \theta_\star)
        \leq |\mu(X_t^\top \theta_t) - \mu(X_t^\top \hat\theta_{t-1})| + |\mu(X_t^\top\hat\theta_{t-1}) - \mu(X_t^\top \theta_\star)|.
    \end{equation*}
    Now, by Taylor's theorem and self-concordance,
    \begin{align*}
        \vert \mu(X_t^\top \hat\theta_{t-1}) - \mu(X_t^\top \theta_\star)\vert
        &\leq (\dot\mu(X_t\tran \hat\theta_{t-1}) + |\ddot\mu(\xi)|D(\hat\theta_{t-1}- \theta_\star))|X_t\tran(\hat\theta_{t-1} - \theta_\star)|\\
        &\leq (\dot\mu(X_t\tran \hat\theta_{t-1}) + \dot\mu(\xi) M D(\hat\theta_{t-1}- \theta_\star)) |X_t\tran(\hat\theta_{t-1} - \theta_\star)| \numberthis \label{eq:mudiffb1}
    \end{align*}
    for some $\xi$ between $X_t\tran\hat\theta_{t-1}$ and $X_t\tran \theta_\star$.
    Recall that on $\cEd \cap \{t>\tau\}$ by \cref{claim:b-D-star},
    \begin{align}
	M D(\hat\theta_{t-1}- \theta_\star) \leq 1 \,.\label{eq:mdbone}
	\end{align}
	Hence,
    \begin{align*}
       \MoveEqLeft \dot\mu(\xi)M D(\hat\theta_{t-1}- \theta_\star)
        \leq
        \dot\mu(\xi) \\
        &\le
        \exp(M |\xi - X_t^\top\hat\theta_{t-1} | )\dot\mu(X_t^\top \hat\theta_{t-1})
        	\tag{\cref{claim:dot-mu-exp}}\\
        & \leq \exp(MD(\hat\theta_{t-1}- \theta_\star) )\dot\mu(X_t^\top \hat\theta_{t-1})
        \tag{$\xi$ is between $X_t\tran\theta_\star$ and $X_t\tran\hat\theta_{t-1}$, definition of $D(\cdot)$}\\
        & \leq e\dot\mu(X_t^\top \hat\theta_{t-1}) \,. \tag{by \cref{eq:mdbone}}
    \end{align*}
    which, together with \cref{eq:mudiffb1} gives
    \begin{align*}
	| \mu(X_t^\top \hat\theta_{t-1}) - \mu(X_t^\top \theta_\star)|
    &\le (e+1)\dot\mu(X_t\tran \hat\theta_{t-1})
	|X_t\tran(\hat\theta_{t-1} - \theta_\star)| \\
    &\leq (e+1)\dot\mu(X_t\tran \hat\theta_{t-1}) \norm[\hH_{t-1}]{\hat\theta_{t-1} - \theta_\star} \norm[\hH_{t-1}]{X_t} \tag{Cauchy-Schwarz}\\
    &\leq \sqrt{\frac{11}{10}}(e+1)\dot\mu(X_t\tran \hat\theta_{t-1}) \norm[H_{t-1}]{\hat\theta_{t-1} - \theta_\star} \norm[\hH_{t-1}]{X_t} \tag{by \cref{lem:event-E1}, $\hH_{t-1}\preceq \sqrt{\frac{11}{10}} H_{t-1}$} \\
    &\leq 5\gnd \dot\mu(X_t\tran\hat\theta_{t-1})\norm[\hH^{-1}_{t-1}]{X_t} \tag{other part of \cref{lem:event-E1}}\,,
	\end{align*}
    where in the last inequality we also bounded $\sqrt{\frac{11}{10}} \times \frac{21}{20} \times (e+1) \leq 5$.

    By the same sequence of arguments, this time using \cref{lem:event-E2} in place of \cref{lem:event-E1},
    \begin{equation*}
        | \mu(X_t^\top \theta_t) - \mu(X_t^\top \hat\theta_{t-1})| \leq 5C_d(\delta')\gnd \dot\mu(X_t\tran\hat\theta_{t-1}) \norm[\hH^{-1}_{t-1}]{X_t}.
    \end{equation*}
    Summing the two bounds per the initial decomposition of $\rpred$ of this proof gives the stated bound.
\end{proof}

\subsection{Bounding sum of derivatives (proof of \cref{lem:dot-mu-sum})}\label{apx:dot-mu-sum}

\begin{proof}
    By a first order Taylor expansion of $\dot\mu$,
    \begin{align*}
        \sum_{t=\tau+1}^n \dot\mu(X_t^\top \theta_\star)
        &= \sum_{t=\tau+1}^n \dot\mu(x_\star^\top \theta_\star) + \sum_{t=\tau+1}^n (X_t-x_\star)^\top \theta_\star \int_{v=0}^1 \ddot\mu\left(x_\star^\top \theta_\star + v(X_t-x_\star)^\top \theta_\star\right)dv  \\
        &\leq n\dot\mu(x_\star^\top \theta_\star) + \sum_{t=\tau+1}^n \left|  (X_t-x_\star)^\top \theta_\star \int_{v=0}^1 \ddot\mu\left(x_\star^\top \theta_\star + v(X_t-x_\star)^\top \theta_\star\right)dv \right| \\
        &\leq  n\dot\mu(x_\star^\top \theta_\star) + \sum_{t=\tau+1}^n (x_\star-X_t)^\top \theta_\star  \int_{v=0}^1 \left| \ddot\mu\left(x_\star^\top \theta_\star + v(X_t-x_\star)^\top \theta_\star\right) \right| dv
        \tag{$X_t^\top \theta_\star \leq x_\star^\top \theta_\star $} \\
        &\leq  n\dot\mu(x_\star^\top \theta_\star) + M\sum_{t=\tau+1}^n (x_\star-X_t)^\top \theta_\star \int_{v=0}^1 \dot\mu\left(x_\star^\top \theta_\star + v(X_t-x_\star)^\top \theta_\star\right)dv
        \tag{$|\ddot\mu|\leq M\dot\mu$} \\
        &=  n\dot\mu(x_\star^\top \theta_\star) + M\sum_{t=\tau+1}^n (\mu(x_\star^\top\theta_\star) - \mu(X_t^\top \theta_\star))
        \tag{fundemental theorem of calculus}\\
        &= n \dot\mu(x_\star^\top \theta_\star) + M \wR. \tag*{\qed}
    \end{align*}\renewcommand{\qedsymbol}{}
\end{proof}

\clearpage

\end{document}